\newcommand{\cP}{\mathcal{P}}
\newcommand{\cA}{\mathcal{A}}
\newcommand{\cC}{\mathcal{C}}
\newcommand{\cE}{\mathcal{E}}
\newcommand{\cG}{\mathcal{G}}
\newcommand{\cS}{\mathcal{S}}
\newcommand{\cs}{\mathcal{s}}
\newcommand{\cD}{\mathcal{D}}
\newcommand{\simucb}{Approx-Zooming }
\newcommand{\disc}{200}
\DeclareMathOperator*{\argmin}{argmin}
\DeclareMathOperator*{\argmax}{argmax}
\newcommand{\ovx}{\overline{x}}
\newcommand{\undx}{\underline{x}}
\newcommand{\ova}{\overline{a}}
\newcommand{\unda}{\underline{a}}
\newcommand{\Reals}{\mathbb{R}}
\newcommand{\IntegersP}{\mathbb{Z}_+}
\newcommand{\Ind}[1]{ \mathbb{I}\left(#1\right) }
\newcommand{\Var}{\text{Var}}
\newcommand{\Cov}{\text{Cov}}
\newcommand{\Prob}{\mathbb{P}}
\newcommand{\E}[1]{\mathbb{E}\left[#1\right]}
\newcommand{\conf}[2]{\sqrt{\tfrac{6\sigma^2 \ln(T)}{n_{#1}(#2)}}}
\newcommand{\tflag}{\tau_f}
\newcommand{\tcluster}{\tau_{cl}}
\newcommand{\comment}[1]{  \ifthenelse{\boolean{showeditcomments}}
{ \textcolor{red}{#1}} {}  }
\newcommand{\todo}[1]{  \ifthenelse{\boolean{showeditcomments}}
{ \textcolor{red}{TODO: #1}} {}  }
\theoremstyle{plain}
\newtheorem{allcnt}{AllCnt}[section]
\newtheorem{theorem}[allcnt]{Theorem}
\newtheorem{lemma}[allcnt]{Lemma}
\title{Nonparametric Contextual Bandits \\ in an Unknown Metric Space}
\author{%
  Nirandika Wanigasekara \\
	Computer Science \\
  National University of Singapore \\
  \texttt{nirandiw@comp.nus.edu.sg} \\
  \And
  Christina Lee Yu \\
  Operations Research and Information Engineering\\
  Cornell University\\
  \texttt{cleeyu@cornell.edu} \\
}
\begin{document}

\maketitle

\begin{abstract}
Consider a nonparametric contextual multi-arm bandit problem where each arm $a \in [K]$ is associated to a nonparametric reward function $f_a: [0,1] \to \mathbb{R}$ mapping from contexts to the expected reward. Suppose that there is a large set of arms, yet there is a simple but unknown structure amongst the arm reward functions, e.g. finite types or smooth with respect to an unknown metric space. We present a novel algorithm which learns data-driven similarities amongst the arms, in order to implement adaptive partitioning of the context-arm space for more efficient learning. We provide regret bounds along with simulations that highlight the algorithm's dependence on the local geometry of the reward functions.
\end{abstract}

\section{Introduction}

Contextual multi-arm bandits have been used to model the task of sequential decision making in which the rewards of different decisions must be learned over trial via trial-and-error. The decision maker receives reward for each of the arms (i.e. actions or options) she chooses across the time horizon $T$. In each trial $t$, the decision maker observes the context $x_t$, which represents the set of observable factors of the environment that could impact the performance of the action she chooses. The decision maker must select an action based on the context and all past observations. Upon choosing action $a \in [K]$, she observes a reward, which is assumed to be a stochastic observation of $f_a(x)$, the expected reward of action $a$ at context $x$. In each trial, she faces the dilemma of whether to choose an action in order to learn about its performance (i.e. exploration), or to choose an action that she believes will perform well as estimated from the limited previous data (i.e. exploitation).

Consider a setting when the number of actions is very large, e.g. there is a large number of users and products on an e-commerce platform such that fully exploring the entire space of possible recommendations is costly. It is often the case that there is additional structure amongst the large space of actions, which the algorithm could exploit to learn more efficiently. In real-world applications however, this additional structure is often unknown a priori and must be learned from the data, which itself could be costly as well. It becomes important to understand the tradeoff and costs of learning relationships amongst the arm from data over the course of the contextual bandit time horizon. We consider a stochastic nonparametric contextual bandit setting in which the algorithm is not given any information a priori about the relationship between the actions. The key question is: {\em Can an algorithm exploit hidden structure in a nonparametric contextual bandit problem with no a priori knowledge of the underlying metric?}



\paragraph{Contributions}

To our knowledge, we propose the first nonparametric contextual multi-arm bandit algorithm that incorporates latent arm similarities in a setting where no a priori information about the features or metric amongst the arms is given to the algorithm. The algorithm can learn more efficiently by sharing data across similar arms, but the tradeoff between the cost of estimating arm similarities must be carefully accounted for. Our algorithm builds upon Slivkin's Zooming algorithm \cite{slivkins2014contextual}, adaptively partitioning the context-arm space using pairwise arm similarities estimated from the data. The adaptive partitioning allows the algorithm to naturally adapt the precision of its estimates around regions of the context-arm space that are nearly optimal, enabling the algorithm to more efficiently allocate its observations to regions of high reward.

We provide upper bounds on the regret that show the algorithm's dependence on the local geometry of the reward functions. If we let $f^*(x) := \max_{a \in [K]} f_a(x)$ denote the optimal reward at context $x$, then the regret depends on how the mass of the set $\{(a,x): f^*(x) - f_a(x) \in (0, \delta]\}$ scales as $\delta$ goes to zero. This set represents the $\delta$-optimal region of the context-arm space except for the exactly optimal arms, i.e. the local measure of nearly optimal options centered around the optimal policy. The scaling of this set captures the notion of ``gap'' used in classical multi-arm bandit problems, but in the general contextual bandit setting with a large number of arms, it may be reasonable that the second optimal arm is very close in value to the optimal arm such that the gap is always very small. Instead the relevant quantity is the relative measure of arms that are $\delta$-optimal yet not optimal, i.e. have gap less than $\delta$. If the mass of such arms decreases linearly with respect to $\delta$, then we show that our algorithm achieves regret of $O(\sqrt{KT})$. 

An interesting property of our algorithm is that it is fully data-dependent and thus does not depend on the chosen representation of the arm space. The arm similarities (or distances) are measured from data collected by the algorithm itself, and thus approximates a notion of distance that is defined with respect to the reward functions $\{f_a\}_{a \in [K]}$. The algorithm would perform the same for any permutation of the arms. In contrast, consider existing algorithms which assume a given distance metric or kernel function which the reward function is assumed to be smooth with respect to. Those algorithms are sensitive to the metric or kernel given to it, which itself could be expensive to learn or approximate from data. Suppose that nature applied a measure preserving transformation to the arm metric space such that the function is still Lipschitz but has a significantly larger Lispchitz constant. For example, consider a periodic function that repeats across the arm metric space. The performance of existing algorithms would degrade with poorer arm feature representations, whereas the algorithm we propose would remain agnostic to such manipulations.

We provide simulations that compare our algorithm to oracle variants that have special knowledge of the arms and a naive benchmark that learns over each arm separately. Initially our algorithm has a high cost due to learning the similarities, but for settings with a large number of arms and a long time horizon, the learned similarities pay off and improve the algorithm's long run performance.



\paragraph{Related Work}

As there is a vast literature on multi-arm bandits, we specifically focus on literature related to the stochastic contextual bandit problem, with an emphasis on nonparametric models.
%
%
%
In contextual bandits, in each trial the learner first observers a feature vector, refer to as ``context'', associated with each arm. The optimal reward is measured with respect to the context revealed at the beginning of each trial. One approach is to directly optimize and learn over a given space of policies rather than learn the reward functions \cite{exp4p, dudik2011efficient, krishnamurthy2019contextual, LangfordZhang08}. These methods do not require strict assumptions on the reward functions but instead depend on the complexity or size of the model class.

We focus on the alternative approach of approximating reward functions, which then depend on assumptions about the structure of the reward function. A common assumption to make is that the reward function is linear with respect to the observed context vector \cite{linucb, agrawal2013thompson, bastani2015online, lowrankbandits19}, such that the reward estimation task reduces to learning coefficient vectors of the reward functions. \cite{bastani2015online} incorporates sparsity assumptions for the high dimensional covariate setting, and \cite{lowrankbandits19} imposes low rank assumptions on the coefficient vectors to reduce the effective dimension.


In the linear bandit setting with $K$ arms but only $\Theta$ arm types for $\Theta \ll K$, Gentile et al proposed an adaptive clustering algorithm which maintains an undirected graph between the arms and progressively erase edges when the estimated estimated coefficient vectors of the pair of arms is above a set threshold \cite{gentile2014online}. Two arms of the same type are assumed to have the same coefficient vector. The threshold is chosen as a function of the minimum separation condition between coefficients vectors of different types, such that eventually the graph converges to $\Theta$ connected components corresponding to the $\Theta$ types.
Collaborative filtering bandits~\cite{li2016collaborative} applies the same adaptive clustering concept to the recommendation system setting where both users and item types must be learned.

In the nonparametric setting, instead of fixing a parametric model class such as linear, most work imposes smoothness conditions on the reward functions, and subsequently use nonparametric estimators such as histogram binning, $k$ nearest neighbor, or kernel methods to learn the reward functions \cite{yang2002randomized, rigollet2010nonparametric, perchet2013multi, qian2016kernel, guan2018nonparametric}. As the contexts are observed, the estimator is applied to learn the reward of each arm separately, essentially assuming the number of arms is not too large.

The setting of continnum arm bandits has been introduced to approximate setting with very large action spaces. As there are infinitely many arms, it is common to impose smoothness with respect to some metric amongst the arms \cite{lu2009showing, slivkins2014contextual,hazan2007online,kleinberg2005nearly,kleinberg2008multi}. As the joint context-arm metric is known, these methods apply various smoothing techniques implemented via averaging datapoints with respect to a partitioning of the context-arm space, refining the smoothing parameter as more data is collected. \cite{guan2018nonparametric} uses a $k$ nearest neighbor estimator using the joint context-arm metric.
The contextual zooming algorithm from Slivkins et al was a key inspiration for our proposed algorithm; it uses the given context-arm metric to \emph{adaptively} partition the context-arm product space \cite{slivkins2014contextual}. This enables the algorithm to efficiently allocate more observations to regions of the context-arm space that are near optimal. When $T$ is the time horizon and $d$ is the covering dimension of the context-arm product space, the regret of the contextual zooming algorithm is bounded above by $O(T^{1-1/(2+d_c)})(\log T)$.


For settings with large but finite number of arms, there are nonparametric models which assume different types information is known about the relationship amongst the arms. Gaussian process bandits use a known covariance matrix to fit a Gaussian process over the joint context-arm space \cite{krause2011contextual}. Taxonomy MAB assumes that similarity structure amongst the arm is given in terms of a hierarchical tree rather than  metric \cite{slivkins2011multi}. Deshmukh et al assume that the kernel matrix between pairs of arms is known, and they subsequently use kernel methods to estimate the reward functions. Cesa-Bianchi et al assumes that a graph reflecting arm similarities is given to the algorithm, and their algorithm subsequently uses the Laplacian matrix of the given graph to regularize their estimates of the reward functions \cite{deshmukh2017multi}. Wu et al assumes an influence matrix amongst arms is known and used to share datapoints among connected arms in the estimation \cite{wu2016contextual}. A limitation of these approaches is that they assumes similarity information is provided to the algorithm either as a metric, kernel, or via a graph structure. In real-world applications, this similarity information is often not readily available and must be itself learned from the data.


\section{Problem Statement}
\label{sec: studythree problem definition}
Assume that the context at each trial $t \in [T]$ is sampled independently and uniformly over the unit interval, $x_t \sim U(0,1)$, and revealed to the algorithm. Assume there are $K$ arms, or options, that the algorithm can choose amongst at each trial $t$. If the algorithm chooses arm $a_t$ at trial $t$, it observes and receives a reward $\pi_t \in \Reals$ according to 
$\pi_t = f_{a_t}(x_t) + \epsilon_t$,
where $\epsilon_t \sim N(0, \sigma^2)$ is an i.i.d Gaussian noise term with mean $0$ and variance $\sigma^2$, and $f_{a}(x)$ denotes the expected reward for arm $a$ as a function of the context $x$. We assume that each arm reward function $f_a: [0,1] \to [0,1]$ is $L$-Lipschitz, i.e. for all $x,x' \in [0,1]^2$, $|f_a(x) - f_a(x')| \leq L |x - x'|$.

The goal of our problem setting is to maximize the total expected payoff $\sum_{t=1}^T \pi_t$ over the time horizon $T$. We provide upper bounds on the expected contextual regret,
\begin{equation}
\label{eq: simucb regret}
\E{R(T)} := \E{\textstyle\sum_{t=1}^T \left(f^*(x_t) - f_{a_t}(x_t)\right)} ~~\text{ where }~~ f^*(x) := \textstyle\max_{a \in [K]} f_a(x).
\end{equation}

We would like to understand whether an algorithm can efficiently exploit latent structure amongst the arm reward functions if it exists. Although the number of arms may be large, they could be drawn from a smaller set of finite arm types. Alternatively the arms could be draw from a continuum metric space such that the reward function is jointly Lipschitz over the context-arm space; however our algorithm would not have access to or knowledge of the underlying representation in the metric space.


\section{Algorithm Intuition}


We begin by describing an oracle algorithm that is given special knowledge of the relationship between the arms in the form of the context-arm metric. Assume that the arms are embedded into a metric space, and the function is Lipschitz with respect to that metric. The contextual zooming algorithm proposed by Slivkins in \cite{slivkins2014contextual} reduces the large continuum arm set to the effective dimension of the underlying metric space. Essentially, their model assumes that each arm is associated to some known parameter $\theta_a \in [0,1]$, and that the expected joint payoff function is $1$-Lipschitz continuous in the context-arm product space with respect to a known metric $\cD$, such that for all context-arm pairs $(x,a)$ and $(x',a')$, 
$|f(x,\theta_a)-f(x',\theta_{a'})| \leq \cD((x,\theta_a),(x',\theta_{a'}))$.

The key idea of Slivkin's zooming algorithm is to use adaptive discretization to encourage the algorithm to obtain more refined estimates in the nearly optimal regions of the space while allowing coarse estimates in suboptimal regions of the context-arm space. The algorithm maintains a partition of the context-arm space consisting of ``balls'', or sets, of various sizes. The algorithm estimates the reward function within a ball by averaging observed samples that lie within this ball. An upper confidence bound is obtained by accounting for the bias (proportional to the ``diameter'' of the ball due to Lipschitzness) and the variance due to averaging noisy observations within the ball. When a context arrives, the UCB rule is used to select a ball in the partition, and subsequently an arm in that ball. When the number of observations in a ball increases beyond the threshold such that the variance of the estimate is less than the bias, then the algorithm splits the ball into smaller balls, refining the partition locally in this region of the context-arm space.

The main intuition of the analysis is that the UCB selection rule guarantees (with high probability) that when a ball with diameter $\Delta$ is selected, the regret incurred by selecting this ball is bounded above by order $\Delta$. As a result, this algorithm is able to exploit the arm similarities via the joint metric in order to aggregate samples of similar arms such that the estimates will converge more quickly. Subsequently the algorithm refines the estimates and subpartitions the space as needed for regions that are near optimal and thus require tighter estimates in order to allow the algorithm to narrow in on the optimal arm. The limitation of the algorithm is that it depends crucially on the given knowledge of the context-arm joint metric. Nature could apply a measure preserving transformation to the arm metric space such that the joint function has a significantly higher Lipschitz constant. This representation would incur a worse performance by the zooming algorithm, indicating that the algorithm is critically dependent on the choice of representation and metric.

\paragraph{Arm Similarity Estimation}
In our model, we are not given any metric or features of the arm, thus the key question is whether it is still possible for an algorithm to exploit good structure amongst the arms if it exists. We propose an algorithm inspired by Slivkin's contextual zooming algorithm, which also adaptively partitions the context-arm space with the goal to allow for coarse estimates that converge quickly initially, and subsequently selectively refine the partition and the corresponding estimates in regions of the context-arm space that are nearly optimal. The key challenge to deal with is determining how to subpartition amongst the arms when we do not know any underlying metric or feature space. Our algorithm estimates a similarity (or distance) from the collected data itself, and uses the data-dependent distances to cluster/subpartition amongst the arms. This concept is similar to clustering bandits which also learns data-driven similarities, except that the clustering bandits works assume linear reward functions and finite types, whereas our model and algorithm is more general for nonparametric functions and arms drawn from an underlying continuous space \cite{gentile2014online}.

We want our algorithm to partition the context-arm product space into balls, or subsets, within which the maximum diameter is bounded, where diameter of a subset is defined as $diam(\cS) := \textstyle\sup_{(x,a) \in \cS} f_a(x) - \textstyle\inf_{(x',a') \in \cS} f_{a'}(x')$. 
%
We consider balls $\rho \subseteq [0,1] \times [K]$ which have the form of $[c_0(\rho), c_1(\rho)] \times \cA(\rho)$, where $c_0(\rho) \in [0,1]$ denotes the start of the context interval, $c_1(\rho) \in [c_0(\rho), 1]$ denotes the end of the context interval, and $\cA(\rho) \subseteq [K]$ denotes the subset of arms. We use $\Delta(\rho) := c_1(\rho) - c_0(\rho)$ to denote the ``width'' of the context interval pertaining to the ball $\rho$.

In order to figure out which set of arms to include in a ``ball'' such that the diameter is bounded, we ideally would like to measure the $L_{\infty}$ distance with respect to the context interval of the ball,
$\max_{x \in [c_0(\rho),c_1(\rho)]} |f_a(x) - f_{a'}(x)|$.
As the functions are assumed to be Lipschitz with respect to the context space, a bound on the $L_2$ distance also implies a bound on the $L_{\infty}$. Our algorithm approximates the $L_2$ distance, defined with respect to an interval $[u,v]$ according to 
\begin{align}
\cD_u^v(a, a') := \sqrt{ \tfrac{1}{\disc}\textstyle\sum_{i \in [\disc]} \Big( f_a(z_i(u,v))-f_{a'}(z_i(u,v)) \Big)^2} \label{eq: arm space true similarity}
\end{align}
where $z_i(u,v) = \left(1-\frac{i}{\disc}\right) u + \frac{i}{\disc} v$.
This is a finite sum approximation to the integrated $L_2$ distance between $f_a$ and $f_{a'}$ within the interval $[u,v]$.

Our algorithm uses the data collected for an arm in order to approximate the reward functions using a $k$ nearest neighbor estimator, and subsequently uses the estimated reward functions to approximate $\cD_u^v$. These approximate distances are then used to cluster the arms when subpartitioning. With high probability, we show that the diameter of the constructed balls is bounded by $2 L \Delta(\rho)$. Our algorithm collects extra samples to compute these distances, and a key part of the analysis is to understand when the improvement in the learning rate of the reward functions is sufficient enough to offset the cost of estimating arm distances.

\section{Algorithm Statement}
\label{sec: \simucb algorithm}



Let $n_t(\rho) = \textstyle\sum_{s=1}^{t-1} \Ind{\rho_s = \rho}$ denote the number of times $\rho$ has been selected before trial $t$. Let $\mu_t(\rho) = \tfrac{1}{n_t(\rho)} \textstyle\sum_{s=1}^{t-1} \Ind{\rho_s =\rho} \pi_s$ denote the average observed reward from $\rho$ before trial $t$. Define
\begin{align}
UCB_t(\rho) = \mu_t(\rho) + 2L \Delta(\rho) + \sqrt{6\sigma^2 \ln(T)/n_t(\rho)},
\end{align}
which gives an upper confidence bound for the maximum reward achievable by any context-arm pair in the ball $\rho$.
The algorithm maintains two sets of balls, $\cP$ and $\cP^*$, such that $\cP \cup \cP^*$ is a partition of the context-arm space, i.e. all balls are disjoint and the union cover the entire space. We refer to balls in $\cP^*$ as flagged. They are given ultimate priority in the algorithm, until sufficient samples are collected to further subpartition this ball via clustering. We refer to balls in $\cP$ as ``active'', within which priority is given to balls with higher upper confidence bound (UCB).

\paragraph{Ball-Arm Selection Rule} In a given trial $t$, when the context $x_t$ arrives, the algorithm identifies the flagged balls $\rho \in \cP^*$ which contain context $x_t$, i.e. $x_t \in [c_0(\rho), c_1(\rho)]$, and gives priority amongst them to balls with larger width $\Delta(\rho)$,
\[\rho_t = \textstyle\argmax_{\rho \in \cP^*} \Delta(\rho) \Ind{x_t \in [c_0(\rho,c_1(\rho)]}.\]
If there are no flagged balls in $\cP^*$ which contain $x_t$, then the algorithm selects an active ball $\rho \in \cP$ containing $x_t$, and gives priority to the ball with the highest upper confidence bound $UCB_t(\rho)$, 
\begin{equation}
\label{eq: simucb selection strategy}
\rho_t = \textstyle\argmax_{\rho \in \cP} UCB_t(\rho) \Ind{x_t \in [c_0(\rho), c_1(\rho)]}.
\end{equation}
When a ball $\rho_t$ is chosen, the algorithm plays an arm $a_t \in \cA_{\rho_t}$ via a round robin ordering. The algorithm observes a noisy reward $\pi_t$ for arm $a_t$ and updates $n_t(\rho)$, $\mu_t(\rho)$, and $UCB_t(\rho)$ accordingly.

By grouping the context-arm pairs into balls, the algorithm aggregates the observed rewards within a ball to trade-off between bias and variance. For any given trial, the algorithm reduces the decision problem from selecting amongst a large number of arms to selecting amongst a smaller set of balls, which each consist of a subset of arms. 
Whenever the ball is subpartitioned, the width of the context interval is halved, such that balls never repeat, and are always strictly nested within a hierarchy. Furthermore, the fact that the algorithm gives priority to flagged balls with larger context widths implies that the data collected in the ``flagged'' phase of every ball will be uniformly distributed over context width of that ball.

\paragraph{Flagging Rule} At the beginning of the algorithm, the entire context-arm space is flagged as a single large ball to be subpartitioned, i.e. $\cP^* =\{([0,1] \times [K]) \}$ and $\cP = \emptyset$. In subsequent rounds, we flag a ball $\rho \in \cP$ whenever it satisfies the condition 
$n_t(\rho_t) > 6 \sigma^2 \ln(T) / L^2 \Delta^2(\rho_t)$. Upon being flagged, $\rho$ is removedfrom $\cP$ and added to $\cP^*$.
Let stopping time $\tflag(\rho)$ denote the trial $t$ that ball $\rho$ is flagged. Intuitively, the threshold is chosen at a point where the confidence radius, i.e. natural variation in the estimates due to the additive Gaussian observation error, is on the order of the diameter of the ball. As a result, further collecting samples does not improve the overall UCB because the diameter of the ball will dominate the expression.

\paragraph{Sub-Partitioning via Clustering} Recall that flagged balls in $\cP^*$ are always given priority over active balls in $\cP$. The observations collected in the flagged phase are used to estimate distances, or similarities between the arms for the purpose of subpartitioning the ball into smaller balls. 
In particular, the algorithm splits the context space $[c_0(\rho), c_1(\rho)]$ into 64 evenly sized intervals and waits until it collects at least $k$ samples within each of the 64 intervals for each of the arms $a \in \cA(\rho)$, where $k$ is chosen according to
$k = 5431 \sigma^2 \ln(T |\cA(\rho)|)/(L^2 \Delta^2(\rho))$.
This condition is mathematically stated as $\prod_{a \in \cA(\rho)}\textsc{SuffData}(a) == 1$ where 
\[\textsc{SuffData}(a) := \textstyle\prod_{i=1}^{64} \Ind{\textstyle\sum_{s > \tflag(\rho)} \Ind{\rho_s = \rho, a_s = a} \Ind{x_s \in [w_{i-1},w_i]} \geq k},\]
for $w_i = c_0(\rho) + i \Delta(\rho)/64$.
When this sufficient data condition is satisfied, the algorithm uses the observations collected in the flagged phase to compute pairwise arm distances approximating \eqref{eq: arm space true similarity}. Let $\tcluster(\rho)$ denote the trial in which the sufficient data condition is satisfied and $\rho$ is subpartitioned.

The \textsc{SubPartition} subroutine estimates the reward functions via a $k$-nearest neighbor estimator,
\begin{align}
\hat{f}_a(x) = \tfrac{1}{k}\textstyle\sum_{s=\tflag(\rho)+1}^{\tcluster(\rho)} \Ind{\rho_s = \rho, a_s = a} \Ind{x_s \in \text{ k-NN}} \pi_s, \label{eq: simucb empirical reward}
\end{align}
where $x_s$ is a $k$ nearest neighbor of $x$ if
$\sum_{\ell=\tflag(\rho)+1}^{\tcluster(\rho)} \Ind{\rho_{\ell} = \rho, a_{\ell} = a} \Ind{|x_{\ell} - x| \leq |x_s - x|} \leq k$.

Given the estimated functions $\{\hat{f}_a\}_{a \in \cA(\rho)}$ and a pair of arms $a, a' \in \cA(\rho)$, we compute $\hat{\cD}_u^v(a, a')$ for intervals $[u,v] = [c_0(\rho), (c_0(\rho) + c_1(\rho))/2]$ and $[u,v] = [(c_0(\rho) + c_1(\rho))/2, c_1(\rho)]$ according to 
\begin{align}
\hat{\cD}_u^v(a, a') := \sqrt{ \tfrac{1}{\disc}\textstyle\sum_{i \in [\disc]} \Big( \hat{f}_a(z_i(u,v))-\hat{f}_{a'}(z_i(u,v)) \Big)^2 - \tfrac{2\sigma^2}{k}} 
\end{align}
where $z_i(u,v) = \left(1-\frac{i}{\disc}\right) r + \frac{i}{\disc} s$ and the term $\frac{2\sigma^2}{k}$ accounts for bias due to the noise. 

We use the computed distances $\hat{\cD}_u^v(a,a')$ to subpartition $\rho$ by clustering the arms for each half of the context interval separately. For an arbitray ordering of the arms, we test if the next arm has distance less than $3L (v-u)/16$ to any of the existing cluster centers. If so, we assign it to the cluster associated to the closest cluster center. Otherwise, we create a new cluster and assign this arm to be the cluster center. This results in a clustering in which all pairs of cluster centers are guaranteed to be distance $3 L (v-u)/16$ apart, and all members of a cluster must be within distance $3 L (v-u)/16$ to the cluster center. These distances are measured with respect to the data dependent estimates $\hat{\cD}_u^v(a,a')$. In our analysis, we show that with high probability $\hat{\cD}_u^v(a,a') \approx \cD_u^v(a,a')$.

Once the clusters are created, then $\rho$ is unflagged (removed from $\cP^*$) and new balls corresponding to each of the clusters for each half of the context interval are added to the active set $\cP$. See the appendix for a pseudocode description of the algorithm.

\section{Performance Guarantees}

We present a general bound on the regret expressed as a function of a quantity relating to the local geometry of the reward function nearby the optimal policy. 
Let us denote $w_i(\ell) = [(\ell-1) 2^{-i}, \ell 2^{-i}]$, $\kappa(x) = f^*(x) - \max_{a \in [K]} f_a(x) ~\Ind{f_a(x) \neq f^*(x)}$, and
\[M_i = \textstyle\sum_{\ell=1}^{2^{i}} \Ind{\min_{x \in w_i(\ell)} \kappa(x) \leq 20 ~L ~2^{-i}} \textstyle\sum_{a \in [K]} \Ind{(f^*(2^{-i} \ell) - f_a(2^{-i} \ell)) \leq 22 ~ L ~2^{-i}}.\]

\begin{theorem} \label{thm:general}
The expected contextual regret of \simucb is bounded above by 
\[\E{R(T)} = O\Big(\sigma^2 L^{-2} K \ln(TK) + \min_{i_{\max}\in\IntegersP} \big(L T 2^{-i_{\max}} + \sum_{i=1}^{i_{max}-1} \sigma^2 L^{-1} M_i 2^i \ln(TK) \big)\Big).\]
\end{theorem}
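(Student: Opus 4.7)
The proof naturally decomposes the regret by the ball in which each trial was played. The plan is (i) set up a single high-probability ``clean'' event $\cG$ on which all mean and distance estimates are accurate, (ii) bound the regret that each ball contributes during its active and flagged phases, (iii) count, for each level $i$ of the hierarchy, how many balls survive long enough to contribute, and (iv) truncate the hierarchy at level $i_{\max}$ and use Lipschitzness to absorb the contribution of all deeper balls into the single term $L T 2^{-i_{\max}}$.

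For (i), I would define $\cG$ such that on it (a) every running mean satisfies $|\mu_t(\rho)-\bar f(\rho)|\le \sqrt{6\sigma^2\ln(T)/n_t(\rho)}$ by a standard sub-Gaussian tail plus union bound over the $O(TK)$ relevant (ball, time) pairs, and (b) every $k$-nearest-neighbor estimate $\hat f_a$ computed in the flagged phase of a ball $\rho$ satisfies a uniform-in-$x$ error bound of order $L\Delta(\rho)+\sigma\sqrt{\ln(TK)/k}$ on the relevant context interval, which, combined with the $2\sigma^2/k$ noise correction inside $\hat\cD$, forces $|\hat\cD_u^v(a,a')-\cD_u^v(a,a')|\le \tfrac{1}{16}L(v-u)$ at the prescribed $k$. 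Union bounds give $\Prob(\cG^c)=O(1/T)$, contributing only $O(1)$ to the expected regret.

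Conditioning on $\cG$, three structural lemmas drive the bound. The \emph{diameter lemma} states that any ball $\rho$ at level $i$ (so $\Delta(\rho)=2^{-i}$) constructed by the algorithm has reward-diameter at most $2L\cdot 2^{-i}$; the proof combines the clustering threshold $3L(v-u)/16$, the $\hat\cD\approx\cD$ estimate from (b), and Lipschitzness to upgrade $L^2$-closeness on a $\disc$-point grid to $L^\infty$-closeness across the context interval. The \emph{UCB lemma} then shows that whenever a ball $\rho$ at level $i$ is selected in its active phase for context $x_t$, the instantaneous regret is $O(L\cdot 2^{-i})$, and the ball is flagged after at most $O(\sigma^2 L^{-2} 2^{2i}\ln T)$ active selections by the flagging rule. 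The \emph{near-optimality lemma} says that if a ball at level $i$ with context interval $w_i(\ell)$ is ever selected and $\min_{x\in w_i(\ell)}\kappa(x)>20L\cdot 2^{-i}$, then every arm in $\cA(\rho)$ is actually optimal throughout $w_i(\ell)$ and hence contributes zero regret; otherwise, Lipschitzness migrates the near-optimal arms at any $x\in w_i(\ell)$ to $22L\cdot 2^{-i}$-optimal arms at the endpoint $2^{-i}\ell$, so every arm of every ``contributing'' cluster at $w_i(\ell)$ is counted by the inner sum of $M_i$.

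Combining the lemmas, each ball $\rho$ at level $i\ge 1$ incurs $O(L\cdot 2^{-i})$ regret per selection, over $O(\sigma^2 L^{-2} 2^{2i}\ln T)$ active selections and $O(|\cA(\rho)|\cdot \sigma^2 L^{-2} 2^{2i}\ln(TK))$ flagged selections (the latter from demanding $k$ samples per arm in each of the $64$ context sub-intervals with $k=5431\sigma^2\ln(T|\cA(\rho)|)/(L^2\Delta^2(\rho))$), giving a per-ball regret of $O(|\cA(\rho)|\sigma^2 L^{-1} 2^i\ln(TK))$. Summing over contributing balls at level $i$ and invoking the near-optimality lemma yields $\sum_\rho |\cA(\rho)|\le M_i$, so level $i$ contributes $O(\sigma^2 L^{-1} M_i 2^i\ln(TK))$. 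The root ball $[0,1]\times[K]$ is handled separately: its $64Kk$ flagged samples each incur regret at most $1$ since $f_a\in[0,1]$, yielding $O(\sigma^2 L^{-2} K\ln(TK))$. All regret from levels $\ge i_{\max}$ is bounded by $LT\cdot 2^{-i_{\max}}$ via the diameter lemma applied per trial, and minimizing over $i_{\max}$ gives the claim. The principal obstacle will be executing the diameter lemma cleanly: one must propagate the $k$-NN estimation error through the $\disc$-point quadrature defining $\hat\cD$ and through the noise correction $2\sigma^2/k$, verifying that the prescribed constant $5431$ in $k$ together with the clustering radius $3L(v-u)/16$ exactly certifies the $2L\Delta(\rho')$ diameter bound for the children, because everything downstream---the UCB guarantee, the near-optimality lemma, and the $M_i$ counting---hinges on this bound being preserved across generations.
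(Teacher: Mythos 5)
Your proposal follows the paper's proof essentially verbatim: a high-probability ``good event'' for both the running means and the estimated distances, a diameter lemma ($diam(\rho)\le 2L\Delta(\rho)$), a bound on active-phase selections from the UCB and flagging rules, a bound on flagged-phase selections via coupon collecting, the near-optimality (gap) lemma used to discard context intervals where $\kappa$ exceeds $20L2^{-i}$ and to count surviving arms by $M_i$, and truncation at level $i_{\max}$. One slip to be aware of: the bound on regret from levels $\ge i_{\max}$ (and, more generally, the $O(L2^{-i})$ per-trial bound in the flagged phase) does not follow from the diameter lemma --- a ball can have tiny diameter and yet be far from optimal --- it follows from the gap/near-optimality lemma, which says that any ball that was ever flagged, or is a child of a flagged ball, satisfies $\max_{(x,a)\in\rho}(f^*(x)-f_a(x))\le 20L\Delta(\rho)$ because otherwise the UCB rule would have stopped selecting it before it reached the flagging threshold; the paper proves exactly this (Lemma~\ref{lemma:gap_UB}) and invokes it, not the diameter lemma, for the $20L2^{-i_{\max}}T$ term.
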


The analysis relies on showing that the instantaneous regret incurred by choosing a ball with context width $\Delta$ is bounded above by $O(L \Delta)$. The first term in the regret is due to the very first intitial clustering phase. The second term $L ~T ~2^{-i_{\max}}$ bounds the regret incurred by all balls with context width at most $2^{-i_{\max}}$. The terms in the summation bound the regret incurred by balls with context width equal to $2^{-i}$. The function $\kappa(x)$ represents the lowest regret achieved by the second-most optimal arm, which lower bounds the suboptimality gap. In alignment with our intuition from classical MAB, when the suboptimality gap is large, the algorithm is able to more quickly converge to the optimal arm at context $x$. When we bound the regret incurred by all balls with context width $2^{-i}$, we can thus remove subintervals of the context for which $\kappa(x)$ is large as the algorithm will have already converged to the optimal arm. This is reflected in the first indicator function within the expression $M_i$. Once restricted to context subintervals where the suboptimality gap is not too large, the expression $\sum_{a \in [K]} \Ind{(f^*(2^{-i} \ell) - f_a(2^{-i} \ell)) \leq 22 ~ L ~2^{-i}}$ counts the number of arms for which the suboptimality gap is at most $22~L~2^{-i}$; arms for which the suboptimality gap is larger will have already been deemed suboptimal. As the specific bounds on $M_i$ depend on the model and local geometry amongst the arms, we provide bounds for two concrete examples to give more intuition.

\paragraph{Finite Types} Suppose that the reward functions for the $K$ arms, $\{f_a\}_{a \in [K]}$ only takes $\Theta$ different values. Essentially, this implies that there are $\Theta$ different types of arms, but we don't know the arm types a priori. Within each type, the reward function is exactly the same. Let us define 
\[\mu_{\kappa}(z) := \mu(\{x \in [0,1] ~s.t.~ \kappa(x) \leq z\})\]
where $\mu$ is the Lebesgue measure. Then we can show that $M_i \leq 2^{i} K \mu_{\kappa}(22 ~L~ 2^{-i}) $. The regret is bounded by the local measure function $\mu_{\kappa}$.
In the finite types setting, the optimal policy corresponds to partitioning the context space $[0,1]$ into a set of intervals, $\cS^*$, such that across each interval $\cs \in \cS^*$, the optimal policy does not change. Let us consider the setting that $\kappa(x)$ decreases linearly fast nearby the points where the optimal policy changes, so that for some constant $L'$, $\mu_{\kappa}(22 ~ L ~2^{-i}) \leq 22 ~ |\cS^*| ~ L ~2^{-i} / L'$.
By plugging the bound on $M_i$ into the main theorem and choosing $i_{\max} =  \log(L' L T / 22 \sigma^2 |\cS^*| K \ln(T K))/2$, it follows that
\begin{align}
\E{R(T)} &\leq O\left(\sigma^2 L^{-2} K \ln(TK) + \sqrt{\sigma^2 |\cS^*| L L'^{-1} T K \ln(T K)} \right)
\end{align}

\paragraph{Lipschitz with respect to continuous arm metric space}  Suppose that each arm $a$ is associated to a latent feature $\theta_a \in [0,1]$, and the expected reward function $f_a(x) = g(x, \theta_a)$, where $g:[0,1] \times [0,1] \to [0,1]$ is a $L$-Lipschitz function with respect to both the contexts and the arm latent features such that $|g(x,\theta) - g(x',\theta')| \leq L(|x - x'| + |\theta - \theta'|)$.
If we assume that the arm latent features are uniformly spread out, $\{\theta_a\} = \{i/K\}_{i \in [K]}$, then 
\begin{align}
M_i &\leq \textstyle\sum_{j \in [K]} \textstyle\sum_{\ell \in [2^{i}]} \Ind{(f^*(2^{-i} \ell) - g(2^{-i} \ell, \tfrac{j}{K})) \leq 22 ~L ~2^{-i}},
\end{align}
which is a discrete approximation to the area of the context-arm space for which the suboptimality gap is at most $22 ~L ~2^{-i}$. We can visualize $\sum_{\ell=1}^{2^{i}} M_i(\ell)$ by considering the contour plot of $f^*(x) - g(x,\theta)$, and counting how many grid points $\{(2^{-i} \ell, j/K)\}_{\ell \in [2^i], j \in [K]}$ are lower than $22 L 2^{-i}$. For large $i$ and $K$, this is approximately $2^i K \mu(\{(x,\theta): g(x,\theta) - f^*(x) \geq -22 L 2^{-i}\})$, where $\mu$ is the Lebesgue measure. The curve at the lowest level of the contour plot corresponds to the set $\{(x,\theta) ~s.t.~ g(x,\theta) - f^*(x) = 0\}$,  which contains for each context the set of arm features that optimize the expected reward. The final regret depends on the local measure of the joint reward function. 

As an example, if we consider the reward function $g(x,\theta) = 1 - L |x-\theta|$ for some $L \in (0,1)$, we can show that $M_i \leq 44 K$, i.e. it is bounded by a constant with respect to $i$. Therefore by plugging into the main theorem and choosing $i_{\max} = \log\left(20 L^2 T / \sigma^2 K \ln(T K) \right) / 2$ it follows that
\begin{align}
\E{R(T)} &\leq O\left(\sigma^2 L^{-2} K \ln(TK) + \sqrt{\sigma^2 K T \ln(T K)} \right).
\end{align}
\section{Simulation}
We test our algorithm on a model with 50, 100, 200 arms and a context space of $[0,1]$. Each arm $a$ corresponds to a parameter $\theta_a$ uniformly spaced out within $[0,1]$. The expected reward for arm $a$ and context $x$ is 
\[f_a(x) := g(x,\theta_a) = 1 - \big| x - 4 \textstyle\min_{z \in \{0,0.5,1\}}|\theta_a-z|\big|.\]
This function is periodic with respect to $\theta$, and can be depicted as a zigzag. Our distance estimate $\hat{\cD}_u^v(a,a')$ approximates $\cD_u^v(a,a')$, which is defined with respect $f_a$ and $f_{a'}$ directly and does not depend on $\theta_a$. Consider a measure preserving transformation that maps $\theta_a$ to $\phi_a = 4 \min_{z \in \{0,0.5,1\}}|\theta_a-z|$, such that the reward function is equivalently described by $f_a(x) = 1 - |x-\phi_a|$. An algorithm which partitions with respect to $\cD_u^v(a,a')$ would be agnostic to such a transformation, as opposed to an algorithm which depends on a metric defined with respect the arm's representation, which would perform worse on $\theta_a$ than $\phi_a$. 



We benchmark the performance of our \simucb algorithm against three variations:
\begin{itemize}[leftmargin=*]
\itemsep0em 
    \item \emph{\simucb-With-True-Reward-Function}: We give the \simucb algorithm oracle access to evaluate $\cD_u^v(a,a')$ at no cost, which is used to subpartition whenever a ball is flagged.
    \item \emph{\simucb-With-Similarity-Metric}: We give the \simucb algorithm oracle access to evaluate $|\theta_a - \theta_{a'}|$ at no cost, which is used to subpartition whenever a ball is flagged.
     \item \emph{\simucb-With-No-Arm-Similarity}: This naive variant uses no arm similarities, estimating each arm's reward independently. The context space is adaptively partitioned via our algorithm. 
\end{itemize}

We chose the model parameters that led to the highest average cumulative reward in each baseline algorithm. For all algorithms the flagging rule is set to $n_t(\rho) \geq 4\ln(T)/\Delta^2$, and $\sigma$ was set to $1e-2$. For \simucb, $k$ was set to $26$. We set the number of trials $T$ to $100,000$ as all the algorithms had converged to their optimal point by then.

In figure \ref{fig:ctr}, we plot the average cumulative reward over the trials, i.e. $\frac{1}{T} \sum_{t=1}^T \pi_t$, where $T$ is the total number of trials and $\pi_t \in (0,1)$ is the reward observed in the $t^{\text{th}}$ trial. We plot the result for the 200 arm setting with $\sigma$ set to $1e-2$.
As we can see, the oracle variant of the algorithm that uses the true reward function to calculate $\cD_u^v(a,a')$ achieves the best cummulative reward across the entire time horizon. Not surprisingly, the algorithm which learns each arm separately takes more time to converge to the optimal policy compared to all the other methods. 
Our Approx-Zooming algorithm has a heavy cost up front due to the clustering of the arms globally, but the algorithm improves over the time horizon as it learns the correct arm similarities. The oracle variant which uses the similarity metric $|\theta_a - \theta_{a'}|$ performs worse than the true $\cD_u^v(a,a')$ variant, as it does not account for the periodic nature of the function. This supports our intuition that algorithms which depend on a given metric are sensitive to the choice of a good vs bad metric.

\begin{figure}[!h]
    \centering
    \includegraphics[width=0.75\columnwidth]{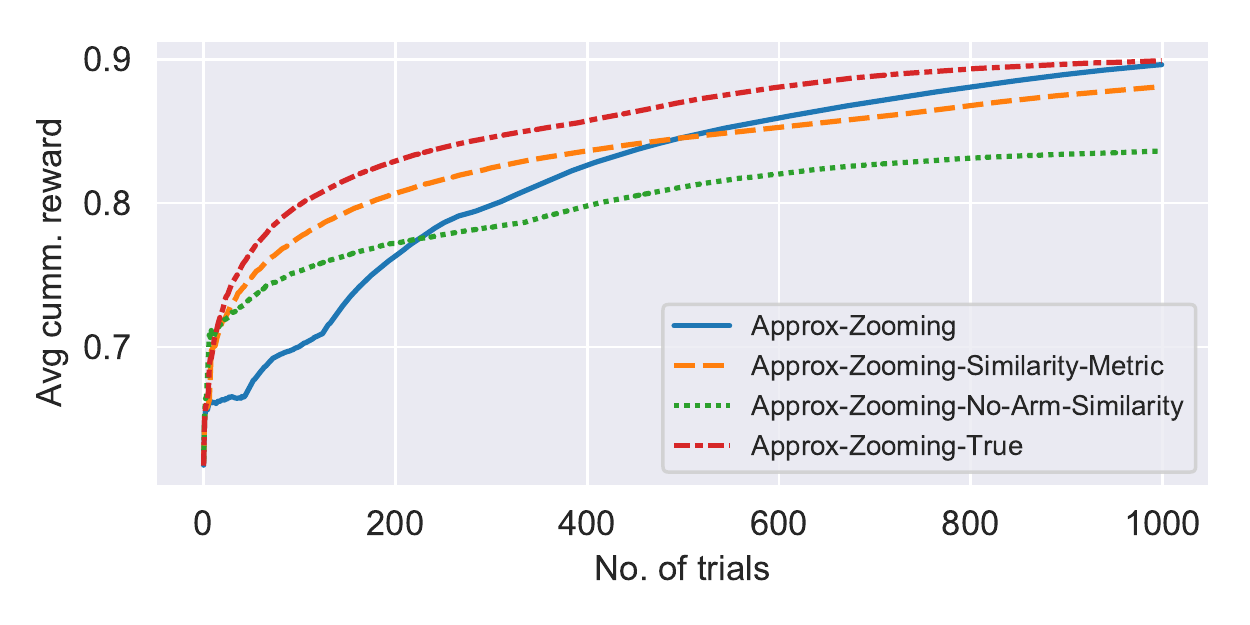}
    \caption{Avg. cumulative reward vs. number of trials}
    \label{fig:ctr}
    \vspace{-1.5em}
\end{figure}
\begin{figure}[!h]
    \centering
        \includegraphics[width=0.8\columnwidth]{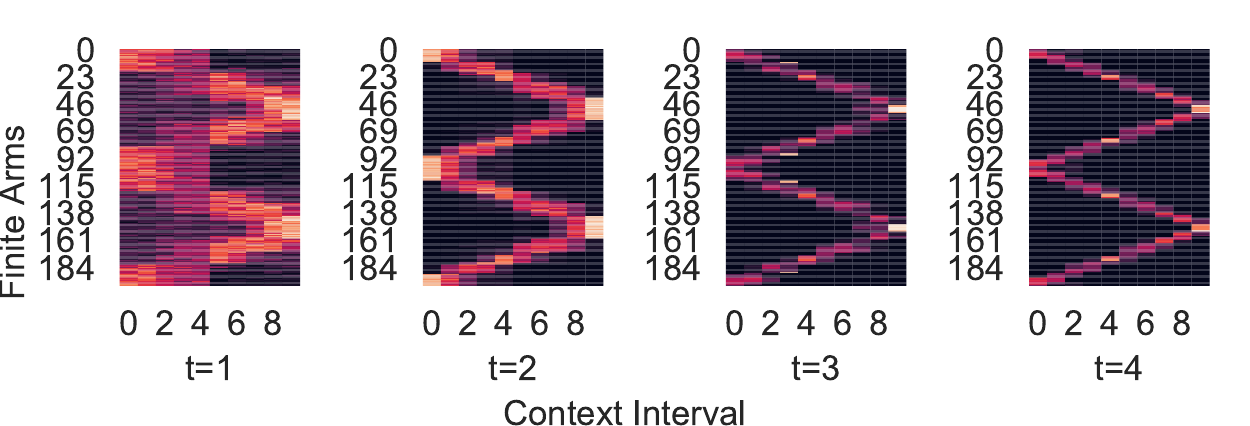}
        \caption{\simucb Selected Arm Frequency Over The Trials}
    %
    
    %
    \label{fig:arm frequency}

\end{figure}

In figure \ref{fig:arm frequency} we plot the frequencies an arm is selected in different contexts over the $T$ trials. Each of the four plots corresponds to averaging the frequency over $T/4$ trials across the time horizon. The x-axis refers to the context space, and the y-axis refers to the set of arms. 
Initially the frequency plot is very blurry, indicating that our algorithm is not necessarily playing the optimal arm but selecting arms to learn the latent arm structure. As time progresses our algorithm learns the similarities amongst arms and gradually plays the arms using the latent structure, which is depicted by the zigzag shape sharpening. Finally, in the last trials \simucb plays the optimal policy, which corresponds to the clear zigzag. In Appendix~\ref{appendix:simulation} we present similar plots for the benchmark algorithms.

In conclusion, our simulations show that when the number of arms is large, it is important to use similarities amongst arms to more quickly learn the optimal policy. In addition our results highlight the fact that metric-based algorithms may be sensitive to the choice of metric, which is not a trivial task. In contrast, our approach relies on samples from the reward distribution to learn the latent structure, and is thus agnostic to any choice of metric.  However, the parameter $k$ needs to be carefully tuned for our algorithm to avoid unnecessary sampling for estimating similarities.



\bibliographystyle{abbrvnat}
\bibliography{ref}

\vfill

\pagebreak

\appendix

\section{Algorithm Notation and Pseudocode}

The algorithm is presented in full in the main body of the paper, but for reference we have also included pseudocode below.

\begin{algorithm}[!ht]
\caption{\simucb Algorithm}
\label{algo:sim_ucb}
\begin{algorithmic}[1]
\Require context space $[0,1]$, arm space $[K]$, Lipschitz constant $L$ and time horizon $T$

\Function{\textsc{\simucb}}{K, L, T}
\State Parameter $k = \frac{5431 \sigma^2 \ln(T |\cA(\rho)|)}{L^2 \Delta^2(\rho)}$
\State Initialize $\cP^* =\{([0,1] \times [K]) \}$, $\cP = \emptyset$
\For {$t \in [T]$}
	\State Context $x_t$ ``arrives''
	\If{ $x_t \in [c_0(\rho),c_1(\rho)]$ for any $\rho \in \cP^*$} \Comment{Check for any flagged balls}
		\State $\rho_t = \argmax_{\rho \in \cP^*} \Delta(\rho) \Ind{x_t \in [c_0(\rho,c_1(\rho)]}$
		
    \State Select $a_t = \min\{a \in \cA(\rho) ~s.t.~ \textsc{SuffData}(a) = 0\}$
    \State Play arm $a_t$ and observe payoff $\pi_t = f_{a_t}(x_t) + \epsilon_t$
    \If {$\prod_{a \in \cA(\rho)}\textsc{SuffData}(a) == 1$, } \label{algoline: simucb sampling}
        \State new partitions = \textsc{SubPartition}($\rho$)
        \State $\cP = \cP ~\cup $ new partitions
        \State $\cP^* = \cP^* \setminus \rho$
    \EndIf
		
	\Else \Comment{If no relevant flagged balls, then use UCB selection rule}
    \State $\rho_t = \argmax_{\rho \in \cP} UCB_t(\rho) \Ind{x_t \in [c_0(\rho), c_1(\rho)]}$
		\State Play any arm $a_t \in \cA(\rho_t)$ and observe payoff $\pi_t = f_{a_t}(x_t) + \epsilon_t$
    \If {$n_t(\rho_t) > \frac{6 \sigma^2 \ln(T)}{L^2 \Delta^2(\rho_t)}$} \label{algoline: simucb flagging rule}
       \State $\cP = \cP \setminus \{\rho_t\}, \cP^* = \cP^* \cup \{\rho_t\}$ \Comment{Remove from active set, flag for subpartition}
       \State $\tflag(\rho) = t$
    \EndIf
	\EndIf
\EndFor
\EndFunction

\State

\Function{\textsc{SubPartition}}{$\rho$}
\State Initialize centers $\cC_0 = \emptyset$, $\cC_1 = \emptyset$
\For {$\ell \in \{0,1\}$}
    \State $u = c_0(\rho) + \frac{\Delta(\rho) \ell}{2}, v = c_0(\rho) + \frac{\Delta(\rho) (1+\ell)}{2}$
    \For {$a \in \cA_{\rho}$ in arbitrary order}
        \State Compute $\hat{D}_u^v(a,y)$ for all $y \in \cC_{\ell}$ 
        \If{$\min_{y \in \cC_{\ell}} \hat{\cD}_u^v(a,y) > \frac{3}{16} L (v-u)$}
            \State Add $a$ as a new center, $\cC_{\ell} = \cC_{\ell} \cup \{a\}$, $\cS_{\ell}(a) = \{a\}$
        \Else
            \State $y = \argmin_{y \in \cC_{\ell}} \hat{D}_{\rho}(a,y)$
						\State $\cS_{\ell}(y) = \cS_{\ell}(y) \cup \{a\}$ 
         \EndIf
    \EndFor
\EndFor
\State \Return $\left\{ \left([c_0(\rho), \frac{c_0(\rho) + c_1(\rho)}{2}] \times \cS_0(y)\right) \right\}_{y \in \cC_0} \cup \left\{ \left([\frac{c_0(\rho)+c_1(\rho)}{2}, c_1(\rho)] \times \cS_1(y)\right) \right\}_{y \in \cC_1}$ 
\EndFunction
\end{algorithmic}
\end{algorithm}

Recall the notation and definitions introduced:
\begin{itemize}
\item $diam(\cS) := \textstyle\sup_{(x,a) \in \cS} f_a(x) - \textstyle\inf_{(x',a') \in \cS} f_{a'}(x')$
\item $[c_0(\rho), c_1(\rho)]$ denotes the context interval of ball $\rho$.
\item $\cA(\rho)$ denotes the set of arms in ball $\rho$.
\item $\Delta(\rho) := c_1(\rho) - c_0(\rho)$ denotes the context width of ball $\rho$.
\item $n_t(\rho) := \textstyle\sum_{s=1}^{t-1} \Ind{\rho_s = \rho}$ denotes the number of trials ball $\rho$ has been chosen by the algorithm before trial $t$.
\item $\mu_t(\rho) := \tfrac{1}{n_t(\rho)} \textstyle\sum_{s=1}^{t-1} \Ind{\rho_s =\rho} \pi_s$ denotes the average observed reward from this ball before trial $t$.
\item $UCB_t(\rho) := \mu_t(\rho) + 2L \Delta(\rho) + \conf{t}{\rho}$ is an upper confidence bound for the maximum reward achievable by any context-arm pair in the ball $\rho$.
\item $\tflag(\rho)$ denotes the trial that ball $\rho$ is flagged.
\item $\tcluster(\rho)$ denotes the trial in which the $\textsc{SubPartition}$ subroutine is called.
\item $\textsc{SuffData}(a) := \textstyle\prod_{i=1}^{64} \Ind{\textstyle\sum_{s > \tflag(\rho)} \Ind{\rho_s = \rho, a_s = a} \Ind{x_s \in [w_{i-1},w_i]} \geq k}$ for $w_i = c_0(\rho) + \frac{i \Delta(\rho)}{64}$ and $k = 5431 \sigma^2 \ln(T |\cA(\rho)|)/(L^2 \Delta^2(\rho))$.
\item The reward function for an arm $a$ is estimated via a $k$-NN estimator 
\[\hat{f}_a(x) = \tfrac{1}{k}\textstyle\sum_{s=\tflag(\rho)+1}^{\tcluster(\rho)} \Ind{\rho_s = \rho, a_s = a} \Ind{x_s \in \text{ k-NN}} \pi_s\]
where $x_s$ is a $k$ nearest neighbor datapoint for computing $\hat{f}_a(x)$ if
\[\Ind{x_s \in \text{ k-NN}} = \textstyle\sum_{\ell=\tflag(\rho)+1}^{\tcluster(\rho)} \Ind{\rho_{\ell} = \rho, a_{\ell} = a} \Ind{|x_{\ell} - x| \leq |x_s - x|} \leq k.\]
\item The distance between arms $a$ and $a'$ for interval $[u,v]$ is estimated according to
\[\hat{\cD}_u^v(a, a') := \sqrt{ \tfrac{1}{\disc}\textstyle\sum_{i \in [\disc]} \Big( \hat{f}_a(z_i(u,v))-\hat{f}_{a'}(z_i(u,v)) \Big)^2 - \tfrac{2\sigma^2}{k}}\]
where $z_i(u,v) = \left(1-\frac{i}{\disc}\right) r + \frac{i}{\disc} s$.
\end{itemize}

\section{Proof Sketch}

Our algorithm and analysis take after the Zooming algorithm \cite{kleinberg2008multi, slivkins2014contextual}. However, the major difference is that their model assumes the metric is directly known in advance, but our algorithm must learn the metric. In particular, each trial we subpartition a set into finer clusters, we pay extra cost to collect samples to estimate distances used for determining the subsequent clusters.

\begin{lemma} \label{lemma:bad_event_1}
With probability at least $1-2 T^{-1}$, over the entire course of the algorithm for all $t \in [T]$ and $\rho \in \cP$,
\[\mu_t(\rho) \in \left[\min_{(x,a) \in \rho} f_a(x) - \conf{t}{\rho}, \max_{(x,a) \in \rho} f_a(x) + \conf{t}{\rho}\right].\]
\end{lemma}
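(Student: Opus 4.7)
The plan is to write $\mu_t(\rho) = \bar{f}_t(\rho) + \bar{\epsilon}_t(\rho)$, where $\bar{f}_t(\rho) := \tfrac{1}{n_t(\rho)} \sum_{s<t} \Ind{\rho_s = \rho} f_{a_s}(x_s)$ is the average of the true expected rewards on trials when $\rho$ was played, and $\bar{\epsilon}_t(\rho) := \tfrac{1}{n_t(\rho)} \sum_{s<t} \Ind{\rho_s = \rho} \epsilon_s$ is the corresponding average of Gaussian noise. Since $\rho_s = \rho$ forces $(x_s, a_s) \in \rho$, every summand of $\bar{f}_t(\rho)$ lies in $[\min_{(x,a) \in \rho} f_a(x), \max_{(x,a) \in \rho} f_a(x)]$, so $\bar{f}_t(\rho)$ does too. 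The remaining task is to show that $|\bar{\epsilon}_t(\rho)| \leq \sqrt{6\sigma^2 \ln(T)/n_t(\rho)}$ holds uniformly over $t \in [T]$ and $\rho \in \cP$ with probability at least $1 - 2T^{-1}$.

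To bound $\bar{\epsilon}_t(\rho)$ I would handle the random sample count via a coupling argument. For each ball $\rho$ that the algorithm could ever construct, imagine a pre-drawn iid sequence $\xi_1^{(\rho)}, \xi_2^{(\rho)}, \ldots \sim N(0,\sigma^2)$, with the convention that the $i$-th time $\rho$ is played the realized noise equals $\xi_i^{(\rho)}$. Because the noise is independent of the algorithm's past decisions and the contexts, this coupling preserves the joint distribution of observed rewards, and $\bar{\epsilon}_t(\rho) = \tfrac{1}{n_t(\rho)} \sum_{i=1}^{n_t(\rho)} \xi_i^{(\rho)}$. For each fixed $n \in [T]$, the standard Gaussian tail bound yields
\[ \Prob\Big(\big|\tfrac{1}{n}\textstyle\sum_{i=1}^n \xi_i^{(\rho)}\big| > \sqrt{6\sigma^2 \ln(T)/n}\Big) \leq 2 \exp(-3 \ln T) = 2T^{-3}.\]
A union bound over $n \in \{1, \ldots, T\}$ gives failure probability at most $2T^{-2}$ per ball, and then a union bound over the (at most $T$-many) balls that are ever added to $\cP$ during the run delivers overall failure probability $2T^{-1}$. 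The bound on the number of such balls uses that the subpartition tree has depth at most $O(\log T)$ with bounded fan-out, and that each ball added to $\cP$ must eventually be played at least once for the concentration statement to be nontrivial, so the polynomial factor in the ball-count is absorbed into the slack in the $\ln T$ term.

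The main obstacle is that both the sample count $n_t(\rho)$ and the set of balls $\cP$ are data-dependent random objects, so a naive union bound is not well-defined. The coupling of noises to a pre-drawn iid sequence for each potential ball is the standard trick that sidesteps this issue: it lets us treat $n_t(\rho)$ as a random index into a fixed sequence (enabling a clean union bound over $n \in [T]$), and it lets us enumerate the potential balls a priori using the deterministic combinatorial structure of the subpartition tree rather than the random run. Once these two layers of randomness are untangled, what remains is routine Gaussian concentration and bookkeeping.
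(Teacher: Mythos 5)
Your proposal is essentially the same argument as the paper's: decompose $\mu_t(\rho)$ into the average of true means (which stays in the interval by the constraint $(x_s,a_s)\in\rho$) plus the average of noise, deal with the data-dependent sample count by a measurability trick, Gaussian tail + union bound over $n\in[T]$ gives $2T^{-2}$ per ball, and a final union bound over balls. The only cosmetic difference is framing: the paper invokes ``Doob's optional skipping theorem'' to say the noise subsequence $(\epsilon_{\tau_1},\epsilon_{\tau_2},\ldots)$ is distributed like a fresh iid sequence, whereas you describe the identical fact via an explicit coupling to pre-drawn sequences $\xi_i^{(\rho)}$; these are interchangeable.

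One small caveat worth flagging: your justification for the union bound over balls invokes ``bounded fan-out'' of the subpartition tree, which is not literally true here (a single subpartition can produce up to $2|\cA(\rho)|$ children, so the fan-out scales with $K$). The paper sidesteps this by observing that at most $T$ balls are ever \emph{played} over the horizon (one per trial), and for any ball with $n_t(\rho)=0$ the concentration statement is vacuous, so only played balls need to enter the union bound. You do make this observation in the very next clause, so the argument goes through; just drop the bounded-fan-out reasoning, since it is both unnecessary and inaccurate.
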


\begin{lemma}\label{lemma:bad_event_2}
With probability at least $1-4 T^{-1}$, over the entire course of the algorithm for all trials that $\hat{\cD}_u^v(a,y)$ is evaluated, 
\[\left\{|\hat{\cD}_u^v(a,y) - \cD_u^v(a,y)| \leq \frac{1}{8} L (v-u)\right\}.\]
\end{lemma}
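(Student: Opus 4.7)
The plan is to propagate pointwise concentration bounds on the $k$-nearest-neighbor reward estimator $\hat{f}_a$ through the averaged $L_2$ distance $\hat{\cD}_u^v(a,y)$, carefully controlling both the Lipschitz bias introduced by the nearest-neighbor smoothing and the Gaussian variance, which is compensated by the $-2\sigma^2/k$ correction inside the square root.

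First I would decompose the pointwise error at each discretization point $z_i \in [u,v]$. Write $\hat{f}_a(z_i) = \bar{f}_a(z_i) + V_i^a$, where $\bar{f}_a(z_i) = \tfrac{1}{k}\sum_{s \in S_a(z_i)} f_a(x_s)$ averages the true rewards at the $k$ nearest-neighbor sample positions $S_a(z_i)$, and $V_i^a = \tfrac{1}{k}\sum_{s \in S_a(z_i)} \epsilon_s$ is a zero-mean Gaussian with variance $\sigma^2/k$ conditional on sample positions. The $\textsc{SuffData}$ condition guarantees that each of the $64$ sub-intervals of width $\Delta(\rho)/64$ contains at least $k$ samples for every arm, so the $k$-NN of $z_i$ lie within distance $\Delta(\rho)/64 = (v-u)/32$, yielding the deterministic Lipschitz bias bound $|\bar{f}_a(z_i) - f_a(z_i)| \leq L(v-u)/32$. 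Setting $D_i = f_a(z_i) - f_y(z_i)$, $B_i = (\bar{f}_a(z_i) - f_a(z_i)) - (\bar{f}_y(z_i) - f_y(z_i))$, and $V_i = V_i^a - V_i^y$, we obtain $\hat{f}_a(z_i) - \hat{f}_y(z_i) = D_i + B_i + V_i$ with $|B_i| \leq L(v-u)/16$ deterministically and $V_i \sim N(0, 2\sigma^2/k)$ conditional on sample positions.

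Next I would propagate to the averaged $L_2$ distance $\|x\| := \sqrt{\tfrac{1}{\disc}\sum_i x_i^2}$. The algorithm outputs $\hat{\cD}_u^v(a,y) = \sqrt{\|D+B+V\|^2 - 2\sigma^2/k}$ (clipped to $0$ if the argument is negative). The elementary inequality $\sqrt{\alpha} - \sqrt{\alpha - \beta} \leq \sqrt{\beta}$ for $\alpha \geq \beta \geq 0$, together with the clipped boundary case when $\|D+B+V\|^2 < 2\sigma^2/k$, gives $|\hat{\cD}_u^v(a,y) - \|D+B+V\|| \leq \sqrt{2\sigma^2/k}$. Two applications of the triangle inequality in the $\|\cdot\|$-norm then yield
\[|\hat{\cD}_u^v(a,y) - \cD_u^v(a,y)| \;\leq\; \|B\| + \|V\| + \sqrt{2\sigma^2/k}.\]
The first summand is bounded deterministically by $\|B\| \leq L(v-u)/16$. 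For the second, $\E{\|V\|^2 \mid \{x_s\}} = 2\sigma^2/k$, and I would use Gaussian Lipschitz concentration (Borell--TIS) applied to the $\sqrt{1/\disc}$-Lipschitz function $v \mapsto \|v\|$ to obtain $\|V\| \leq \sqrt{2\sigma^2/k} + O(\sigma\sqrt{\ln T /k})$ with high probability; this handles the dependence between $V_i$ for different $z_i$ (induced by shared $k$-NN samples) automatically. Substituting $k = 5431\,\sigma^2 \ln(T|\cA(\rho)|)/(L^2 \Delta(\rho)^2)$ and using $\Delta(\rho) = 2(v-u)$ bounds $\sqrt{2\sigma^2/k} \leq L(v-u)\sqrt{8/5431}$, and the constant $5431$ is chosen precisely so the three summands combine to at most $L(v-u)/8$.

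Finally, a union bound closes the argument: there are at most $T$ invocations of $\textsc{SubPartition}$, at most $|\cA(\rho)|^2 \leq K^2$ arm pairs per invocation, two half-intervals per call, and $\disc$ discretization points per distance evaluation, so choosing per-event tail probability proportional to $T^{-3}K^{-2}$ yields total failure probability at most $4T^{-1}$. The main obstacle is handling the correlation between $V_i$ for different $z_i$ through shared $k$-NN samples when controlling $\|V\|$, which is what Gaussian Lipschitz concentration was designed for and is more effective here than a naive pointwise-plus-union argument, together with the careful bookkeeping of numerical constants to ensure the three summands in the triangle inequality really do sum to $L(v-u)/8$.
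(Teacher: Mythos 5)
Your proposal is correct, and it reaches the same conclusion by a genuinely different and arguably cleaner route than the paper's. The paper proceeds by squaring: it compares $\hat\cD^2$ and $\bar\cD^2$ (where $\bar\cD$ is the noiseless-$k$-NN analogue), expands the square into a linear-in-noise term and a quadratic-in-noise term, applies Hoeffding to the former and a Hanson--Wright-style sub-exponential bound to the latter, and then recovers a bound on $|\hat\cD - \bar\cD|$ from the bound on $|\hat\cD^2 - \bar\cD^2|$ via a two-case argument (one bound increasing, one decreasing in $\bar\cD$) and an optimization over a free parameter $Q_2$. You avoid the squaring entirely: you decompose $\hat f_a(z_i) - \hat f_y(z_i) = D_i + B_i + V_i$, dispose of the $-2\sigma^2/k$ correction with the elementary inequality $\sqrt{\alpha}-\sqrt{\alpha - \beta}\leq\sqrt{\beta}$ (also correctly handling the clipped case), and reduce everything to $\|B\|+\|V\|+\sqrt{2\sigma^2/k}$ by the triangle inequality in the $\|\cdot\|$-norm. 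This eliminates the square-root recovery step entirely and handles the small-$\bar\cD$ regime for free.

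Two technical remarks. First, the Borell--TIS step needs a touch more care than your phrasing suggests: $\|V\|$ is $\sqrt{1/\disc}$-Lipschitz in $V$, but $V$ is a correlated Gaussian, so the concentration rate is governed by the Lipschitz constant with respect to the underlying isotropic noise $(\epsilon_s)$, namely $\|Q\|_2/\sqrt{\disc}$ in the paper's notation. Bounding this requires essentially the same operator-norm estimate $\|QQ^T\|_2 \leq \disc/(4k)$ that the paper derives on its Hanson--Wright route (each discretization point shares $k$-NN samples with at most $O(\disc/16)$ others), so the computational work does not disappear, it just enters through a different inequality. Second, your suggested per-event failure probability $T^{-3}K^{-2}$ makes $\ln(1/\delta)$ scale with $\ln K$ rather than $\ln|\cA(\rho)|$, which could overrun the margin in the constant check when $K\gg|\cA(\rho)|$; using $T^{-2}|\cA(\rho)|^{-2}$ as the paper does is safer, and the union bound over at most $|\cA(\rho)|^2$ pairs per call and at most $T$ calls still yields $4T^{-1}$. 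With those adjustments, your constant arithmetic goes through against $k = 5431\sigma^2\ln(T|\cA(\rho)|)/(L^2\Delta^2(\rho))$ with $\Delta(\rho)=2(v-u)$.
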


We refer to the set of conditions in Lemmas \ref{lemma:bad_event_1} and \ref{lemma:bad_event_1} as the ``good event'' and denote them with $\cG$.

Argue that the resulting clustering produced by SUBPARTITION satisfies that all pairs in the same cluster must be ``close''.
\begin{lemma}\label{lemma:bias_UB}
Conditioned on the ``good events'' $\cG$, for any $\rho \in \cP$ at any point of the algorithm, $diam(\rho) \leq 2 L \Delta(\rho)$.
\end{lemma}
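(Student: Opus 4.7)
The plan is to argue by induction on the subpartition hierarchy. Only balls in $\cP$ (active) must satisfy the diameter bound, and fresh balls enter $\cP$ only as children produced by \textsc{SubPartition}. So it suffices to show that every child ball created by \textsc{SubPartition} satisfies $diam \le 2L \Delta$. Fix such a child, of the form $[u,v]\times\cS_\ell(y)$, where $v-u = \Delta(\text{child})$ and all members of $\cS_\ell(y)$ are clustered around the center $y$. For any two points $(x_1,a_1),(x_2,a_2)$ in this child, the triangle inequality gives
\[
|f_{a_1}(x_1) - f_{a_2}(x_2)| \;\le\; L|x_1-x_2| + |f_{a_1}(x_2) - f_{a_2}(x_2)| \;\le\; L(v-u) + \sup_{x\in[u,v]} |f_{a_1}(x)-f_{a_2}(x)|,
\]
using the Lipschitz assumption on each $f_a$. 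Therefore it suffices to show the sup on the right is at most $L(v-u)$.

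Next I would convert the clustering rule into a true-distance bound on $\cD_u^v$. By construction of \textsc{SubPartition}, every $a\in\cS_\ell(y)$ satisfies $\hat{\cD}_u^v(a,y) \le \frac{3}{16} L(v-u)$. Conditioning on the good event $\cG$ (Lemma~\ref{lemma:bad_event_2}) adds a bias of at most $\frac{1}{8}L(v-u) = \frac{2}{16}L(v-u)$, giving $\cD_u^v(a,y) \le \frac{5}{16} L(v-u)$. Because $\cD_u^v$ is an $L_2$-type quantity, it obeys the triangle inequality, so any two clustered arms $a_1,a_2$ satisfy $\cD_u^v(a_1,a_2) \le \frac{5}{8} L(v-u)$.

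The main step, and the main obstacle, is converting this $L_2$-style bound on $\cD_u^v(a_1,a_2)$ into an $L_\infty$ bound on $h := f_{a_1}-f_{a_2}$. The function $h$ is $2L$-Lipschitz by hypothesis, so if $M := \sup_{x\in[u,v]} |h(x)|$ is attained at $x^\star$, then $|h(x)| \ge M - 2L|x-x^\star|$ on a window of radius $M/(2L)$ around $x^\star$. Integrating the square of this lower envelope shows that $\int_u^v h(x)^2\,dx \;\gtrsim\; M^3/L$ (with a factor of $1/2$ lost if $x^\star$ lies near the boundary of $[u,v]$). The discrete sum defining $\cD_u^v$ is a uniform Riemann sum on $\disc=200$ grid points, and because $h^2$ is itself Lipschitz (with constant $O(LM)$), the sum matches $\tfrac{1}{v-u}\int_u^v h^2$ up to vanishing error at the chosen resolution. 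Combining with the upper bound $\cD_u^v(a_1,a_2)^2 \le \bigl(\tfrac{5}{8}L(v-u)\bigr)^2$ yields $M^3 \lesssim L^3 (v-u)^3$, i.e., $M \le L(v-u)$, with the constants in the clustering threshold $3L(v-u)/16$, the bias bound $L(v-u)/8$, and the choice $\disc=200$ all tuned precisely so that the resulting numerical constant is at most $1$.

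Plugging $\sup_x |f_{a_1}(x)-f_{a_2}(x)| \le L(v-u)$ back into the triangle-inequality decomposition gives $|f_{a_1}(x_1)-f_{a_2}(x_2)| \le 2L(v-u) = 2L\Delta(\text{child})$, completing the inductive step and hence the lemma. The only subtle piece is the $L_2$-to-$L_\infty$ conversion: beyond the clean $M^3/L$ envelope argument, I would need to carefully verify that (i) the Riemann sum error is small relative to $L(v-u)$, (ii) boundary effects when $x^\star$ is near $u$ or $v$ only cost a constant factor, and (iii) the calibrated constants $3/16$, $1/8$, and $1/\disc$ conspire to deliver the clean bound $M \le L(v-u)$ rather than a slightly larger multiple.
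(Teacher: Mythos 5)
Your proposal follows essentially the same approach as the paper: reduce $diam(\rho)$ to a Lipschitz term plus $\sup_{x\in[u,v]}|f_a(x)-f_{a'}(x)|$, convert the clustering threshold and the good event into $\cD_u^v(a,a') \leq \tfrac58 L(v-u)$ via the triangle inequality for $\cD$, and then deduce an $L_\infty$ bound from the $L_2$-type quantity using the Lipschitz envelope $|h(x)| \geq M - 2L|x-x^*|$. The paper carries out exactly the $L_2$-to-$L_\infty$ conversion you flag as the remaining work, as an explicit discrete sum (taking the worst case $x^*$ at an endpoint and splitting on whether the truncated envelope extends past the grid), verifying by direct computation that the constants $3/16$, $1/8$, and $\disc = 200$ together yield $\sup_x|f_a(x)-f_{a'}(x)| \leq L(v-u)$.
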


Let us define 
\[gap(\rho) = \min_{(x,a) \in \rho} \left(f^*(x) - f_a(x)\right).\]

Conditioned on good events above, for any $\rho \in \cP_t$, we upper bound the total number of trials this set can be chosen by the algorithm until either it is completely dominated, or it is flagged to be subpartitioned.
\begin{lemma} \label{lemma:n_UCB_bd}
Conditioned on the ``good events'' $\cG$, if ball $\rho$ is chosen by the algorithm at trial $t$ via the UCB rule, then
\begin{equation}
n_t(\rho) \leq \min\Big\{\tfrac{24 \sigma^2 \ln(T)}{(gap(\rho) - 4 L\Delta(\rho_t))^2}, \tfrac{6\sigma^2\ln(T)}{L^2 \Delta^2(\rho)}+1\Big\}
\end{equation}
Therefore the max number of trials ball $\rho$ is chosen via the UCB rule is bounded above by,
\begin{equation}
\textstyle\sum_{t=1}^{\tflag(\rho)} \Ind{\rho_t = \rho} \leq \min\Big\{\tfrac{24 \sigma^2 \ln(T)}{(gap(\rho) - 4 L\Delta(\rho_t))^2} + 1, \tfrac{ 6\sigma^2\ln(T)}{L^2 \Delta^2(\rho)} + 2\Big\}
\end{equation}
\end{lemma}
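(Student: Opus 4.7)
The plan is a standard UCB-style analysis, built around comparing $UCB_t(\rho)$ to the UCB of a distinguished competitor. Let $a^*_t := \argmax_{a \in [K]} f_a(x_t)$. Since the algorithm invoked the UCB branch at trial $t$, no flagged ball contains $x_t$, which, combined with the invariant that $\cP \cup \cP^*$ partitions the context-arm space, implies the unique ball containing $(x_t, a^*_t)$ is active; call it $\rho^\circ \in \cP$. By the selection rule in \eqref{eq: simucb selection strategy}, $UCB_t(\rho) \geq UCB_t(\rho^\circ)$.

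The first step is to show $UCB_t(\rho^\circ) \geq f^*(x_t)$ under the good event $\cG$. Lemma \ref{lemma:bad_event_1} gives $\mu_t(\rho^\circ) \geq \min_{(x,a)\in\rho^\circ} f_a(x) - \sqrt{6\sigma^2\ln(T)/n_t(\rho^\circ)}$, and Lemma \ref{lemma:bias_UB} combined with $(x_t, a^*_t) \in \rho^\circ$ (so $f^*(x_t) \leq \max_{(x,a)\in\rho^\circ} f_a(x)$) forces $\min_{(x,a)\in\rho^\circ} f_a(x) \geq f^*(x_t) - 2L\Delta(\rho^\circ)$. Substituting into the definition of $UCB_t(\rho^\circ)$, the $2L\Delta(\rho^\circ)$ bias term and the confidence term exactly cancel the deficit, yielding the claim.

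Next, I would upper bound $UCB_t(\rho)$. Under $\cG$, $\mu_t(\rho) \leq \max_{(x,a)\in\rho} f_a(x) + \sqrt{6\sigma^2\ln(T)/n_t(\rho)}$. The definition of $gap(\rho)$ implies $f_a(x) \leq f^*(x) - gap(\rho)$ for all $(x,a) \in \rho$, and the $L$-Lipschitzness of $f^*$ then gives $\max_{(x,a)\in\rho} f_a(x) \leq f^*(x_t) + L\Delta(\rho) - gap(\rho)$. Combining this with $UCB_t(\rho) \geq UCB_t(\rho^\circ) \geq f^*(x_t)$ and rearranging produces
\[
gap(\rho) - 4L\Delta(\rho) \leq 2\sqrt{6\sigma^2\ln(T)/n_t(\rho)},
\]
(absorbing bookkeeping slack into the constant $4L\Delta(\rho)$) which, whenever the left side is positive, inverts to $n_t(\rho) \leq 24\sigma^2\ln(T)/(gap(\rho) - 4L\Delta(\rho))^2$, the first term of the $\min$. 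The second term follows immediately from the flagging rule on line \ref{algoline: simucb flagging rule}: since $\rho$ is still active at $t$, the threshold $n(\rho) > 6\sigma^2\ln(T)/(L^2\Delta^2(\rho))$ was not tripped on any prior play, so $n_t(\rho) \leq 6\sigma^2\ln(T)/(L^2\Delta^2(\rho)) + 1$ with the $+1$ absorbing an off-by-one. The cumulative bound $\sum_{t \leq \tflag(\rho)} \Ind{\rho_t = \rho}$ then follows because the per-trial bound applies at every UCB selection of $\rho$ up to and including the one that triggers flagging; the extra $+1$ in each term accounts for this final play.

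The main obstacle is the first step: identifying a valid comparator $\rho^\circ$ in $\cP$. This requires the algorithm's priority of flagged balls over active balls and the partition invariant to conclude that the ball containing $(x_t, a^*_t)$ is necessarily active and contains $x_t$. Once that is established and $\cG$ is conditioned upon, everything else is routine UCB arithmetic using Lemmas \ref{lemma:bad_event_1} and \ref{lemma:bias_UB}.
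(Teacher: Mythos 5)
Your proof is correct and follows essentially the same route as the paper's: compare the chosen ball's UCB to that of the ball containing the per-context optimal arm, lower-bound the latter by $f^*(x_t)$ via Lemmas \ref{lemma:bad_event_1} and \ref{lemma:bias_UB}, upper-bound the former, and invert to bound $n_t(\rho)$; the flagging rule gives the second term of the $\min$. The one technical variation is in the upper bound on $UCB_t(\rho)$: you bound $\max_{(x,a)\in\rho} f_a(x)$ directly via $f_a(x) \leq f^*(x) - gap(\rho)$ together with the $L$-Lipschitzness of $f^*$, obtaining $gap(\rho) - 3L\Delta(\rho) \leq 2\sqrt{6\sigma^2\ln(T)/n_t(\rho)}$, whereas the paper instead applies the diameter bound $diam(\rho) \leq 2L\Delta(\rho)$ from Lemma \ref{lemma:bias_UB} to $\rho$ itself and arrives at $4L\Delta(\rho)$; your route is marginally tighter but, after weakening $3L\Delta$ to $4L\Delta$, yields the identical statement. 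You are also slightly more careful than the paper in justifying that the comparator ball containing $(x_t, a_t^*)$ is indeed active (via the flagged-priority rule and the partition invariant), which the paper leaves implicit.
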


\begin{lemma}\label{lemma:gap_UB}
Conditioned on the ``good events'' $\cG$, for any $\rho \in \cP$ such that $\tflag(\rho) \leq T$,
\[\max_{(x,a) \in \rho} (f^*(x) - f_a(x)) \leq 10 L \Delta(\rho).\]
For any $\rho \in \cP$ that was created due to a parent in $\cP$ being flagged (i.e. $\Delta(\rho) \leq \frac14$)
\[\max_{(x,a) \in \rho} (f^*(x) - f_a(x)) \leq 20 L \Delta(\rho).\]
\end{lemma}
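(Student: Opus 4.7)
The plan is to first establish the bound for flagged balls by combining the flagging rule with the UCB-based count bound from Lemma \ref{lemma:n_UCB_bd}, and then derive the bound for child balls as an immediate corollary using the nested structure of the partition.

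For the first statement, I would consider any ball $\rho$ that is flagged at trial $t = \tflag(\rho) \leq T$. Since the flagging condition is only checked after $\rho$ is selected via the UCB rule, immediately prior to flagging we have $n_t(\rho) > 6 \sigma^2 \ln(T)/(L^2 \Delta^2(\rho))$. Conditioned on $\cG$, Lemma \ref{lemma:n_UCB_bd} simultaneously gives $n_t(\rho) \leq 24 \sigma^2 \ln(T)/(gap(\rho) - 4L\Delta(\rho))^2$ whenever $gap(\rho) > 4L\Delta(\rho)$. Chaining these yields $(gap(\rho) - 4L\Delta(\rho))^2 \leq 4 L^2 \Delta^2(\rho)$, so $gap(\rho) \leq 6 L \Delta(\rho)$ (and the conclusion is trivial in the complementary case $gap(\rho) \leq 4 L \Delta(\rho)$).

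Next I would convert this bound on $gap(\rho) = \min_{(x,a) \in \rho}(f^*(x) - f_a(x))$ into a bound on the maximum by controlling the variation of $f^* - f_a$ over $\rho$. For any $(x,a), (x',a') \in \rho$, I would write
\[
(f^*(x) - f_a(x)) - (f^*(x') - f_{a'}(x')) = (f^*(x) - f^*(x')) + (f_{a'}(x') - f_a(x)).
\]
The first term is bounded by $L|x - x'| \leq L \Delta(\rho)$ since $f^*$ is the pointwise maximum of $L$-Lipschitz functions and hence itself $L$-Lipschitz. The second term is at most $diam(\rho) \leq 2 L \Delta(\rho)$ by Lemma \ref{lemma:bias_UB}. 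Combining, the variation of $f^* - f_a$ over $\rho$ is at most $3 L \Delta(\rho)$, so $\max_{(x,a) \in \rho}(f^*(x) - f_a(x)) \leq gap(\rho) + 3 L \Delta(\rho) \leq 9 L \Delta(\rho) \leq 10 L \Delta(\rho)$.

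For the second statement, any ball $\rho \in \cP$ created by subpartitioning a parent $\rho_p$ satisfies $\rho \subseteq \rho_p$ with $\Delta(\rho_p) = 2\Delta(\rho)$, and by construction $\rho_p$ must have been flagged prior to subpartitioning. Applying the first part to $\rho_p$ and using nesting,
\[
\max_{(x,a) \in \rho}(f^*(x) - f_a(x)) \leq \max_{(x,a) \in \rho_p}(f^*(x) - f_a(x)) \leq 10 L \Delta(\rho_p) = 20 L \Delta(\rho).
\]
The main obstacle I expect is handling the precise off-by-one timing at the flagging trial, namely whether Lemma \ref{lemma:n_UCB_bd} should be applied to $n_t(\rho)$ or $n_t(\rho)+1$ at the moment of flagging. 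The slack between the derived constant $9$ and the claimed constant $10$, together with the $+1$ terms appearing in Lemma \ref{lemma:n_UCB_bd}'s cumulative statement, should absorb this discrepancy without trouble.
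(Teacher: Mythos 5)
Your proposal is correct and follows essentially the same strategy as the paper's proof: chain the flagging threshold against Lemma~\ref{lemma:n_UCB_bd} to obtain $gap(\rho)\leq 6L\Delta(\rho)$, then convert the lower envelope to the upper envelope using Lipschitzness and the diameter bound from Lemma~\ref{lemma:bias_UB}. Your decomposition for the max-to-min conversion is slightly sharper than the paper's: you write the variation of $f^*-f_a$ over $\rho$ as $(f^*(x)-f^*(x'))+(f_{a'}(x')-f_a(x))\leq L\Delta(\rho)+diam(\rho)\leq 3L\Delta(\rho)$, getting $9L\Delta(\rho)$, whereas the paper uses the $2L$-Lipschitzness of $f^*(x)-\min_a f_a(x)$ plus the diameter, arriving at $gap(\rho)+diam(\rho)+2L\Delta(\rho)\leq 10L\Delta(\rho)$; both clear the stated constant. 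For the child ball, your argument (nesting gives $\max_{(x,a)\in\rho}\leq\max_{(x,a)\in\rho_p}\leq 10L\Delta(\rho_p)=20L\Delta(\rho)$) is actually cleaner than what the paper writes, which only explicitly derives a bound on $gap(\rho')$ rather than the $\max$ claimed in the lemma; your version proves exactly the stated conclusion. The off-by-one timing concern you flag is handled correctly by exactly the observation you make: at the flagging trial the ball was just selected by UCB, so $n_t(\rho)$ simultaneously exceeds the flagging threshold and satisfies the bound of Lemma~\ref{lemma:n_UCB_bd}, and the slack in the constants absorbs any residual discrepancy.
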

Note that since we assumed the reward functions are bounded in $[0,1]$, the regret is always bounded by 1, thus the above upper bound is only nontrivial for $\Delta(\rho) < 1/20L$.

\begin{lemma} \label{lemma:n_cluster_bd}
For any $\rho \in \cP$ such that $\tflag(\rho) \leq T$,
\begin{equation}
\E{\textstyle\sum_{t = \tflag(\rho) + 1}^T \Ind{\rho_t = \rho} ~|~ \tflag(\rho) \leq T} \leq \tfrac{304 \cdot 5431 \cdot \sigma^2 |\cA(\rho)| \ln(T |\cA(\rho)|)}{L^2 \Delta^2(\rho)}
\end{equation}
\end{lemma}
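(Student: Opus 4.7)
The plan is to decompose $\sum_{t=\tflag(\rho)+1}^T \Ind{\rho_t=\rho}$ by arm and then exploit the dyadic structure of the context intervals to reduce each per-arm count to a clean bounded-coupon-collector stopping time. Since $\rho$ stays in $\cP^*$ precisely until $\textsc{SuffData}(a)=1$ for every $a\in\cA(\rho)$, and the arm played whenever $\rho$ is chosen is $a_t=\min\{a:\textsc{SuffData}(a)=0\}$, each post-flagging selection of $\rho$ contributes to the sub-interval counters of a single currently-serviced arm. Letting $T_a$ be the number of post-flagging trials in which $\rho$ is selected and arm $a$ is played, one has $\sum_{t=\tflag(\rho)+1}^T \Ind{\rho_t=\rho}=\sum_{a\in\cA(\rho)} T_a$, and by the \textsc{SuffData} criterion $T_a$ is the waiting time until all $64$ sub-interval counters for arm $a$ reach $k$.

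The crucial structural observation I would invoke is that every context interval produced by the algorithm is dyadic: \textsc{SubPartition} halves its parent's context interval, so every such interval has the form $[j\cdot 2^{-i},(j+1)\cdot 2^{-i}]$. Any two dyadic intervals are either disjoint or nested, so any flagged ball $\rho'\in\cP^*$ with $\Delta(\rho')>\Delta(\rho)$ whose context overlaps $[c_0(\rho),c_1(\rho)]$ must in fact contain $[c_0(\rho),c_1(\rho)]$ entirely. Consequently, preemption of $\rho$ at any trial $t$ is all-or-nothing: whether $\rho$ is eligible is a function of the current state of $\cP^*$ alone, independent of where $x_t$ falls inside $[c_0(\rho),c_1(\rho)]$. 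It follows that conditional on $\rho$ being selected at trial $t$, $x_t$ is uniformly distributed on $[c_0(\rho),c_1(\rho)]$, and since the $x_t$ are i.i.d.\ across trials, the sub-interval indices across the selections counted by $T_a$ form an i.i.d.\ sequence uniform on $\{1,\dots,64\}$.

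Given this i.i.d.\ uniform structure, $T_a$ is stochastically dominated by the stopping time $\tau$ at which $N$ i.i.d.\ $\operatorname{Unif}\{1,\dots,64\}$ draws first bring every bin to level $k$. A multiplicative Chernoff bound on $\operatorname{Bin}(Ck,1/64)$, followed by a union bound over the $64$ bins, yields $\mathbb{P}(T_a>Ck)\leq 64\exp(-\Omega(k))$ for a suitable absolute constant $C$. With $k=5431\sigma^2\ln(T|\cA(\rho)|)/(L^2\Delta^2(\rho))$, this tail is polynomially small in $T|\cA(\rho)|$, so combining the $Ck$ bound on the high-probability event with the trivial bound $T_a\leq T$ on the tail event yields $\E{T_a\mid\tflag(\rho)\leq T}=O(k)$. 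Summing over the $|\cA(\rho)|$ arms produces the stated $O(k\,|\cA(\rho)|)$ bound, and tuning $C$ to absorb both the concentration slack and the tail contribution calibrates the specific $304\cdot 5431$ prefactor.

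The main obstacle is the dyadic-nesting step: without it, a preempting flagged ball whose context only partially overlaps $[c_0(\rho),c_1(\rho)]$ could bias the conditional sub-interval distribution of $x_t$, potentially leaving certain bins with vanishing selection probability and destroying the coupon-collector bound. Once the dyadic structure is invoked and the conditional uniformity of $x_t$ on $[c_0(\rho),c_1(\rho)]$ is established, the remainder is a routine Chernoff calculation followed by summation over arms.
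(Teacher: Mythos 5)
Your decomposition and the dyadic-nesting observation are exactly the structural heart of the paper's proof: the paper also argues that because all context intervals are dyadic, any flagged ball overlapping $[c_0(\rho),c_1(\rho)]$ either fully contains it or is fully contained in it, and that two flagged balls of the same context width cannot coexist; this makes preemption all-or-nothing and forces the samples collected in $\rho$'s flagged phase to be uniform on $[c_0(\rho),c_1(\rho)]$. You have correctly identified the main obstacle and how to dispose of it.

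Where you diverge is the final waiting-time bound. The paper does not pass through a high-probability tail bound. It bounds the expectation \emph{directly}: the expected number of draws until every one of the $64$ uniform bins has at least one sample is the classical coupon-collector quantity $64\sum_{i=1}^{64}\frac{1}{i}\leq 304$, and stacking $k$ successive coupon-collector rounds gives the naive bound $\mathbb{E}[T_a]\leq 64k\sum_{i=1}^{64}\tfrac1i\leq 304\,k$ unconditionally. Linearity over the $|\cA(\rho)|$ arms, together with the fact that the truncated conditional sum is dominated by the untruncated waiting time, finishes the proof. This is where the constant $304$ actually comes from; the $5431$ is just the prefactor sitting inside $k$ itself.

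Your Chernoff-plus-truncation route would also succeed, but it carries a hidden hypothesis that the paper's route avoids. To control $\mathbb{E}[T_a]$ via $\mathbb{E}[T_a]\leq Ck + T\cdot\Prob(T_a>Ck)$ you need the tail term $T\cdot 64\exp(-\Omega(k))$ to be $O(k)$, which effectively requires $k=\Omega(\ln(T|\cA(\rho)|))$. Unwinding $k=5431\,\sigma^2\ln(T|\cA(\rho)|)/(L^2\Delta^2(\rho))$, that requirement becomes $\sigma^2 L^{-2}\Delta(\rho)^{-2}=\Omega(1)$; for the first (coarsest) ball with $\Delta(\rho)=1$ this is a genuine restriction on the noise-to-Lipschitz ratio that nothing in the problem setup guarantees. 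The direct coupon-collector expectation bound has no such condition and yields the stated constant cleanly. If you want to keep the concentration approach, you should either note this regime restriction explicitly or integrate the tail $\Prob(T_a>n)$ over all $n$ rather than split at a single threshold $Ck$; but the simplest fix is to replace the Chernoff step with the direct expectation bound.
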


\section{Proof of Lemmas}

\begin{proof}[Proof of Lemma \ref{lemma:bias_UB}]

Recall from the algorithm that every ball $\rho \in \cP$ is constructed as a result of the \textsc{SubPartition} routine, and is associate to a cluster with a corresponding center arm $y \in \cA(\rho)$. Let us denote $r = c_0(\rho)$ and $s = c_1(\rho)$, which are the endpoints used to compute distances within the \textsc{SubPartition} subroutine. It follows that $\Delta(\rho) = s - r$. The algorithm enforced that for any $a, a' \in \cA(\rho) \times \cA(\rho)$, it must be that $\hat{\cD}_{r}^{s}(a,y) \leq \frac{3}{16} L (v-u)$ and $\hat{\cD}_{r}^s(a',y) \leq \frac{3}{16} L (v-u)$.

Conditioned on the good events $\cG$, 
\[|\hat{\cD}_u^v(a,y) - \cD_u^v(a,y)| \leq  \frac{1}{8} L (v-u) ~\text{ and }~
|\hat{\cD}_u^v(a',y) - \cD_u^v(a',y)| \leq  \frac{1}{8} L (v-u).\]

We can verify that $\cD_s^r(\cdot)$ is indeed a proper metric amongst the arms as it is simply a normalized L2 norm of the difference of associated vectors. Therefore by triangle inequality, 
\begin{align}
\cD_u^v(a,a') &\leq \cD_u^v(a,y) + \cD_u^v(a',y) \\
&\leq \frac{5}{16} L (v-u) + \frac{5}{16} L (v-u) = \frac58 L (v-u). \label{eq:D_pairwise_UB}
\end{align}

Recall that 
\begin{align*}
\cD_u^v(a, a') = \sqrt{ \tfrac{1}{\disc}\sum_{i \in [\disc]} \Big( f_a(z_i(u,v))-f_{a'}(z_i(u,v)) \Big)^2} ~\text{ for }~
z_i(u,v) = \left(1-\tfrac{i}{\disc}\right) r + \tfrac{i}{\disc} s
\end{align*}

As $[u,v]$ is a compact set, it must achieve its supremum, and we let $x^*$ denote the point that achieves supremum, i.e.
\[|f_a(x^*) - f_{a'}(x^*)| = \sup_{x \in [u,v]} |f_a(x) - f_{a'}(x)|.\]
As $f_a$ and $f_{a'}$ are $L$-Lipschitz, the absolute value of their difference must be $2L$-Lipschitz. 

For $z_i(u,v)$ as defined above, by Lipschitzness,
\begin{align*}
|f_a(z_i(u,v))-f_{a'}(z_i(u,v))| &\geq \max(|f_a(x^*) - f_{a'}(x^*)| - 2L |z_i(u,v) - x^*|, 0)
\end{align*}

Therefore, 
\begin{align*}
\left(\cD_u^v(a, a')\right)^2 &= \frac{1}{\disc}\sum_{i \in [\disc]} \Big( f_a(z_i(u,v))-f_{a'}(z_i(u,v)) \Big)^2 \\
&\geq \frac{1}{\disc}\sum_{i \in [\disc]} \Big(\max \big(|f_a(x^*) - f_{a'}(x^*)| - 2L |z_i(u,v) - x^*|, 0\big) \Big)^2 \\
&\geq \min_{x \in [u,v]} \frac{1}{\disc}\sum_{i \in [\disc]} \Big(\max \big(|f_a(x^*) - f_{a'}(x^*)| - 2L |z_i(u,v) - x|, 0\big) \Big)^2
\end{align*}

Argue that this is lower bounded by choosing $x = s$ (include a picture?) so that 
\[|z_i(u,v) - x| = \left(1-\frac{i}{\disc}\right) (v-u)\]
and by rearranging the indices we get
\begin{align*}
\left(\cD_u^v(a, a')\right)^2 &\geq
\frac{1}{\disc}\sum_{i = 0}^{\disc} \Big(\max \big(|f_a(x^*) - f_{a'}(x^*)| - \frac{2L (v-u) i}{\disc}, 0\big) \Big)^2
\end{align*}

Let us define
\[i_{\max} := \min\left(\left\lfloor\frac{\disc|f_a(x^*) - f_{a'}(x^*)|}{2L (v-u)} \right\rfloor, \disc \right)\]
then we can lower bound $\left(\cD_u^v(a, a')\right)^2$ by 
\begin{align*}
&\frac{1}{\disc}\sum_{i = 0}^{i_{\max}} \Big(|f_a(x^*) - f_{a'}(x^*)| - \frac{2L (v-u) i}{\disc} \Big)^2 \\
&= \tfrac{i_{\max} + 1}{\disc} |f_a(x^*) - f_{a'}(x^*)|^2
- \tfrac{2L |f_a(x^*) - f_{a'}(x^*)| (v-u)}{\disc^2} \sum_{i = 0}^{i_{\max}} i 
+ \tfrac{4L^2 (v-u)^2}{\disc^3} \sum_{i = 0}^{i_{\max}} i^2
\end{align*}

Recall that 
\[\sum_{i = 0}^{i_{\max}} i  =  \frac{i_{\max} (i_{\max}+1)}{2} \leq  \frac{(i_{\max}+1)^2}{2} \]
and
\[ \sum_{i = 0}^{i_{\max}} i^2 = \frac{i_{\max}(i_{\max}+1)(2i_{\max}+1)}{6} \geq  \frac{i_{\max}^3}{3}\]

Therefore, if $\disc < \left\lfloor\frac{\disc|f_a(x^*) - f_{a'}(x^*)|}{2L(v-u)} \right\rfloor$,
\begin{align}
&\left(\cD_u^v(a, a')\right)^2 \\
&\geq \tfrac{\disc + 1}{\disc} |f_a(x^*) - f_{a'}(x^*)|^2
- \tfrac{2L |f_a(x^*) - f_{a'}(x^*)| (v-u)(\disc+1)^2}{2 \disc^2}
+ \tfrac{4L^2(v-u)^2}{3} \\
&\geq |f_a(x^*) - f_{a'}(x^*)|^2
- (\tfrac{4}{3})^2 L |f_a(x^*) - f_{a'}(x^*)| (v-u)
+ \tfrac{4}{3} L^2 (v-u)^2 \\
&= \left(|f_a(x^*) - f_{a'}(x^*)| - \tfrac{8 L (v-u)}{9}\right)^2
+ \tfrac{44}{81} L^2 (v-u)^2 \label{eq:D2_bd1}
\end{align}
We have shown in \eqref{eq:D_pairwise_UB} that conditioned on the good event $\cG$, $\cD_u^v(a,a') \leq \frac58 L (v-u)$, which would violate \eqref{eq:D2_bd1}, as $(\frac58)^2 \leq \frac{44}{81}$.

Thus, conditioned on the good event $\cG$, for a pair of arms $a,a' \in \cA(\rho)\times \cA(\rho)$, it must be that $\disc > \left\lfloor\frac{\disc|f_a(x^*) - f_{a'}(x^*)|}{2L (v-u)} \right\rfloor$, such that
\begin{align}
&\left(\cD_u^v(a, a')\right)^2 \\
&\geq \tfrac{1}{\disc}\left(\frac{|f_a(x^*) - f_{a'}(x^*)|^2 \disc}{2L (v-u)}\right) |f_a(x^*) - f_{a'}(x^*)| \\
&\quad - \tfrac{L |f_a(x^*) - f_{a'}(x^*)| (v-u)}{\disc^2} \left(\frac{\disc|f_a(x^*) - f_{a'}(x^*)|}{2L (v-u)}+1\right)^2 \\
&\quad + \tfrac{4L^2 (v-u)^2}{3 \cdot \disc^3} \left(\frac{\disc|f_a(x^*) - f_{a'}(x^*)|}{2L (v-u)} -1 \right)^3 \\
&= \tfrac{|f_a(x^*) - f_{a'}(x^*)|^3}{L (v-u)} \left(\tfrac12 - \tfrac14 \left(1 + \tfrac{2L (v-u)}{\disc|f_a(x^*) - f_{a'}(x^*)|}\right)^2 + \tfrac16\left(1 - \tfrac{2L (v-u)}{\disc|f_a(x^*) - f_{a'}(x^*)|}\right)^3\right) \label{eq:D2_Dinfty}
\end{align}

Suppose that $|f_a(x^*) - f_{a'}(x^*)| > L (v-u)$. We would arise at a contradiction because plugging this bound into \eqref{eq:D2_Dinfty} would result in 
\begin{align*}
\left(\cD_u^v(a, a')\right)^2 &> L^2 (v-u)^2 \left(\frac12 - \frac14 (1.01)^2 + \frac16 (0.99)^3\right) > \left(\frac{5}{8} L (v-u) \right)^2.
\end{align*}

Therefore $\cD_u^v(a, a') \leq \frac58 L (v-u)$ must imply that
\begin{align*}
|f_a(x^*) - f_{a'}(x^*)| := \sup_{x \in [c_0(\rho), c_1(\rho)]} |f_a(x) - f_{a'}(x)| \leq L \Delta(\rho).
\end{align*}

Let us denote $(\ovx,\ova) = \argmax_{(x,a) \in \rho} f_a(x)$ and 
$(\undx,\unda) = \argmin_{(x,a) \in \rho} f_a(x)$. Then
\begin{align}
diam(\rho) &= f_{\ova}(\ovx) - f_{\unda}(\undx) \\
&\leq |f_{\ova}(\ovx) - f_{\ova}(\undx)| + |f_{\ova}(\undx) - f_{\unda}(\undx)| \\
&\leq L \Delta(\rho) + \max_{a,a' \in \cA(\rho)^2} \sup_{x \in [c_0(\rho), c_1(\rho)]} |f_a(x) - f_{a'}(x)| \label{eq:biasBd1} \\
&\leq 2L \Delta(\rho)
\end{align}

\end{proof}

\begin{proof}[Proof of Lemma \ref{lemma:n_UCB_bd}]
Recall that a ball is flagged (after being played) the first trial that
\[n_{t}(\rho) > \frac{6 \sigma^2 \ln(T)}{L^2\Delta^2(\rho)},\]
and subsequently it is removed from $\cP$ and no longer active, so 
the flagging condition is triggered exactly when 
\[n_t(\rho) = \left\lfloor\frac{6\sigma^2\ln(T)}{L^2\Delta^2(\rho)}+1\right\rfloor.\]
Since the ball is played one last trial, the total number of trials the ball is played via the UCB rule over time horizon $T$ will be 
\[\sum_{t=1}^{\tflag(\rho)} \Ind{\rho_t = \rho} = \left\lfloor\frac{6\sigma^2\ln(T)}{L^2\Delta^2(\rho)} + 1\right\rfloor + 1 \leq \frac{6\sigma^2\ln(T)}{L^2\Delta^2(\rho)} + 2.\]
The second terms in the upper bound of Lemma \ref{lemma:n_UCB_bd} are derived by considering when the ball must be ``flagged''.

Next, we consider when the ball must be so suboptimal such that it is no longer chosen by the UCB rule. Conditioned on the ``good events'' $\cG$, for all $\rho \in \cP$ and $t \in [\tflag(\rho)]$, i.e. the ball is not yet flagged,
\[\mu_t(\rho) \in \left[\min_{(x,a) \in \rho} f_a(x) - \conf{t}{\rho}, \max_{(x,a) \in \rho} f_a(x) + \conf{t}{\rho}\right]\]
such that
\[UCB_t(\rho) \in \left[\min_{(x,a) \in \rho} f_a(x)+2L\Delta(\rho), \max_{(x,a) \in \rho} f_a(x) + 2L\Delta(\rho) + 2\conf{t}{\rho}\right].\]
Thus
\[UCB_t(\rho) \geq \min_{(x,a) \in \rho} f_a(x)+2L\Delta(\rho) = \max_{(x,a) \in \rho} f_a(x) - diam(\rho) + 2L\Delta(\rho)\]
and
\[UCB_t(\rho) \leq \max_{(x,a) \in \rho} f_a(x)+2L\Delta(\rho) + 2\conf{t}{\rho} = \min_{(x,a) \in \rho} f_a(x) + diam(\rho) + 2L\Delta(\rho) + 2\conf{t}{\rho}.\]

Suppose at trial $t$, context $x_t$ arrives and ball $\rho_t$ is chosen by the algorithm via the UCB rule. Let us denote $a^* = \argmax_{a \in [K]} f_a(x_t)$, and let $\rho^*$ denote the ball which contains $a^*$. Then 
\[UCB_t(\rho^*) \geq f^*(x_t) - diam(\rho^*) + 2L\Delta(\rho^*).\]
By Lemma \ref{lemma:bias_UB}, conditioned on the good events $\cG$, $diam(\rho^*) \leq 2L\Delta(\rho^*)$, thus 
\[UCB_t(\rho^*) \geq f^*(x_t).\]

By conditioning on the good events $\cG$, and by Lemma \ref{lemma:bias_UB}, it also must hold that 
\begin{align}
UCB_t(\rho_t) &\leq \min_{(x,a) \in \rho_t} f_a(x) + diam(\rho_t) + 2L\Delta(\rho_t) + 2\conf{t}{\rho_t} \\
&\leq \min_{(x,a) \in \rho_t} f_a(x) + 4L\Delta(\rho_t) + 2\conf{t}{\rho_t}.
\end{align}
Thus,
\[\min_{a \in \cA(\rho_t)} f_a(x_t) \geq UCB_t(\rho_t) - 4L\Delta(\rho_t) - 2\conf{t}{\rho_t}.\]

If $\rho_t$ was chosen by the algorithm via the UCB rule, it therefore must imply that 
\[UCB_t(\rho) \geq UCB_t(\rho^*)\]
and thus 
\[f^*(x_t) - \min_{a \in \cA(\rho_t)} f_a(x_t) \leq 4 L\Delta(\rho_t) + 2\conf{t}{\rho_t}.\]
This implies that 
\[gap(\rho) = \min_{(x,a) \in \rho} \left(f^*(x) - f_a(x)\right) \leq 4 L\Delta(\rho_t) + 2\conf{t}{\rho_t}\]


Therefore if $\rho$ is selected by the UCB rule at trial $t$, it must be that
\begin{align*}
n_t(\rho_t) \leq \frac{24 \sigma^2 \ln(T)}{(gap(\rho) - 4 L\Delta(\rho_t))^2}.
\end{align*}

\end{proof}

\begin{proof}[Proof of Lemma \ref{lemma:gap_UB}]

Note that $\max_{a \in \cA(\rho)} (f^*(x) - f_a(x))$ is a $2L$-Lipschitz function such that
\begin{align}
&\max_{x \in [c_0(\rho),c_1(\rho)]} \max_{a \in \cA(\rho)} (f^*(x) - f_a(x)) \\
&\leq \min_{x \in [c_0(\rho),c_1(\rho)]} \max_{a \in \cA(\rho)} (f^*(x) - f_a(x)) + 2L\Delta(\rho) \\
&\leq \min_{x \in [c_0(\rho),c_1(\rho)]} \min_{a \in \cA(\rho)} (f^*(x) - f_a(x)) + diam(\rho) + 2L\Delta(\rho) \\
&= gap(\rho) + diam(\rho) + 2L\Delta(\rho). \label{eq:max_regret}
\end{align}

Conditioned on the ``good events'' holding, by Lemma \ref{lemma:n_UCB_bd}, if a ball $\rho$ is flagged at trial $t = \tflag(\rho)$, then it must be that 
\begin{equation}
n_t(\rho) = \left\lfloor \frac{6\sigma^2 \ln(T)}{L^2 \Delta^2(\rho)} +1\right\rfloor \leq \frac{24 \sigma^2 \ln(T)}{(gap(\rho) - 4 L \Delta(\rho))^2}.
\end{equation}
If the above inequality does not hold, then it would imply that the ball is so suboptimal that it stop being chosen by the UCB rule before hitting the threshold for flagging.

This implies 
\begin{equation}
\frac{6\sigma^2\ln(T)}{L^2 \Delta^2(\rho)} \leq \frac{24\sigma^2\ln(T)}{(gap(\rho) - 4 L \Delta(\rho))^2}
\end{equation}
and thus 
\begin{align}
gap(\rho) \leq 6L\Delta(\rho).
\end{align}

By Lemma \ref{lemma:bias_UB} and \eqref{eq:max_regret}, it follows that 
\begin{align*}
\max_{(x,a) \in \rho} (f^*(x) - f_a(x)) &\leq gap(\rho) + diam(\rho) + 2L\Delta(\rho) \\
&\leq 6 L \Delta(\rho) + 2 L \Delta(\rho) + 2 L\Delta(\rho) \\
&= 10 L \Delta(\rho).
\end{align*}
This implies that the maximum regret for a ball who is eventually flagged is upper bounded by $10 L \Delta(\rho)$. For any subsequent children balls $\rho'$ that are formed by subpartitioning $\rho$, it must follow that 
\begin{align}
gap(\rho') &\leq \max_{(x,a) \in \rho} (f^*(x) - f_a(x)) \\
&\leq 10 L \Delta(\rho) \\
&= 20 L \Delta(\rho')
\end{align}

\end{proof}

\begin{proof}[Proof of Lemma \ref{lemma:n_cluster_bd}]

The length of trial that a ball $\rho$ stays in the flagged phase before triggering the \textsc{SubPartition} subroutine depends on the number of samples collected for this ball until the condition $\prod_{a \in \cA(\rho)}\textsc{SuffData}(a) == 1$. Essentially this condition considers the 64 equal sized intervals that split the context space $[c_0(\rho),c_1(\rho)]$ and checks that in each there is at least $k$ points sampled for each arm $a \in \cA(\rho)$ within each of those 64 subintervals.

Due to the fact that the algorithm always gives priority to flagged balls, and furthermore flagged balls with larger context width are always given priority, it follows that the context of samples collected for ball $\rho$ within its flagged phase must be distributed uniformly within $[c_0(\rho),c_1(\rho)]$. If $\rho$ is the only ball flagged that intersects with the context interval $[c_0(\rho),c_1(\rho)]$, then it is given first priority such that any context that falls within this interval will be assigned to $\rho$. As the contexts arrive uniformly sampled over $[0,1]$, the set of contexts restricted to $[c_0(\rho),c_1(\rho)]$ will also be uniformly distributed.

As the context widths are always split into half, the endpoints must be equal to $\ell 2^{-i}$ for some integers $i$ and $\ell$. In the case that there is some other ball $\rho'$ which intersects with $[c_0(\rho),c_1(\rho)]$, either it has smaller context width and thus must be fully contained within $[c_0(\rho),c_1(\rho)]$, or it has larger context width and must be a strict superset fully eclipsing $[c_0(\rho),c_1(\rho)]$. It is impossible for there to be another flagged ball $\rho'$ with the exact same context width $[c_0(\rho),c_1(\rho)]$, because whichever ball was flagged first, would cause the algorithm to give full priority to that flagged ball, so that it would be impossible while that ball is still flagged, for any context in $[c_0(\rho),c_1(\rho)]$ to be assigned to the other ball to trigger it to be flagged. If $\rho'$ has smaller width, then $\rho'$ is given lower priority, and no samples will be assigned to $\rho'$ until $\rho$ is subpartitioned and unflagged. If $\rho'$ has larger width, then $\rho'$ will be given higher priority over the entire interval $[c_0(\rho),c_1(\rho)]$, such that no samples will be assigned to $\rho$ until $\rho'$ is subpartitioned and unflagged. 

As a result, in each trial $t$, either $\rho$ has priority on the entire context interval $[c_0(\rho),c_1(\rho)]$ and thus receives samples uniformly within that interval, or $\rho$ does not have priority on any subset of the interval $[c_0(\rho),c_1(\rho)]$ such that it receives no samples at all, guarantees that the eventual set of samples collected during the flagged phase must be distributed uniformly on $[c_0(\rho),c_1(\rho)]$. As a result, the probability that each sample collected for $\rho$ falls into any of the 64 subintervals of $[c_0(\rho),c_1(\rho)]$ is evenly 1/64. By coupon collector, the expected number of samples until we get one sample in each bucket is $64 \sum_{i=1}^{64} \frac{1}{i}$. A naive upper bound on the number of samples until we get $k$ samples in each bucket is $64 k \sum_{i=1}^{64} \frac{1}{i} \leq 304 k$.

As there are $|\cA(\rho)|$ arms, each of which needs to satisfy $\textsc{SuffData}(a) == 1$, 
\begin{align}
\E{\sum_{t = \tflag(\rho) + 1}^T \Ind{\rho_t = \rho} ~|~ \tflag(\rho) \leq T} 
\leq 304 k |\cA(\rho)|
= \frac{304 \cdot 5431 \cdot \sigma^2 |\cA(\rho)| \ln(T|\cA(\rho)|)}{L^2 \Delta^2(\rho)}
\end{align}

\end{proof}

\section{Bounding Probability of Bad Events}

\begin{proof} [Proof of Lemma \ref{lemma:bad_event_1}]
Recall that by definition, 
\begin{align*}
\mu_t(\rho) &= \frac{1}{n_t(\rho)} \sum_{s=1}^{t-1} \Ind{\rho_s =\rho} \pi_s \\
&= \frac{1}{n_t(\rho)} \sum_{s=1}^{t-1} \Ind{\rho_s =\rho} (f_{a_s}(x_s) + \epsilon_s) 
\end{align*}

As $(x_s,a_s) \in \rho$ if $\rho_s = \rho$, it follows that 
\[\frac{1}{n_t(\rho)} \sum_{s=1}^{t-1} \Ind{\rho_s =\rho} f_{a_s}(x_s) \in \left[\min_{(x,a) \in \rho} f_a(x), \max_{(x,a) \in \rho} f_a(x)\right].\]

It remains to be shown that with high probability, for all $t$,
\[|\frac{1}{n_t(\rho)} \sum_{s=1}^{t-1} \Ind{\rho_s =\rho} \epsilon_s | \leq \conf{t}{\rho}.\]
Note that we really only need to concern ourselves with values of $t$ after which the ball has been chosen at least once. Otherwise, before the ball has been chosen yet, there are no terms to sum over, thus trivially zero is bounded above by the confidence bound.

For some ball $\rho$, let us denote the sequence
\[(\tau_1, \tau_2, \tau_3, ...)\]
where $\tau_s$ corresponds to the $s$-th trial that ball $\rho$ is chosen by the algorithm, i.e.
\[\sum_{t=1}^{\tau_s} \Ind{\rho_t = \rho} = s ~\text{ and }~ \rho_{\tau_s} = \rho.\]
By Doob's optional skipping theorem,
\[(\epsilon_{\tau_1}, \epsilon_{\tau_2}, \epsilon_{\tau_3}, ...)\]
is identically distributed to
\[(\epsilon_1, \epsilon_2, \epsilon_3, ...).\]

Therefore, by Doob's optional skipping theorem, union bound, and Hoeffding's inequality,
\begin{align*}
&\Prob(\forall ~s \in [T], |\frac{1}{s} \sum_{\ell=1}^s \epsilon_{\tau_{\ell}}| \leq  \sqrt{\frac{6\sigma^2\ln(T)}{s}}) \\
&=\Prob(\forall ~s \in [T], |\frac{1}{s} \sum_{\ell=1}^s \epsilon_{\ell}| \leq  \sqrt{\frac{6\sigma^2\ln(T)}{s}}) \\
&\leq \sum_{s \in [T]} \Prob(|\sum_{\ell=1}^s \epsilon_{\ell}| \leq  \sqrt{\frac{6 s^2\sigma^2\ln(T)}{s}}) \\
&\leq \sum_{s \in [T]} 2 \exp\left(- 3 \ln(T)\right) \\
&\leq 2 T^{-2}
\end{align*}

There are at most $T$ active balls over the course of the algorithm, thus by union bound over all active balls $\rho$ over the course of the algorithm, with probability at least $1-2 T^{-1}$, for all active balls $\rho$, for all $t: n_{t}(\rho) \geq 1$, 
\[\mu_t(\rho) \in \left[\min_{(x,a) \in \rho} f_a(x) - \conf{t}{\rho}, \max_{(x,a) \in \rho} f_a(x) + \conf{t}{\rho}\right].\]

\end{proof}

\begin{proof}[Proof of Lemma \ref{lemma:bad_event_2}]

Lemma \ref{lemma:Dhat_conc} proves that each trial $\hat{\cD}_u^v(a,y)$ is evaluated within the subroutine $\textsc{SubPartition}(\rho)$ over the course of the algorithm,
\[\left\{|\hat{\cD}_u^v(a,y) - \cD_u^v(a,y)| \leq \frac{1}{8} L (v-u)\right\}\]
with probability at least $1 - 4 T^{-2} |\cA(\rho)|^{-2}$.

Within a subroutine $\textsc{SubPartition}(\rho)$, the maximum number of trials that $\hat{\cD}_u^v(a,y)$ could be evaluated is $|\cA(\rho)|^{-2}$. There are maximally $T$ trials that subroutine \textsc{SubPartition} can be called. By union bound, with probability $1- 4T^{-1}$, over the entire course of the algorithm, for all trials that $\hat{\cD}_u^v(a,y)$ is evaluated, 
\[\left\{|\hat{\cD}_u^v(a,y) - \cD_u^v(a,y)| \leq \frac{1}{8} L (v-u)\right\}.\]

\end{proof}

\begin{lemma} \label{lemma:Dhat_conc}
Each trial $\hat{\cD}_u^v(a,y)$ is evaluated within the subroutine \textsc{SubPartition} over the course of the algorithm,
\[\left\{|\hat{\cD}_u^v(a,y) - \cD_u^v(a,y)| \leq \frac{1}{8} L (v-u)\right\}\]
with probability at least $1 - 4 T^{-2} |\cA(\rho)|^{-2}$.
\end{lemma}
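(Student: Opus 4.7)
The plan is to bias--variance decompose the $k$-NN estimator and reduce the claim to concentration of a quadratic form plus two linear functionals of Gaussian vectors. I would first condition on the sample locations $\{x_s\}_{s=\tflag(\rho)+1}^{\tcluster(\rho)}$ so that each $k$-NN neighborhood becomes deterministic, and write $\hat{f}_a(z_i)=f_a(z_i)+b_a(z_i)+\eta_a(z_i)$, where $b_a(z_i)=\tfrac{1}{k}\sum_{s\in S_k(z_i,a)}(f_a(x_s)-f_a(z_i))$ is a deterministic bias and $\eta_a(z_i)=\tfrac{1}{k}\sum_{s\in S_k(z_i,a)}\epsilon_s\sim N(0,\sigma^2/k)$ is a zero-mean Gaussian. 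The trigger $\prod_a\textsc{SuffData}(a)=1$ places at least $k$ samples for arm $a$ in each of the $64$ equal subintervals of $[c_0(\rho),c_1(\rho)]$ of width $\Delta(\rho)/64$, so every $k$-nearest neighbor of any $z_i\in[u,v]$ for arm $a$ lies within distance $\Delta(\rho)/64=(v-u)/32$ of $z_i$; combined with $L$-Lipschitzness this yields $|b_a(z_i)|\le L(v-u)/32$ deterministically, and likewise for $y$.

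Next, I would set $g_i=f_a(z_i)-f_y(z_i)$, $\delta_i=b_a(z_i)-b_y(z_i)$, $\xi_i=\eta_a(z_i)-\eta_y(z_i)$, so that $\hat{f}_a(z_i)-\hat{f}_y(z_i)=g_i+\delta_i+\xi_i$. Expanding the square gives
\[\hat{\cD}_u^v(a,y)^2-\cD_u^v(a,y)^2=\tfrac{1}{\disc}\|\delta\|^2+\tfrac{2}{\disc}\langle g,\delta\rangle+\Bigl(\tfrac{1}{\disc}\|\xi\|^2-\tfrac{2\sigma^2}{k}\Bigr)+\tfrac{2}{\disc}\langle\delta,\xi\rangle+\tfrac{2}{\disc}\langle g,\xi\rangle.\]
The first two terms are deterministic and are controlled by $(L(v-u)/16)^2$ and, via Cauchy--Schwarz, by $2\cD_u^v(a,y)\cdot L(v-u)/16$. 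For the three stochastic pieces, observe that $\xi=M_a\vec\epsilon_a-M_y\vec\epsilon_y$, where $M_a,M_y$ are the $\disc\times(\cdot)$ $k$-NN averaging matrices (each row has exactly $k$ nonzero entries equal to $1/k$, giving $\|M_a\|_F^2=\disc/k$ and $\|M_a^\top M_a\|_{\mathrm{op}}\le\disc/k$), and $\vec\epsilon_a,\vec\epsilon_y$ are independent Gaussian vectors with coordinate variance $\sigma^2$. Hanson--Wright applied to the quadratic form $\|\xi\|^2$ and standard Gaussian tail bounds for the linear forms $\langle g,\xi\rangle$ and $\langle\delta,\xi\rangle$ each hold with failure probability at most $T^{-2}|\cA(\rho)|^{-2}$ once the tail parameter is set to $C\sqrt{\ln(T|\cA(\rho)|)}$.

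Finally, the choice $k=5431\sigma^2\ln(T|\cA(\rho)|)/(L^2\Delta(\rho)^2)$ together with $\Delta(\rho)=2(v-u)$ makes $\sigma^2/k$ a small constant multiple of $L^2(v-u)^2/\ln(T|\cA(\rho)|)$, and the constants $5431$ and $\disc=200$ are tuned precisely so that on the intersection of the concentration events each of the five error pieces is bounded by a suitably small multiple of $L(v-u)\cdot(\cD_u^v(a,y)+L(v-u))$. To convert the squared-distance bound into a bound on $|\hat{\cD}_u^v(a,y)-\cD_u^v(a,y)|$ I would case split on the size of $\cD_u^v(a,y)$: when $\cD_u^v(a,y)\le L(v-u)$ use $|\hat{\cD}-\cD|\le\sqrt{|\hat{\cD}^2-\cD^2|}$, and when $\cD_u^v(a,y)> L(v-u)$ use $|\hat{\cD}-\cD|=|\hat{\cD}^2-\cD^2|/(\hat{\cD}+\cD)\le|\hat{\cD}^2-\cD^2|/\cD_u^v(a,y)$ so the dominant cross term $2\cD_u^v(a,y)\cdot L(v-u)/16$ divides down to $L(v-u)/8$. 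A union bound over the four high-probability concentration events yields the claimed $1-4T^{-2}|\cA(\rho)|^{-2}$. The main obstacle is the quadratic-form concentration: because the $k$-NN neighborhoods of different $z_i$ overlap, the coordinates of $\xi$ are correlated across $i$, so one cannot treat the $\xi_i$ as independent chi-squared summands; expressing $\xi$ as a linear image of the underlying independent noise $\epsilon_s$'s and applying Hanson--Wright directly to the block-diagonal matrix $\mathrm{blkdiag}(M_a^\top M_a,M_y^\top M_y)$ (with the cross term vanishing in expectation by independence of $\vec\epsilon_a,\vec\epsilon_y$) is the step that requires the most care.
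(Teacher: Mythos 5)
Your approach is structurally the same as the paper's: decompose the $k$-NN estimate into true value, deterministic bias, and Gaussian noise; bound the bias deterministically via \textsc{SuffData} and Lipschitzness; control the cross and quadratic noise terms by Gaussian and Hanson--Wright-type concentration; and case-split on $\cD_u^v(a,y)$ to pass from a squared-distance bound to a distance bound. The paper packages this slightly differently --- it introduces an intermediate noiseless $k$-NN distance $\bar{\cD}_u^v$, bounds $|\cD - \bar{\cD}|$ deterministically, and only then concentrates $|\hat{\cD} - \bar{\cD}|$ --- but the two decompositions coincide up to regrouping (your $\langle g,\xi\rangle + \langle\delta,\xi\rangle$ is the paper's single linear term with $\bar f_a - \bar f_y$ as the deterministic multiplier).

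However, there is a concrete gap in the norm bounds you propose for the Hanson--Wright step. You bound $\|M_a^\top M_a\|_{\mathrm{op}}$ by $\|M_a\|_F^2 = \disc/k$. That is tight in the degenerate case where all $\disc$ rows share the same $k$-NN set, but much too loose here, and it does not close the argument with the stated parameter $k = 5431\,\sigma^2\ln(T|\cA(\rho)|)/(L^2\Delta(\rho)^2)$. The key structural fact you are not using is that, because every $k$-NN neighbor of $z_i$ lies within $(v-u)/32$ of $z_i$, the neighborhoods of $z_i$ and $z_j$ are \emph{disjoint} whenever $|z_i-z_j| > (v-u)/16$, so the $\disc\times\disc$ covariance $QQ^\top$ of $\xi$ is banded with bandwidth $\approx \disc/16$. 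Exploiting this banding (as the paper does via Gershgorin/H\"older) yields $\|\sigma^2 QQ^\top\|_2 \le \disc\sigma^2/(4k)$ and $\|\sigma^2 QQ^\top\|_F^2 \le \disc^2\sigma^4/(2k^2)$, roughly a factor $8$ better than the crude Frobenius bound on the spectral norm. That factor $8$ is not slack: plugging your looser $\|\Sigma\|_2\le 2\disc\sigma^2/k$ into the sub-exponential tail gives failure probability on the order of $(T|\cA(\rho)|)^{-1/4}$ rather than $(T|\cA(\rho)|)^{-2}$ at the level $\delta = 4\sigma^2\ln(T|\cA(\rho)|)/k$, so either the probability guarantee degrades or $k$ must grow, neither of which matches the lemma as stated. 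The same banded-covariance observation is needed for the linear cross term to get $\mathrm{Var}\le \sigma^2\bar\cD^2/k$ rather than an $8\times$ larger variance. So the missing step is precisely the locality of the $k$-NN overlap; once that is incorporated, your plan goes through.
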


\begin{proof}

Consider a single call to the subroutine \textsc{SubPartition} for a ball $\rho$. By construction, the subroutine is only called when $\prod_{a \in \cA(\rho)}\textsc{SuffData}(a) == 1$, where 
\begin{align}
\textsc{SuffData}(a) = \prod_{i=1}^{64} \Ind{\sum_{s > \tflag(\rho)} \Ind{\rho_s = \rho, a_s = a} \Ind{x_s \in [w_{i-1},w_i]} \geq k}
\end{align}
for $w_i = c_0(\rho) + \frac{i \Delta(\rho)}{64}$. As a result, by construction, each trial $\hat{\cD}_u^v(a,y)$ is evaluated within the subroutine \textsc{SubPartition}, if we take the interval $[u,v]$ and split it evenly into 32 subintervals, the algorithm guarantees that for each of the 32 subintervals, for each arm $a$ and $y$, there are at least $k$ samples (or observations) collected for $\rho$ during the ``flagged phase'' such that the context lies within the subinterval. As our algorithm estimates the reward functions $\hat{f}_a$ and $\hat{f}_y$ via $k$ nearest neighbor averaging, this condition guarantees a minimum bias.

Recall that 
\begin{align}
\hat{\cD}_u^v(a, y) := \sqrt{ \tfrac{1}{\disc}\textstyle\sum_{i \in [\disc]} \Big( \hat{f}_a(z_i(u,v))-\hat{f}_{y}(z_i(u,v)) \Big)^2 - \tfrac{2\sigma^2}{k}} 
\end{align}
for 
\begin{align}
\hat{f}_a(x) = \tfrac{1}{k}\textstyle\sum_{s=\tflag(\rho)+1}^{\tcluster(\rho)} \Ind{\rho_s = \rho, a_s = a} \Ind{x_s \in \text{ k-NN}} \pi_s,
\end{align}
where $x_s$ is a $k$ nearest neighbor datapoint for computing $\hat{f}_a(x)$ if
\[\Ind{x_s \in \text{ k-NN}} = \textstyle\sum_{\ell=\tflag(\rho)+1}^{\tcluster(\rho)} \Ind{\rho_{\ell} = \rho, a_{\ell} = a} \Ind{|x_{\ell} - x| \leq |x_s - x|} \leq k.\]

Recall that $\pi_s = f_{a_s}(x_s) + \epsilon_s$, where $\epsilon_s$ is an independent noise term distributed as $N(0,\sigma^2)$.
Let us denote 
\begin{align}
\bar{\cD}_u^v(a, y) := \sqrt{ \tfrac{1}{\disc}\textstyle\sum_{i \in [\disc]} \Big( \bar{f}_a(z_i(u,v))-\bar{f}_{y}(z_i(u,v)) \Big)^2} 
\end{align}
for 
\begin{align}
\bar{f}_a(x) = \tfrac{1}{k}\textstyle\sum_{s=\tflag(\rho)+1}^{\tcluster(\rho)} \Ind{\rho_s = \rho, a_s = a} \Ind{x_s \in \text{ k-NN}} f_a(x_s).
\end{align}
As a result of the condition $\prod_{a \in \cA(\rho)}\textsc{SuffData}(a) == 1$, for any $x$, if $x_s \in \text{ k-NN}$, then $x_s - x \leq \frac{v-u}{32}$, such that $|f_{a}(x) - f_a(x_s)| \leq \frac{L(v-u)}{32}$.
This implies that $|f_{a}(x) - \bar{f}_a(x)| \leq \frac{L(v-u)}{32}$

By triangle inequality and Lipschitzness,
\begin{align*}
&|\cD_u^v(a,y) - \bar{\cD}_u^v(a, y) | \\
&\leq \sqrt{ \tfrac{1}{\disc}\textstyle\sum_{i \in [\disc]} \Big( (f_a(z_i(u,v))-f_{y}(z_i(u,v))) - (\bar{f}_a(z_i(u,v)) - \bar{f}_{y}(z_i(u,v))) \Big)^2 } \\
&\leq \sqrt{ \tfrac{1}{\disc}\textstyle\sum_{i \in [\disc]} \Big( \frac{L(v-u)}{16} \Big)^2 } \\
&\leq \frac{L(v-u)}{16}.
\end{align*}
By Lemma \ref{lemma:D_conc}, with probability at least $1-2 T^{-2} |\cA(\rho)|^{-2}$, $|\bar{\cD}_u^v(a,y) - \hat{\cD}_u^v(a, y)| \leq \frac{L(v-u)}{16}$. It follows that 
\[|\cD_u^v(a,y) - \hat{\cD}_u^v(a, y) | \leq |\cD_u^v(a,y) - \bar{\cD}_u^v(a, y) |+|\bar{\cD}_u^v(a,y) - \hat{\cD}_u^v(a, y) | \leq \frac{L(v-u)}{8}.\]

\end{proof}

\begin{lemma} \label{lemma:D_conc}
With probability at least $1-4 T^{-2} |\cA(\rho)|^{-2}$,
\begin{align*}
|\hat{\cD}_u^v(a,y) - \bar{\cD}_u^v(a, y)|  \leq \frac{L(v-u)}{16}.
\end{align*}
\end{lemma}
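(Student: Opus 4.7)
The plan is to bound the squared-distance deviation $\hat{\cD}_u^v(a,y)^2 - \bar{\cD}_u^v(a,y)^2$ by decomposing it into a linear and a quadratic part in the underlying Gaussian noise, applying Gaussian and Hanson--Wright concentration respectively, and then translating the result to a bound on $|\hat{\cD}_u^v(a,y) - \bar{\cD}_u^v(a,y)|$. Set $\xi_i^a := \hat{f}_a(z_i) - \bar{f}_a(z_i) = \tfrac{1}{k}\sum_s W^a_{i,s}\epsilon_s$, where $W^a_{i,s}\in\{0,1\}$ is the $k$-NN indicator for arm $a$ at $z_i$; each $\xi_i^a$ is mean-zero Gaussian of variance $\sigma^2/k$, and because $\xi^a$ and $\xi^y$ depend on disjoint noise indices ($a_s=a$ versus $a_s=y$ are mutually exclusive), $\xi_i^a$ is independent of $\xi_j^y$ for every $i,j$. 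With $\beta_i := \xi_i^a - \xi_i^y$, expanding $(\bar{f}_a(z_i)-\bar{f}_y(z_i)+\beta_i)^2$ and averaging over $i\in[\disc]$ yields the exact identity
\[\hat{\cD}_u^v(a,y)^2 - \bar{\cD}_u^v(a,y)^2 = T_1 + T_2,\]
where $T_1 := \tfrac{2}{\disc}\sum_i(\bar{f}_a(z_i)-\bar{f}_y(z_i))\beta_i$ is linear in the Gaussian noise and $T_2 := \tfrac{1}{\disc}\sum_i\beta_i^2 - \tfrac{2\sigma^2}{k}$ is a centered quadratic form (the $-2\sigma^2/k$ correction built into $\hat{\cD}_u^v$ precisely cancels $\E{\beta_i^2}$).

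The linear term $T_1$ is Gaussian with variance $\tfrac{4}{\disc^2}\bar{\alpha}^\top\Sigma\bar{\alpha}$, where $\bar{\alpha}_i = \bar{f}_a(z_i)-\bar{f}_y(z_i)$ and $\Sigma$ is the covariance of $\{\beta_i\}$. The key geometric input is that the $\textsc{SuffData}$ condition guarantees at least $k$ samples per arm in each of the 32 equal subintervals of $[u,v]$, so the $k$-NN radius around any $z_i$ is at most $(v-u)/32$ and each sample $x_s$ participates in at most $O(\disc)$ of the $k$-NN neighborhoods $\{\mathrm{kNN}_a(z_i)\}_i$. A row-sum bound then yields $\|\Sigma\|_{\mathrm{op}} = O(\sigma^2\disc/k)$, and since $\|\bar{\alpha}\|^2 = \disc \cdot \bar{\cD}_u^v(a,y)^2$, this gives $\Var(T_1) = O(\sigma^2\bar{\cD}_u^v(a,y)^2/k)$. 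A standard Gaussian tail bound then gives $|T_1| \lesssim \bar{\cD}_u^v(a,y)\sqrt{\sigma^2\ln(T|\cA(\rho)|)/k}$ with probability at least $1-2T^{-2}|\cA(\rho)|^{-2}$. For $T_2$, write $T_2 = \tfrac{1}{\disc}(\tilde{\epsilon}^\top M\tilde{\epsilon} - \E{\tilde{\epsilon}^\top M\tilde{\epsilon}})$ for a block-diagonal PSD matrix $M = \mathrm{diag}((W^a)^\top W^a,(W^y)^\top W^y)/k^2$, and apply the Hanson--Wright inequality. The same $k$-NN overlap bound controls $\|M\|_{\mathrm{op}} = O(\disc/k)$ and $\|M\|_F = O(\disc/k)$ via $\|M_a\|_F^2 \leq \|M_a\|_{\mathrm{op}}\,\mathrm{tr}(M_a)$, yielding $|T_2| \lesssim (\sigma^2/k)(\sqrt{\ln(T|\cA(\rho)|)} + \ln(T|\cA(\rho)|))$ with probability at least $1-2T^{-2}|\cA(\rho)|^{-2}$.

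Plugging in $k = 5431\sigma^2\ln(T|\cA(\rho)|)/(L^2\Delta^2(\rho))$ with $\Delta(\rho)=2(v-u)$, both bounds reduce to $|T_1| \leq c_1\bar{\cD}_u^v(a,y)\cdot L(v-u)$ and $|T_2| \leq c_2 L^2(v-u)^2$ for small universal constants $c_1, c_2$. To convert to a bound on $|\hat{\cD}_u^v(a,y) - \bar{\cD}_u^v(a,y)|$ itself, I would split into two cases based on the size of $\bar{\cD}_u^v(a,y)$. When $\bar{\cD}_u^v(a,y) \gtrsim L(v-u)$, use $|\sqrt{A^+}-\sqrt{B}| \leq |A-B|/\sqrt{B}$ together with the $\bar{\cD}$-proportional bound on $T_1$ to absorb the $\bar{\cD}$ factor, producing the target $L(v-u)/16$. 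When $\bar{\cD}_u^v(a,y)$ is small, use $|\sqrt{A^+}-\sqrt{B}| \leq \sqrt{|A-B|}$ directly; the $\bar{\cD}$-proportionality of $|T_1|$ then ensures $|A-B| = O(L^2(v-u)^2)$ with small enough constant that $\sqrt{|A-B|} \leq L(v-u)/16$. A union bound of the two tail events gives the claimed probability $1-4T^{-2}|\cA(\rho)|^{-2}$.

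The main technical obstacle is the sharp tracking of the operator and Frobenius norms of $\Sigma$ and $M$, since the hidden constants in $c_1$ and $c_2$ above determine whether the final bound actually lands at $L(v-u)/16$ rather than merely $O(L(v-u))$. This requires translating the uniform-sample-density guarantee built into $\textsc{SuffData}$ (which was itself used in Lemma \ref{lemma:n_cluster_bd} to argue uniform context distribution during the flagged phase) into a precise overlap bound for the $k$-NN neighborhoods of the evaluation grid $\{z_i\}_{i\in[\disc]}$. The choice of constant $5431$ in $k$ and $\disc = 200$ are calibrated to this bookkeeping.
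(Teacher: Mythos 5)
Your proposal matches the paper's proof in all essentials: the same decomposition of $\hat{\cD}^2 - \bar{\cD}^2$ into a linear piece (your $T_1$, the paper's \eqref{eq:linear-error}) and a centered quadratic piece (your $T_2$, the paper's \eqref{eq:quadratic-term}); the same covariance row-sum/overlap argument from the $\textsc{SuffData}$ guarantee to control the variance of $T_1$; the same Gaussian-quadratic-form concentration (the paper states it as a sub-exponential tail bound, you cite it as Hanson--Wright -- these are the same tool here); and the same two-sided algebraic conversion from a bound on the squared distances to a bound on the distances themselves, taking whichever of the increasing-in-$\bar{\cD}$ or decreasing-in-$\bar{\cD}$ bound is smaller and optimizing the crossing point (the paper calls it $Q_2$, you phrase it as a case split at $\bar{\cD}\gtrless L(v-u)$). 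The union bound over the two tail events giving $1-4T^{-2}|\cA(\rho)|^{-2}$ is identical.

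One small technical slip, which does not damage the argument: the matrix $M$ in your quadratic form is not block-diagonal. Writing $\beta = \tfrac{1}{k}(W^a\epsilon^a - W^y\epsilon^y)$, you get $\sum_i\beta_i^2 = \tfrac{1}{k^2}\bigl[(\epsilon^a)^\top(W^a)^\top W^a\,\epsilon^a - 2(\epsilon^a)^\top(W^a)^\top W^y\,\epsilon^y + (\epsilon^y)^\top(W^y)^\top W^y\,\epsilon^y\bigr]$, so $M$ carries an off-diagonal cross block $-(W^a)^\top W^y/k^2$. What you need for Hanson--Wright is that $M$ is PSD (it is, being $\tfrac{1}{k^2}\bigl(\begin{smallmatrix}(W^a)^\top\\-(W^y)^\top\end{smallmatrix}\bigr)\bigl(W^a\;\;{-W^y}\bigr)$) with controlled trace, Frobenius norm, and operator norm; all three bounds go through unchanged since the cross block has the same sparsity pattern governed by the $k$-NN radius. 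The paper sidesteps this by working with the $\disc\times\disc$ matrix $QQ^\top$ instead of the $T\times T$ matrix $Q^\top Q$; the two share nonzero spectrum, so the norm bookkeeping is equivalent.
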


\begin{proof}

Let us denote $\cE(\rho,a,i) = \{t \in [\tflag(\rho)+1, \tcluster(\rho)] ~s.t.~ \rho_t = \rho, a_t = a, x_t \in \text{k-NN of } z_i(u,v) \}$.
Recall that by definition,
\begin{align*}
\hat{f}_a(z_i(u,v)) = \bar{f}_a(z_i(u,v)) + \tfrac{1}{k}\textstyle\sum_{t \in \cE(\rho,a,i)} \epsilon_t
\end{align*}

We split $|(\hat{\cD}_u^v(a,y))^2 - (\bar{\cD}_u^v(a, y))^2 |$ into two terms,
\begin{align}
&|(\hat{\cD}_u^v(a,y))^2 - (\bar{\cD}_u^v(a, y))^2 | \\
&= |\tfrac{1}{\disc}\textstyle\sum_{i \in [\disc]} \Big((\hat{f}_a(z_i(u,v))-\hat{f}_{y}(z_i(u,v)))^2 - (\bar{f}_a(z_i(u,v))-\bar{f}_{y}(z_i(u,v)))^2 \Big) - \tfrac{2\sigma^2}{k}| \\
&\leq |\tfrac{2}{\disc}\textstyle\sum_{i \in [\disc]} (\bar{f}_a(z_i(u,v))-\bar{f}_{y}(z_i(u,v))) \Big(\tfrac{1}{k}\textstyle\sum_{t \in \cE(\rho,a, i)} \epsilon_t - \tfrac{1}{k}\textstyle\sum_{t \in \cE(\rho,y, i)} \epsilon_t \Big)| \label{eq:linear-error} \\
&\qquad + |\tfrac{1}{\disc}\textstyle\sum_{i \in [\disc]} \Big(\tfrac{1}{k}\textstyle\sum_{t \in \cE(\rho,a, i)} \epsilon_t - \tfrac{1}{k}\textstyle\sum_{t \in \cE(\rho,y,i)} \epsilon_t \Big)^2 - \tfrac{2\sigma^2}{k}| \label{eq:quadratic-term}
\end{align}

By Lemmas \ref{lemma:linear_err_bd} and \ref{lemma:quadratic_err_bd}, with probability at least $1-4 T^{-2} |\cA(\rho)|^{-2}$, 
\[|(\hat{\cD}_u^v(a,y))^2 - (\bar{\cD}_u^v(a, y))^2 | \leq \bar{\cD}_u^v(a, y) \sqrt{\frac{4 \sigma^2 \ln(T |\cA(\rho)|)}{k}} + \frac{4 \sigma^2 \ln(T |\cA(\rho)|)}{k}.\]
Let us denote $Q_1 = \sqrt{\frac{4 \sigma^2 \ln(T |\cA(\rho)|)}{k}}$.

We consider two bounds for $|\bar{\cD}_u^v(a,y) - \hat{\cD}_u^v(a, y)|$,
\begin{align*}
|\hat{\cD}_u^v(a,y) - \bar{\cD}_u^v(a, y) | &\leq \hat{\cD}_u^v(a, y) + \bar{\cD}_u^v(a,y)\\
&\leq \sqrt{|(\hat{\cD}_u^v(a,y))^2 - (\bar{\cD}_u^v(a, y))^2| + (\bar{\cD}_u^v(a, y))^2} +\bar{\cD}_u^v(a, y) \\
&\leq \sqrt{Q_1 (\bar{\cD}_u^v(a, y) + Q_1) + (\bar{\cD}_u^v(a, y))^2} + \bar{\cD}_u^v(a, y)
\end{align*}
and
\begin{align*}
|\hat{\cD}_u^v(a,y) - \bar{\cD}_u^v(a, y) | &\leq \frac{|(\hat{\cD}_u^v(a,y))^2 - (\bar{\cD}_u^v(a, y))^2|}{\hat{\cD}_u^v(a, y) + \bar{\cD}_u^v(a, y)} \\
 &\leq \frac{|(\hat{\cD}_u^v(a,y))^2 - (\bar{\cD}_u^v(a, y))^2|}{\bar{\cD}_u^v(a, y)}\\
 &\leq \frac{Q_1 (\bar{\cD}_u^v(a, y) + Q_1)}{\bar{\cD}_u^v(a, y)}.
\end{align*}

The first bound is strictly increasing in $\bar{\cD}_u^v(a, y)$, whereas the second bound is strictly decreasing in $\bar{\cD}_u^v(a, y)$. Therefore for any $Q_2$,
\begin{align*}
|\hat{\cD}_u^v(a,y) - \bar{\cD}_u^v(a, y) | 
&\leq \min\left(\bar{\cD}_u^v(a, y) + \sqrt{Q_1 (\bar{\cD}_u^v(a, y) + Q_1) + (\bar{\cD}_u^v(a, y))^2}, Q_1 + \frac{Q_1^2}{\bar{\cD}_u^v(a, y)}\right) \\
&\leq \max\left(Q_2 + \sqrt{Q_1 Q_2 + Q_1^2 + Q_2^2}, Q_1 + \frac{Q_1^2}{Q_2} \right)
\end{align*}


Let us choose $Q_2 = \frac{1+\sqrt{13}}{6} Q_1$ so that 
\begin{align*}
|\hat{\cD}_u^v(a,y) - \bar{\cD}_u^v(a, y) | 
&\leq \frac{7+\sqrt{13}}{1+\sqrt{13}} Q_1 \\
&\leq \frac{7+\sqrt{13}}{1+\sqrt{13}} \sqrt{\frac{4 \sigma^2 \ln(T |\cA(\rho)|)}{k}}
\end{align*}

For
\[k = \frac{5431 \sigma^2 \ln(T |\cA(\rho)|)}{L^2(v-u)^2} \geq 16^2 \left(\frac{7+\sqrt{13}}{1+\sqrt{13}}\right)^2 \frac{4 \sigma^2 \ln(T |\cA(\rho)|)}{L^2(v-u)^2},\]
it follows that 
\[|\hat{\cD}_u^v(a,y) - \bar{\cD}_u^v(a, y) | \leq \frac{L(v-u)}{16}.\]
\end{proof}

\begin{lemma} \label{lemma:linear_err_bd}
With probability at least $1-2 T^{-2} |\cA(\rho)|^{-2}$,
\[\eqref{eq:linear-error} \leq \bar{\cD}_u^v(a, y) \sqrt{\frac{4 \sigma^2 \ln(T |\cA(\rho)|)}{k}}.\]
\end{lemma}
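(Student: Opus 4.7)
The key observation is that, conditional on the $\sigma$-algebra $\cF$ generated by all context arrivals $\{x_s\}$ and arm choices $\{a_s\}$ throughout the algorithm, the quantity \eqref{eq:linear-error} (call it $Z$) is a linear combination of the independent Gaussians $\{\epsilon_t\}$ with $\cF$-measurable coefficients: the functions $\bar{f}_a, \bar{f}_y$ and the $k$-nearest-neighbor sets $\cE(\rho,a,i)$ depend only on the contexts and arm choices, not on the noise. Hence $Z\mid\cF$ is a centered Gaussian, and the plan reduces to bounding $\Var(Z\mid\cF)$ and applying a one-dimensional Gaussian tail inequality.

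Writing $v_i := \bar{f}_a(z_i(u,v)) - \bar{f}_y(z_i(u,v))$ and $N_{i,t}^{(a)} := \Ind{t \in \cE(\rho,a,i)}$, interchanging the sums over $i$ and $t$ gives
\[Z = \tfrac{2}{k\disc}\textstyle\sum_t \epsilon_t \bigl(\textstyle\sum_i v_i (N_{i,t}^{(a)} - N_{i,t}^{(y)})\bigr).\]
Since each trial $t$ plays a single arm, $N_{i,t}^{(a)} N_{i,t}^{(y)} = 0$, so the conditional variance splits additively as
\[\Var(Z\mid\cF) = \tfrac{4\sigma^2}{k^2\disc^2}\textstyle\sum_t\bigl[(\textstyle\sum_i v_i N_{i,t}^{(a)})^2 + (\textstyle\sum_i v_i N_{i,t}^{(y)})^2\bigr].\]
Applying Cauchy-Schwarz to each inner sum, $(\sum_i v_i N_{i,t})^2 \leq (\sum_i N_{i,t})(\sum_i v_i^2 N_{i,t})$, and using the $k$-NN identity $\sum_t N_{i,t} = k$ together with a uniform bound $M := \max_t \sum_i N_{i,t}$, I obtain $\sum_t (\sum_i v_i N_{i,t})^2 \leq M k \|v\|^2$. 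Combined with $\|v\|^2/\disc = (\bar{\cD}_u^v(a,y))^2$, this yields $\Var(Z\mid\cF) \leq \tfrac{8 M \sigma^2}{k\disc}(\bar{\cD}_u^v(a,y))^2$.

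To bound $M = O(\disc)$, I would invoke the 1D $k$-NN geometry: the set of query points admitting a fixed observation $x_t$ among their $k$ nearest neighbors is a contiguous interval whose length is bounded by $k$ divided by (twice) the local observation density. The trigger $\prod_{a\in\cA(\rho)}\textsc{SuffData}(a)=1$ ensures at least $k$ observations per arm in each of the 64 equal subintervals of $[c_0(\rho),c_1(\rho)]$, yielding density $\geq 64k/\Delta(\rho)$ and hence interval length $\leq \Delta(\rho)/128$. Since the grid $\{z_i(u,v)\}$ has spacing $(v-u)/\disc = \Delta(\rho)/(2\disc)$, this interval contains at most $\disc/64$ grid points, so $M \leq \disc/64$ and $\Var(Z\mid\cF) \leq \sigma^2(\bar{\cD}_u^v(a,y))^2/(8k)$.

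Finally, the Gaussian tail inequality $\Prob(|Z| > u \mid \cF) \leq 2\exp(-u^2/(2\Var(Z\mid\cF)))$ applied with $u = \bar{\cD}_u^v(a,y)\sqrt{4\sigma^2\ln(T|\cA(\rho)|)/k}$ gives a conditional failure probability of at most $2T^{-2}|\cA(\rho)|^{-2}$ (in fact much smaller, leaving ample slack), and taking expectation over $\cF$ preserves the bound. I expect the main obstacle to be the geometric bound on $M$; once secured via the SuffData density argument, the rest is routine Gaussian concentration.
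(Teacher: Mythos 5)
Your approach is structurally different from the paper's: you interchange the sums over grid points $i$ and trials $t$, express the quantity as a single Gaussian linear combination, and bound its conditional variance via Cauchy--Schwarz and the $k$-NN identity $\sum_t N_{i,t}=k$. The paper instead writes the quantity as $\tfrac{2}{\disc}\sum_i Y_i$, computes $\Var[Y_i]=v_i^2\cdot 2\sigma^2/k$, and bounds $\sum_{i,i'}\Cov(Y_i,Y_{i'})$ by noting that $\Cov(Y_i,Y_{i'})=0$ unless $|z_i-z_{i'}|\le (v-u)/16$ together with $\Cov(Y_i,Y_{i'})\le\tfrac12(\Var Y_i+\Var Y_{i'})$, so the overlap count per grid point replaces your $M$. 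Both routes compute the same variance and hinge on the same geometric fact that the $k$-NN supports have bounded spread; your presentation via $M=\max_t\sum_i N_{i,t}$ is a clean and valid alternative (and in fact yields a slightly sharper constant than the paper's $\disc/8$ overlap count).

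The one place your argument is not correct as written is the claimed bound $M\le\disc/64$. The density heuristic is off in two ways: (i) with density $\varrho$ and (hypothetical) uniform spacing, the interval of $z$ for which $x_t$ lies in the $k$-NN window has length $(p_{m+k}-p_{m-k})/2\approx k/\varrho$, not $k/(2\varrho)$; and (ii) more importantly, $\textsc{SuffData}$ only guarantees $\ge k$ samples per subinterval, not a pointwise uniform density, so observations can clump inside a subinterval and the $k/\varrho$ heuristic does not apply. The robust bound is the one the paper uses implicitly: since each subinterval of width $(v-u)/32$ contains $\ge k$ observations of arm $a$, any $x_t$ in the $k$-NN set of $z_i$ satisfies $|x_t-z_i|\le (v-u)/32$, so the grid points $z_i$ admitting a fixed $x_t$ lie in an interval of width $(v-u)/16$ and hence $M\le\disc/16+1$. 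Plugging this in still gives $\Var(Z\mid\cF)\le\tfrac{8M\sigma^2}{k\disc}(\bar{\cD}_u^v)^2\le\tfrac{\sigma^2}{k}(\bar{\cD}_u^v)^2$ (with slack), so your conclusion holds; only the intermediate claim $M\le\disc/64$ needs to be replaced by this weaker but correct bound and its correct justification.
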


\begin{proof}
Note that $\tfrac{2}{\disc}\textstyle\sum_{i \in [\disc]} (\bar{f}_a(z_i(u,v))-\bar{f}_{y}(z_i(u,v))) \Big(\tfrac{1}{k}\textstyle\sum_{t \in \cE(\rho,a, i)} \epsilon_t - \tfrac{1}{k}\textstyle\sum_{t \in \cE(\rho,y, i)} \epsilon_t \Big)$ is simply a mean zero Gaussian random variable, thus we only need to compute the variance and then we can apply Hoefdding's Inequality.

Let us denote
\[Y_i = (\bar{f}_a(z_i(u,v))-\bar{f}_{y}(z_i(u,v))) \Big(\tfrac{1}{k}\textstyle\sum_{t \in \cE(\rho,a, i)} \epsilon_t - \tfrac{1}{k}\textstyle\sum_{t \in \cE(\rho,y, i)} \epsilon_t \Big),\]
such that the expression of interest to us can be restated as $\tfrac{2}{\disc}\textstyle\sum_{i \in [\disc]} Y_i$.
Note that
\begin{align*}
\Var[Y_i] = (\bar{f}_a(z_i(u,v))-\bar{f}_{y}(z_i(u,v)))^2 \frac{2\sigma^2}{k}
\end{align*}

Then 
\begin{align*}
\Var[\tfrac{2}{\disc}\textstyle\sum_{i \in [\disc]} Y_i] = \frac{4}{\disc^2}  \textstyle\sum_{i,i' \in [\disc]^2} \Cov(Y_i, Y_{i'}) 
\end{align*}

By construction, $\Cov(Y_i, Y_{i'})$ is zero if $|z_i(u,v) - z_{i'}(u,v)| > \frac{v-u}{16}$, as all the $k-NN$ points must be within distance $\frac{v-u}{32}$. Additionally,
\[\Cov(Y_i, Y_i') \leq \frac12 (\Var[Y_i] + \Var[Y_{i'}]).\]
Then
\begin{align*}
&\frac{4}{\disc^2} \textstyle\sum_{i,i' \in [\disc]^2} \Cov(Y_i, Y_i')  \\
&\leq \frac{4}{\disc^2} \textstyle\sum_{i,i' \in [\disc]^2} \frac12 (\Var[Y_i] + \Var[Y_{i'}]) \Ind{|z_i(u,v) - z_{i'}(u,v)| \leq \frac{v-u}{16}} \\
&\leq \frac{4}{\disc^2} \textstyle\sum_{i \in [\disc]} \Var[Y_i] \textstyle\sum_{i' \in [\disc]} \Ind{|z_i(u,v) - z_{i'}(u,v)| \leq \frac{v-u}{16}} \\
&\leq \frac{4}{\disc^2} \textstyle\sum_{i \in [\disc]} \Var[Y_i] \max(1, \frac{\disc}{8}) \\
&\leq \frac{\sigma^2}{\disc k} \textstyle\sum_{i \in [\disc]} (\bar{f}_a(z_i(u,v))-\bar{f}_{y}(z_i(u,v)))^2 \\
&= \frac{\sigma^2}{k} (\bar{\cD}_u^v(a, y))^2 \\
\end{align*}

From Hoeffding's inequality, 
\begin{align}
P[\eqref{eq:linear-error} \geq \delta] &\leq 2\exp\Big\{-\frac{\delta^2 k}{2 \sigma^2 (\bar{\cD}_u^v(a, y))^2}\Big\}
\end{align}

For $\delta = \bar{\cD}_u^v(a, y) \sqrt{\frac{4 \sigma^2 \ln(T |\cA(\rho)|)}{k}}$, with probability at least $1-2 T^{-2} |\cA(\rho)|^{-2}$, $\eqref{eq:linear-error} < \delta$.
\end{proof}

\begin{lemma} \label{lemma:quadratic_err_bd}
With probability at least $1-2 T^{-2} |\cA(\rho)|^{-2}$, 
\[ \eqref{eq:quadratic-term} \leq \frac{4 \sigma^2 \ln(T |\cA(\rho)|)}{k}.\]
\end{lemma}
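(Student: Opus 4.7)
The plan is to recognize \eqref{eq:quadratic-term} as the centered Gaussian quadratic form $\epsilon^{T} M \epsilon - \sigma^{2}\Tr(M)$ in the noise vector of the flagged phase, and then apply the Laurent--Massart tail bound (equivalently Hanson--Wright) for Gaussian chaos. For each $i \in [\disc]$ I would define the weight vector $u_{i}$ indexed by trials $t \in (\tflag(\rho),\tcluster(\rho)]$ via $u_{i}(t) := \tfrac{1}{k}\bigl(\Ind{t \in \cE(\rho,a,i)} - \Ind{t \in \cE(\rho,y,i)}\bigr)$, so that $Z_{i} := \tfrac{1}{k}\sum_{t \in \cE(\rho,a,i)}\epsilon_{t} - \tfrac{1}{k}\sum_{t \in \cE(\rho,y,i)}\epsilon_{t} = u_{i}^{T}\epsilon$ and hence $\tfrac{1}{\disc}\sum_{i} Z_{i}^{2} = \epsilon^{T} M \epsilon$ for $M := \tfrac{1}{\disc}\sum_{i} u_{i} u_{i}^{T}$. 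Since $a \neq y$, the two indicators in $u_i(t)$ have disjoint support, giving $\|u_{i}\|^{2} = 2/k$, so $\sigma^{2}\Tr(M) = 2\sigma^{2}/k$ exactly matches the centering term being subtracted in \eqref{eq:quadratic-term}.

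The heart of the argument is controlling $\|M\|_{op}$ and $\|M\|_{F}$. The crucial structural input, already used implicitly in the proof of Lemma \ref{lemma:linear_err_bd}, is that the sufficient-data precondition $\prod_{a \in \cA(\rho)}\textsc{SuffData}(a) = 1$ forces every $k$-nearest-neighbor set around $z_{i}(u,v)$ to lie within radius $(v-u)/32$ of $z_i$. Consequently (i) any fixed trial $t$ belongs to at most $\disc/16$ of the sets $\{\cE(\rho,a,i)\}_{i}$ for its corresponding arm, and (ii) $u_{i}^{T}u_{i'}$ can be nonzero only when $|z_{i} - z_{i'}| \leq (v-u)/16$, which restricts to at most $\disc/8$ indices $i'$ per $i$. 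Combining these with $\|u_i\|^2 = 2/k$ and a trial-wise Cauchy--Schwarz estimate of $v^T M v = \tfrac{1}{\disc}\sum_i (u_i^T v)^2$ gives $\|M\|_{op} = O(1/k)$ and $\|M\|_{F} = O(1/k)$.

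Finally I would invoke the standard Laurent--Massart tail bound for Gaussian quadratic forms: for any $s > 0$,
\[\Prob\bigl(|\epsilon^{T} M \epsilon - \sigma^{2}\Tr(M)| \geq 2\sigma^{2}\|M\|_{F}\sqrt{s} + 2\sigma^{2}\|M\|_{op}\, s\bigr) \leq 2 e^{-s}.\]
Plugging in the norm bounds and choosing $s = 2\ln(T|\cA(\rho)|)$ yields a deviation of order $\sigma^{2}\ln(T|\cA(\rho)|)/k$ with probability at least $1 - 2T^{-2}|\cA(\rho)|^{-2}$, matching the conclusion of the lemma up to the absolute constant $4$ (which can be absorbed via slightly sharper counting in the overlap bounds). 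The main obstacle is the Frobenius estimate: without carefully leveraging the sufficient-data precondition to argue that $k$-NN supports for well-separated grid points $z_i$ are nearly disjoint, one would only obtain $\|M\|_F \sim \|M\|_{op}\sqrt{\disc}$, losing a $\sqrt{\disc}$ factor in the final concentration. All other ingredients---identifying the quadratic form, computing the trace exactly, bounding the operator norm via Cauchy--Schwarz, and invoking the off-the-shelf Gaussian chaos tail---are routine.
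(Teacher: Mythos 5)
Your proposal is correct and takes essentially the same route as the paper: you write \eqref{eq:quadratic-term} as a centered Gaussian quadratic form $\epsilon^T M \epsilon - \sigma^2\Tr(M)$ with $M = \tfrac{1}{\disc}\sum_i u_i u_i^T$ (which is exactly the paper's $\tfrac{1}{\disc}QQ^T$), use the $\textsc{SuffData}$ precondition to localize each $k$-NN set within $(v-u)/32$ of its grid point and thereby bound the overlap pattern, derive $\|M\|_F, \|M\|_{op} = O(1/k)$ exactly as the paper bounds $\|QQ^T\|_F$ and $\|QQ^T\|_2$, and close with a Laurent--Massart/Hanson--Wright tail for Gaussian chaos, which is the same sub-exponential concentration the paper invokes. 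The only difference is cosmetic normalization ($M$ vs.\ $QQ^T$), and you correctly flag the Frobenius estimate as the step that would lose a $\sqrt{\disc}$ factor without the locality argument.
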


\begin{proof}
Let us define 
\[Y_i = \tfrac{1}{k}\textstyle\sum_{t \in \cE(\rho,a, i)} \epsilon_t - \tfrac{1}{k}\textstyle\sum_{t \in \cE(\rho,y,i)} \epsilon_t,\]
so that the expression in \eqref{eq:quadratic-term} can be rewritten as $|\tfrac{1}{\disc}\textstyle\sum_{i \in [\disc]} Y_i^2 - \frac{2\sigma^2}{k}|$. The left and right terms of $Y_i$ are independent as they correspond to samples obtained for different arms, $a$ and $y$, so $\cE(\rho,a,i)$ is completely disjoint from $\cE(\rho,y,i)$. Therefore $Y_i \sim N(0, \frac{2 \sigma^2}{k})$ such that $\E{ \tfrac{1}{\disc}\textstyle\sum_{i \in [\disc]} Y_i^2 } =  \frac{2 \sigma^2}{k}$.

Next, we want to show that $\tfrac{1}{\disc}\textstyle\sum_{i \in [\disc]} Y_i^2$ is sub-exponential, so that we can use Bernstein's inequality to bound its concentration around its mean. The vector $(Y_i)_{i \in [\disc]}$ can be written as a affine transformation $QX$, where $Q$ is the matrix defined as 
\[Q_{it} = \tfrac{1}{k} (\Ind{t \in \cE(\rho,a, i)} - \Ind{t \in \cE(\rho,y,i)}),\]
and $X_t = \epsilon_t$, such that $X \sim N(0,\sigma^2 I)$. Therefore the vector $(Y_i)_{i \in [\disc]}$ is a multivariate Gaussian with mean zero and variance $\sigma^2 Q Q^T$.

It holds by the sub-exponential property of sum of Gaussians that conditioned on the latent variables $\{\beta_i\}_{i \in [m]}$ and the observation indices $\cE'$,
\begin{align*}
\Prob\left(\left|\frac{Y^T Y}{\disc} - \frac{\|\sigma^2 Q Q^T\|_*}{\disc}\right| \geq \delta \right) \leq 
\begin{cases}
2 \exp\left(- \frac{\disc^2 \delta^2}{8 \|\sigma^2 Q Q^T\|_F^2}\right) &\text{ if } \delta \leq \frac{\|\sigma^2 Q Q^T\|^2_F}{\disc \|\sigma^2 Q Q^T\|_2} \\
2 \exp\left(- \frac{\disc \delta}{8 \|\sigma^2 Q Q^T\|_2}\right) &\text{ if } \delta > \frac{\|\sigma^2 Q Q^T\|^2_F}{\disc \|\sigma^2 Q Q^T\|_2}
\end{cases}.
\end{align*}

$\| \cdot \|_*$ denotes the nuclear norm, $\| \cdot \|_F$ denotes the frobenius norm, and $\| \cdot \|_2$ denotes the spectral norm. We can verify that indeed 
\[\frac{\|\sigma^2 Q Q^T\|_*}{\disc} = \frac{\sigma^2 Tr(Q Q^T)|}{\disc} = \frac{2 \sigma^2}{k}.\]

Let us first upper bound $|[Q Q^T]_{ij}|$,
\begin{align*}
|[Q Q^T]_{ij}| &= | \sum_{t} Q_{it} Q_{jt} |  \\
&= |  \frac{1}{k^2} \sum_{t}(\Ind{t \in \cE(\rho,a, i)} - \Ind{t \in \cE(\rho,y,i)})(\Ind{t \in \cE(\rho,a, j)} - \Ind{t \in \cE(\rho,y,j)}) |  \\
&\leq \Ind{|z_i(u,v) - z_{j}(u,v)| \leq \frac{v-u}{16}} \frac{2}{k}.
\end{align*}

We use this to upper bound $\|\sigma^2 Q Q^T\|_F^2$,
\begin{align*}
\|\sigma^2 Q Q^T\|_F^2 &= \sum_{i \in \disc} \sum_{j \in \disc} ([\sigma^2 Q Q^T]_{ij})^2 \\
&\leq \sigma^4 \sum_{i \in \disc} \sum_{j \in \disc} \Ind{|z_i(u,v) - z_{j}(u,v)| \leq \frac{v-u}{16}} \frac{4}{k^2} \\
&\leq \frac{4 \sigma^4}{k^2} \sum_{i \in \disc} \frac{\disc}{8} \\
&= \frac{\disc^2 \sigma^4}{2 k^2}
\end{align*}

By symmetry, $\|\sigma^2 Q Q^T\|_1 = \|\sigma^2 Q Q^T\|_{\infty}$. By Holder's inequality, 
\begin{align}
\|\sigma^2 Q Q^T\|_2 &\leq \sqrt{\|\sigma^2 Q Q^T\|_1 \|\sigma^2 Q Q^T\|_{\infty}}\\
&= \max_{j \in [\disc]} \sum_{i \in [\disc]} | [\sigma^2 Q Q^T]_{ij} |\\
&\leq \frac{2 \sigma^2}{k} \max_{j \in [\disc]} \sum_{i \in [\disc]} \Ind{|z_i(u,v) - z_j(u,v)| \leq \frac{v-u}{16}} \\
&\leq \frac{2 \sigma^2}{k} \max_{j \in [\disc]} \frac{\disc}{8} \\
&\leq \frac{ \disc \sigma^2}{4 k}
\end{align}

By plugging these bounds in, we obtain that 
\begin{align*}
\Prob\left(\left|\frac{Y^T Y}{\disc} - \frac{2 \sigma^2}{k}\right| \geq \delta \right) \leq 
\begin{cases}
2 \exp\left(- \frac{\delta^2 k^2}{4 \sigma^4}\right) &\text{ if } \delta \leq \frac{2 \sigma^2}{k} \\
2 \exp\left(- \frac{\delta k }{2 \sigma^2}\right) &\text{ if } \delta > \frac{2 \sigma^2 }{k}
\end{cases}.
\end{align*}

Let us choose $\delta = \frac{4 \sigma^2 \ln(T |\cA(\rho)|)}{k}$, such that $\delta > \frac{2 \sigma^2}{k}$.
Therefore, with probability greater than $1-2 T^{-2} |\cA(\rho)|^{-2}$, $\eqref{eq:quadratic-term}\leq \frac{4 \sigma^2 \ln(T |\cA(\rho)|)}{k}$.

\end{proof}

\section{Final Regret Calculation}

\begin{proof}[Proof of Theorem \ref{thm:general}]

When we sum over $\rho \in \cP$, we mean to refer to all balls over trial that are ever active, i.e. a member of $\cP$ at any point of the algorithm within the time horizon $T$.

By using a similar argument to Lemma \ref{lemma:n_cluster_bd}, the regret from initial clustering is bounded above by 
\[\frac{304 \cdot 5431 \cdot \sigma^2 K \ln(TK)}{L^2} = O\left(\frac{\sigma^2 K \ln(TK)}{L^2}\right),\]
where we used the property that regret is bounded above by 1 in every trial step because the expected reward function $f$ outputs values in $[0,1]$.

Next we bound the expected regret after the initial clustering conditioned on the good events $\cG$. We split the regret into two terms,
\begin{align}
&\E{\text{regret after initial clustering} ~|~ \cG} \\
&= \E{\left. \sum_{t=1}^T \sum_{\rho \in \cP} \Ind{\rho_t = \rho} \left(f^*(x_t) - f_{a_t}(x_t)\right) ~\right|~ \cG} \\
&= \E{\left.  \sum_{t=1}^T \sum_{i=1}^{i_{max}-1} \sum_{\rho \in \cP} \Ind{\rho_t = \rho, \Delta(\rho) = 2^{-i}} \left(f^*(x_t) - f_{a_t}(x_t)\right) ~\right|~ \cG} \label{eq:regret_1} \\
&\quad + \E{\left.  \sum_{t=1}^T \sum_{i=i_{\max}}^{\infty} \sum_{\rho \in \cP} \Ind{\rho_t = \rho, \Delta(\rho) = 2^{-i}} \left(f^*(x_t) - f_{a_t}(x_t)\right) ~\right|~ \cG} \label{eq:regret_2} 
\end{align}

By Lemma \ref{lemma:gap_UB}, \eqref{eq:regret_2} is bounded above by 
\begin{align*}
\E{\left. 20 L 2^{-i_{\max}} \sum_{t=1}^T \sum_{i=i_{\max}}^{\infty} \sum_{\rho \in \cP} \Ind{\rho_t = \rho, \Delta(\rho) = 2^{-i}} ~\right|~ \cG} \leq 20 L 2^{-i_{\max}} T
\end{align*}

Due to the fact that our algorithm always halved the context width, any ball $\rho$ with $\Delta(\rho) = 2^{-i}$ must have context width $w_i(\ell) = [(\ell-1) 2^{-i}, \ell 2^{-i}]$ for some $\ell \in [2^i]$. As a result, \eqref{eq:regret_1} equals
\begin{align*}
&\E{\left. \sum_{i=1}^{i_{max}-1} \sum_{\ell=1}^{2^{i}} \sum_{\rho \in \cP} \Ind{[c_0(\rho), c_1(\rho)] = w_i(\ell)} \sum_{t=1}^{T} \Ind{\rho_t = \rho} \left(f^*(x_t) - f_{a_t}(x_t)\right)~\right|~ \cG}
\end{align*}

By Lemma \ref{lemma:gap_UB}, conditioned on the good events $\cG$, for any $\rho \in \cP$ such that $\Delta(\rho) \leq \frac14$, 
\[\max_{(x,a) \in \rho} (f^*(x) - f_a(x)) \leq 20 L \Delta(\rho).\]
This implies that for a context width $c_0(\rho), c_1(\rho)$, we can eliminate all arms $a \in [K]$ that are ``extremely suboptimal'', satisfying
\[\min_{x \in [c_0(\rho), c_1(\rho)]} (f^*(x) - f_a(x)) > 20 L \Delta(\rho).\]
Let $\kappa(x)$ denote the suboptimality gap at context $x$, i.e. the difference between the optimal set of arms and the next optimal set of arms,
\[\kappa(x) = f^*(x) - \sup_{a \in [K]} f_a(x) \Ind{f_a(x) \neq f^*(x)}.\]

If $\kappa(x) > 20 L 2^{-i}$ for all $x \in w_i(\ell)$, Lemma \ref{lemma:gap_UB} implies that conditioned on the good event $\cG$, the regret incurred by any ball $\rho$ for which $[c_0(\rho), c_1(\rho)] \subseteq w_i(\ell)$ must be zero, as it must contain only optimal arms.

Thus we can reduce \eqref{eq:regret_1} to 
\begin{align*}
&\sum_{i=1}^{i_{max}-1} \sum_{\ell=1}^{2^{i}} \Ind{\min_{x \in w_i(\ell)} \kappa(x) \leq 20 L 2^{-i}} \E{\left. \sum_{\rho \in \cP} \Ind{[c_0(\rho), c_1(\rho)] = w_i(\ell)} \sum_{t=1}^{T} \Ind{\rho_t = \rho} \left(f^*(x_t) - f_{a_t}(x_t)\right)~\right|~ \cG}
\end{align*}

By Lemmas \ref{lemma:gap_UB}, \ref{lemma:n_UCB_bd}, and \ref{lemma:n_cluster_bd}, and using a trivial upper bound that $|\cA(\rho)| \leq K$,
\begin{align*}
&\E{\left. \sum_{\rho \in \cP} \Ind{[c_0(\rho), c_1(\rho)] = w_i(\ell)} \left(\sum_{t=1}^{\tflag(\rho)} \Ind{\rho_t = \rho} \left(f^*(x_t) - f_{a_t}(x_t)\right) + \sum_{t=\tflag(\rho)+1}^T \Ind{\rho_t = \rho} \left(f^*(x_t) - f_{a_t}(x_t)\right) \right)~\right|~ \cG} \\
&\leq \E{\left. \sum_{\rho \in \cP} \Ind{[c_0(\rho), c_1(\rho)] = w_i(\ell)} \left( 20 L \Delta(\rho) \left(\tfrac{ 6\sigma^2\ln(T)}{L^2 \Delta^2(\rho)} + 2\right) + 10 L \Delta(\rho) \left(\tfrac{304 \cdot 5431 \cdot \sigma^2 |\cA(\rho)| \ln(T |\cA(\rho)|)}{L^2 \Delta^2(\rho)}\right) \right) ~\right|~ \cG} \\
&= O\left(\frac{\sigma^2 \ln(T K)}{L 2^{-i}} \E{\left. \sum_{\rho \in \cP} \Ind{[c_0(\rho), c_1(\rho)] = w_i(\ell)} |\cA(\rho)| ~\right|~ \cG} \right)
\end{align*}

The algorithm guaranatees that the active balls always form a partition of the context-arm space, and the balls are strictly nested in a hierarchy; as a result, the arms in different balls $\rho \in \cP$ constrained to the same context width must be disjoint.

By Lipschitzness,
\[\min_{x \in w_i(\ell)} (f^*(x) - f_a(x)) \leq 20 L 2^{-i} \implies \max_{x \in w_i(\ell)} (f^*(x) - f_a(x)) \leq 22 L 2^{-i},\]
such that 
\[\Ind{\min_{x \in w_i(\ell)} (f^*(x) - f_a(x)) \leq 20 L 2^{-i}} \leq \Ind{(f^*(2^{-i} \ell) - f_a(2^{-i} \ell)) \leq 22 L 2^{-i}}.\]

For $i \geq 2$, by Lemma \ref{lemma:gap_UB}, conditioned on the good events $\cG$,
\begin{align*}
\E{\left. \sum_{\rho \in \cP} \Ind{[c_0(\rho), c_1(\rho)] = w_i(\ell)} |\cA(\rho)| ~\right|~ \cG} 
&\leq \sum_{a \in [K]} \Ind{\min_{x \in w_i(\ell)} (f^*(x) - f_a(x)) \leq 20 L 2^{-i}} \\
&\leq \sum_{a \in [K]} \Ind{(f^*(2^{-i} \ell) - f_a(2^{-i} \ell)) \leq 22 L 2^{-i}}.
\end{align*}
which upper bounds the number of arms that are ever subpartitioned into a ball of context width $w_i(\ell)$.

Let us denote
\[M_i = \sum_{\ell=1}^{2^{i}} \Ind{\min_{x \in w_i(\ell)} \kappa(x) \leq 20 L 2^{-i}} \sum_{a \in [K]} \Ind{(f^*(2^{-i} \ell) - f_a(2^{-i} \ell)) \leq 22 L 2^{-i}}.\]
The scaling of this quantity depends on the local geometry amongst the arms with respect to the expected reward functions. 

To get the final regret bound, we use Lemmas \ref{lemma:bad_event_1} and \ref{lemma:bad_event_1} to bound the probability that the good event $\cG$ is violated,
\begin{align}
&\E{R(T)} \\
&\leq \E{\text{regret from initial clustering}} + T \Prob(\neg \cG) + \E{\text{regret after initial clustering} ~|~ \cG}   \\
&\leq O\left(\frac{\sigma^2 K \ln(TK)}{L^2}\right) + T \left(2T^{-1} + 4T^{-1}\right) + 20 L T 2^{-i_{\max}} \\
&\quad + O\left(\sum_{i=1}^{i_{max}-1} \frac{\sigma^2 M_i \ln(TK)}{L 2^{-i}} \right) \label{eq:regret_3} \\
&= O\left(\frac{\sigma^2 K \ln(TK)}{L^2} + \min_{i_{\max}\in\IntegersP} \left(L T 2^{-i_{\max}} + \sum_{i=1}^{i_{max}-1} \frac{\sigma^2 M_i \ln(TK)}{L 2^{-i}} \right)\right)
\end{align}

\end{proof}

\paragraph{Finite Types}

Suppose that the reward functions for the $K$ arms, $\{f_a\}_{a \in [K]}$ only takes $M$ different values. Essentially, this implies that there are $\Theta$ different types of arms, but we don't know the arm types a priori. Within each type, the reward function is exactly the same. Let us denote the type of arm $a$ with $\theta_a \in [\Theta]$, and we define function $g:[0,1] \times [\Theta] \to [0,1]$ such that $f_a(x) = g(x,\theta)$. Let us define 
\[\mu_{\kappa}(z) := \mu(\{x \in [0,1] ~s.t.~ \kappa(x) \leq z\})\]
where $\mu$ is the Lebesgue measure.

Naively, $\sum_{a \in [K]} \Ind{(f^*(2^{-i} \ell) - f_a(2^{-i} \ell)) \leq 22 L 2^{-i}} \leq K$. Furthermore, it will minimally be equal to the number of arms of the optimal type. If there are roughly proportional number of arms in each type, then this expression is also lower bounded by order $K$, thus it is sufficient to upper bound the expression by $K$. Then $M_i$ can be upper bounded by 
\[M_i = K \sum_{\ell=1}^{2^{i}} \Ind{\min_{x \in w_i(\ell)} \kappa(x) \leq 20 L 2^{-i}} \leq \frac{K \mu_{\kappa}(22 L 2^{-i})}{2^{-i}},\]
which follows from the fact that $\kappa(x)$ must be a $2L$-Lipschitz function.

Let us denote the type of arm $a$ with $\theta_a \in [\Theta]$, and we define function $g:[0,1] \times [\Theta] \to [0,1]$ such that $f_a(x) = g(x,\theta)$.  Let $\theta^*(x)$ denote the set of arm types that are optimal at context $x$,
\[\theta^*(x) = \{\theta \in [\Theta] ~s.t.~ g(x,\theta) = f^*(x)\}\]

In the finite types setting, the optimal policy corresponds to partitioning the context space $[0,1]$ into a set of intervals, $\cS^*$, such that across each interval $\cs \in \cS^*$, the optimal policy does not change, i.e. $\theta^*(x) = \theta^*(x')$ for all $(x,x') \in \cs \times \cs$. Note that at each endpoint of $\cs$, it must be that $\kappa(x) = 0$, as the fact that the policy changes and the reward functions are Lipschitz will imply that either an optimal arm becomes suboptimal, or a suboptimal arm becomes optimal, but the change must happen ``smoothly'' due to Lipschitzness. This also implies that for some $\gamma$ arbitrarily close to 0, if $x$ is an endpoint of any interval, either $\kappa(x+\gamma) > 0$ or $\kappa(x-\gamma) > 0$.

Let us assume that $\kappa(x)$ decreases linearly fast nearby the points where the optimal policy changes, so that for some constant $L'$, 
\[\mu_{\kappa}(22 L 2^{-i}) \leq \frac{22 L 2^{-i}}{L'} \times |\cS^*|.\]
Then it follows by plugging into the main theorem that the regret is upper bounded by 
\begin{align}
&\E{R(T)} &= O\left(\frac{\sigma^2 K \ln(TK)}{L^2} + \min_{i_{\max}\in\IntegersP} \left(L T 2^{-i_{\max}} + \sum_{i=1}^{i_{max}-1} \frac{22 \sigma^2 |\cS^*| K \ln(TK)}{L' 2^{-i}} \right)\right).
\end{align}
By choosing
\[ i_{\max} = \frac12 \log\left(\frac{L' L T}{22 \sigma^2 |\cS^*| K \ln(T K)} \right),\]
it follows that
\begin{align}
\E{R(T)} &\leq O\left(\frac{\sigma^2 K \ln(TK)}{L^2} + \sqrt{\frac{\sigma^2 |\cS^*| L T K \ln(T K)}{L'}} \right)
\end{align}

\paragraph{Lipschitz arm geometry}

Suppose that each arm $a$ is associated to a latent variable $\theta_a \in [0,1]$, and the expected reward function $f_a(x) = g(x, \theta_a)$, where $g:[0,1] \times [0,1] \to [0,1]$ is a $L$-Lipschitz function with respect to both the contexts and the arm latent variables,
\[g(x,\theta) - g(x',\theta') \leq L(|x - x'| + |\theta - \theta'|).\]

If we assume that the arm latent variables are uniformly spread out, $\{\theta_a\} = \{\frac{i}{K}\}_{i \in [K]}$, then 
\begin{align}
M_i &= \sum_{\ell=1}^{2^{i}} \Ind{\min_{x \in w_i(\ell)} \kappa(x) \leq 20 L 2^{-i}} \sum_{a \in [K]} \Ind{(f^*(2^{-i} \ell) - g(2^{-i} \ell,\theta_a)) \leq 22 L 2^{-i}} \\
&\leq \sum_{j \in [K]} \sum_{\ell=1}^{2^{i}} \Ind{(f^*(2^{-i} \ell) - g(2^{-i} \ell, \frac{j}{K})) \leq 22 L 2^{-i}},
\end{align}
which is a discrete approximation to the area of the arm-context space for which the suboptimality gap is at most $22 L 2^{-i}$. We can visualize $\sum_{\ell=1}^{2^{i}} M_i(\ell)$ by considering the contour plot of $f^*(x) - g(x,\theta)$, and counting how many grid points $\{(2^{-i} \ell, \frac{j}{K})\}_{\ell \in [2^i], j \in [K]}$ are lower than $22 L 2^{-i}$. For large $i$ and $K$, this is approximately equal to $2^i K \mu(\{(x,\theta): g(x,\theta) - f^*(x) \geq -22 L 2^{-i}\})$, where $\mu$ is the Lebesgue measure. The curve at the lowest level of the contour plot corresponds to the set $\{(x,\theta) ~s.t.~ g(x,\theta) - f^*(x) = 0\}$,  which contains for each context $x$ the set of arm latent variables $\theta$ that optimize the expected reward function. The final regret bound thus depends on the local measure/smoothness of the joint reward function. 

To give a concrete example, we compute a bound for the reward function used in the simulation, where $g(x,\theta) = 1 - L |x-\theta|$ for some $L \in (0,1)$.
\begin{align*}
M_i &\leq \sum_{\ell=1}^{2^{i}} \sum_{a \in [K]} \Ind{(f^*(2^{-i} \ell) - f_a(2^{-i} \ell)) \leq 22 L 2^{-i}} \\
&= \sum_{\ell=1}^{2^{i}} \sum_{j \in [K]} \Ind{(f^*(2^{-i} \ell) - g(2^{-i} \ell, \frac{j}{K})) \leq 22 L 2^{-i}} \\
&\leq \sum_{\ell=1}^{2^{i}} \sum_{j \in [K]} \Ind{L |2^{-i} \ell -\frac{j}{K}| \leq 22 L 2^{-i}} \\
&\leq \sum_{\ell=1}^{2^{i}} \sum_{j \in [K]} \Ind{j \in [K (2^{-i} \ell - 22 \cdot 2^{-i}),K (2^{-i} \ell + 22 \cdot 2^{-i})]} \\
&\leq \sum_{\ell=1}^{2^{i}} 44 \cdot 2^{-i} K \\
&\leq 44 K
\end{align*}

By plugging this into the main theorem, it follows that
\begin{align}
&\E{R(T)} &= O\left(\frac{\sigma^2 K \ln(TK)}{L^2} + \min_{i_{\max}\in\IntegersP} \left(L T 2^{-i_{\max}} + \sum_{i=1}^{i_{max}-1} \frac{\sigma^2 K \ln(TK)}{L 2^{-i}} \right)\right).
\end{align}
Choosing
\[ i_{\max} = \frac12 \log\left(\frac{20 L^2 T}{\sigma^2 K \ln(T K)} \right),\]
results in
\begin{align}
\E{R(T)} &\leq O\left(\frac{\sigma^2 K \ln(TK)}{L^2} + \sqrt{\sigma^2 K T \ln(T K)} \right).
\end{align}
\section{Additional Simulation Results and Discussion}
\label{appendix:simulation}
We test our algorithm on a model with 50, 100, 200 arms and a context space of $[0,1]$. Each arm $a$ corresponds to a parameter $\theta_a$ uniformly spaced out within $[0,1]$. The expected reward for arm $a$ and context $x$ is 
\[f_a(x) := g(x,\theta_a) = 1 - \big| x - 4 \textstyle\min_{z \in \{0,0.5,1\}}|\theta_a-z|\big|.\]
This function is periodic with respect to $\theta$, and can be depicted as a zigzag. Our distance estimate $\hat{\cD}_u^v(a,a')$ approximates $\cD_u^v(a,a')$, which is defined with respect $f_a$ and $f_{a'}$ directly and does not depend on $\theta_a$. Consider a measure preserving transformation that maps $\theta_a$ to $\phi_a = 4 \min_{z \in \{0,0.5,1\}}|\theta_a-z|$, such that the reward function is equivalently described by $f_a(x) = 1 - |x-\phi_a|$. An algorithm which partitions with respect to $\cD_u^v(a,a')$ would be agnostic to such a transformation, as opposed to an algorithm which depends on a metric defined with respect the arm's representation, which would perform worse on $\theta_a$ than $\phi_a$. 

In a sequence of $T$ trials we uniformly randomly sample from the context space and reveal it to the algorithm. The algorithm selects what it considers to be the best arm in each trial based on the context revealed. Then simulation reveals a noisy payoff (i.e., $f_a(x)+\sigma$) to the algorithm. The task in the simulation setup is for our algorithm to learn the optimal arm for different contexts. 




We benchmark the performance of our \simucb algorithm against three variations:
\begin{itemize}[leftmargin=*]
\itemsep0em 
    \item \emph{\simucb-With-True-Reward-Function}: We give the \simucb algorithm oracle access to evaluate $\cD_u^v(a,a')$ at no cost, which is used to subpartition whenever a ball is flagged.
    \item \emph{\simucb-With-Similarity-Metric}: We give the \simucb algorithm oracle access to evaluate $|\theta_a - \theta_{a'}|$ at no cost, which is used to subpartition whenever a ball is flagged.
     \item \emph{\simucb-With-No-Arm-Similarity}: This naive variant uses no arm similarities, estimating each arm's reward independently. The context space is adaptively partitioned via our algorithm. 
\end{itemize}

We chose the model parameters that led to the highest average cumulative reward in each baseline algorithm based on a grid search. For all algorithms the flagging rule is set to $n_t(\rho) \geq 4\ln(T)/\Delta^2$, and $\sigma$ is set to either $1e-5$ or $1e-2$. For \simucb, $k$ was set to $26$. We set the number of trials $T$ to $100,000$ as all the algorithms had converged to their optimal point by then. We present results for the three simulation settings:(1) 50 arms with noise $\sigma = 1e-5$, (2) 100 arms with $\sigma = 1e-5$ and (3) 200 arms with $\sigma = 1e-2$.

In figure \ref{fig:ctr all}, we plot the average cumulative reward over the trials, i.e. $\frac{1}{T} \sum_{t=1}^T \pi_t$, where $T$ is the total number of trials and $\pi_t \in (0,1)$ is the reward observed in the $t^{\text{th}}$ trial for different simulation settings. As we can see, the oracle variant of the algorithm that uses the true reward function to calculate $\cD_u^v(a,a')$ performs the best on all three plots. Our Approx-Zooming algorithm has a heavy cost up front due to the clustering of the arms globally, but the algorithm improves over the time horizon as it learns the correct arm similarities. The oracle variant which uses the similarity metric $|\theta_a - \theta_{a'}|$ performs worse than the true $\cD_u^v(a,a')$ variant, as it does not account for the periodic nature of the function. 

\begin{figure}[!h]
    \begin{subfigure}{0.5\columnwidth}
      \centering
    \includegraphics[width=\columnwidth]{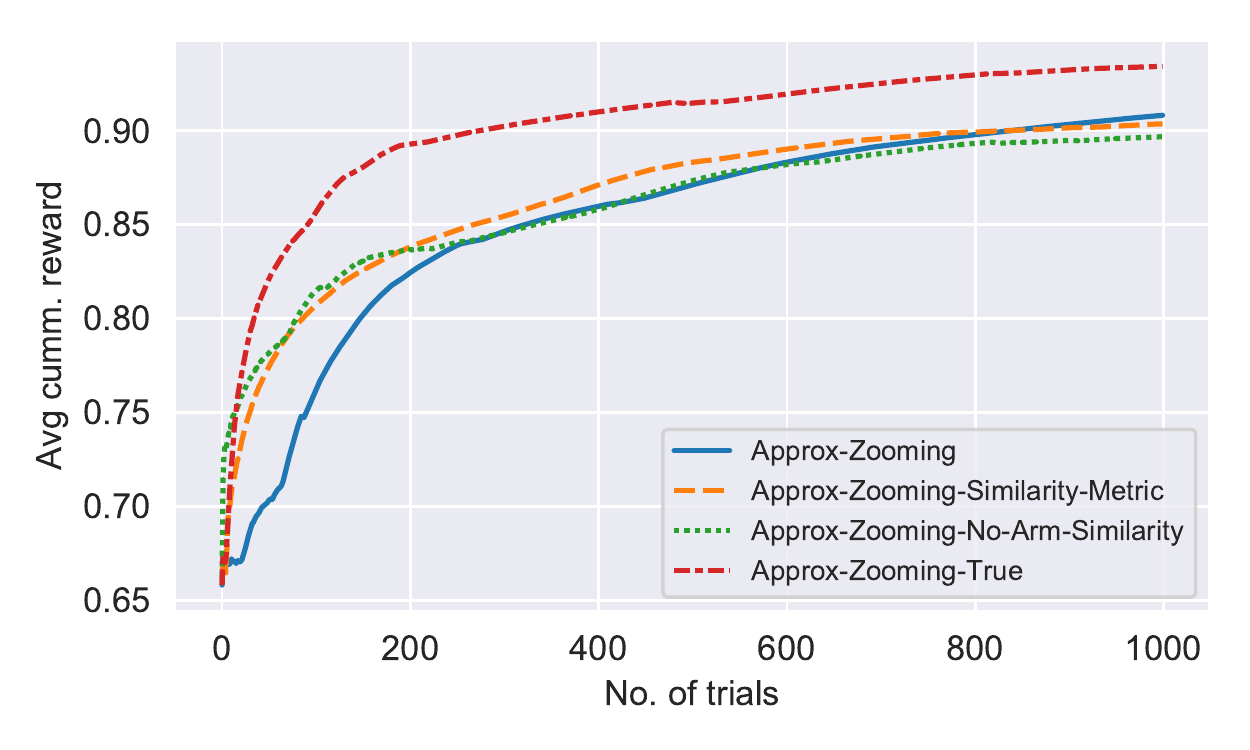}
    \caption{Simulation setup : 50 arms and $\sigma=1e-5$}
    \end{subfigure}
    ~
    \begin{subfigure}{0.5\columnwidth}
      \centering
    \includegraphics[width=\columnwidth]{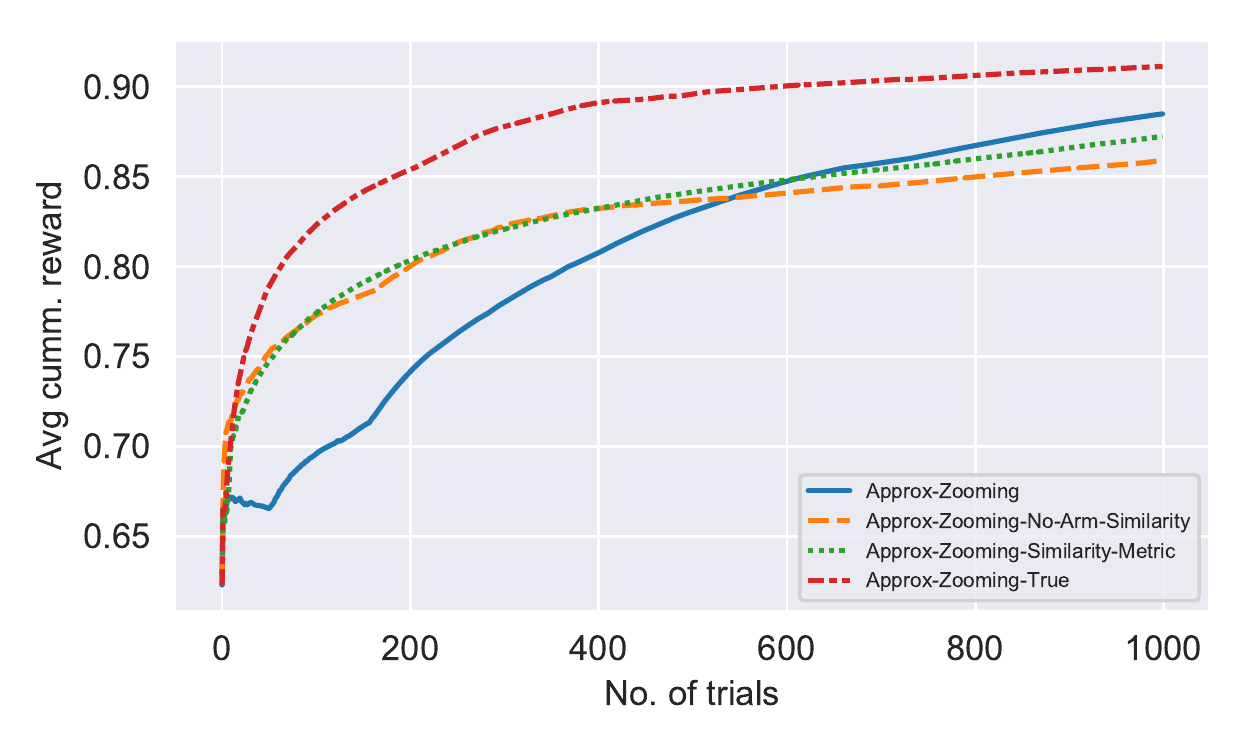}
    \caption{Simulation setup : 100 arms and $\sigma = 1e-5$}
    \end{subfigure}
    \begin{subfigure}{\columnwidth}
      \centering
    \includegraphics[width=0.5\columnwidth]{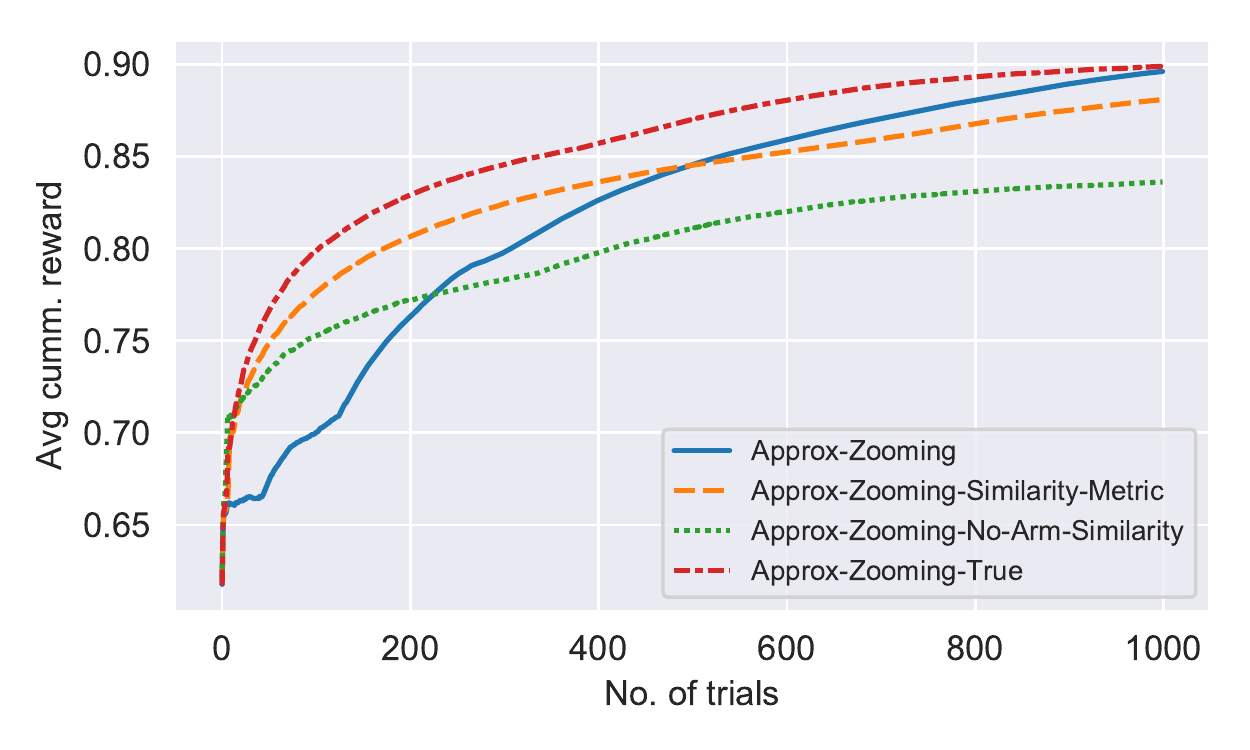}
    \caption{Simulation setup : 200 arms and $\sigma = 1e-2$}
    \end{subfigure}
    \caption{Avg. cumulative reward vs. number of trials}
    \label{fig:ctr all}
\end{figure}

\begin{figure}[!h]
\centering
    \begin{subfigure}{0.8\columnwidth}
        \includegraphics[width=\columnwidth]{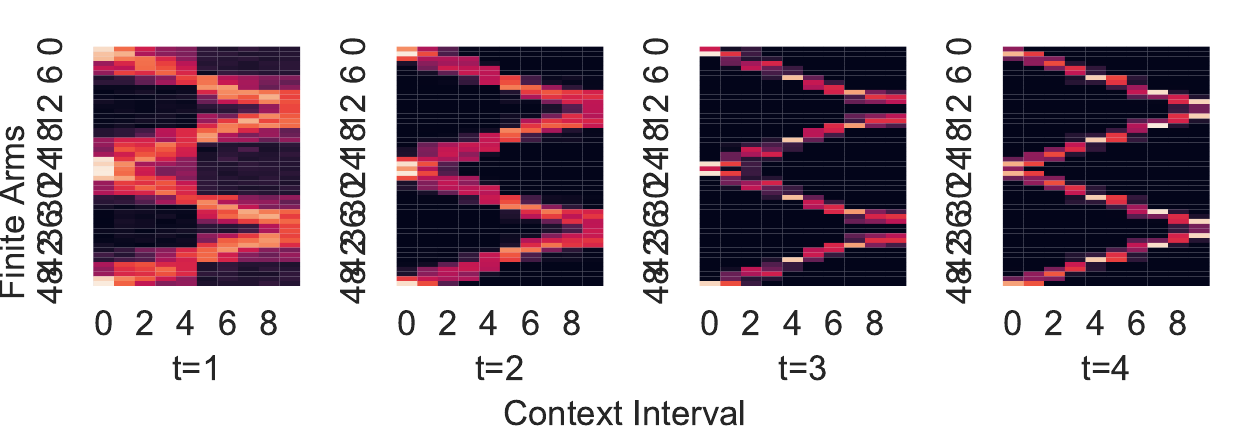}
        \caption{Approx Zooming}
    \end{subfigure}
    ~
    \begin{subfigure}{0.8\columnwidth}
        \includegraphics[width=\columnwidth]{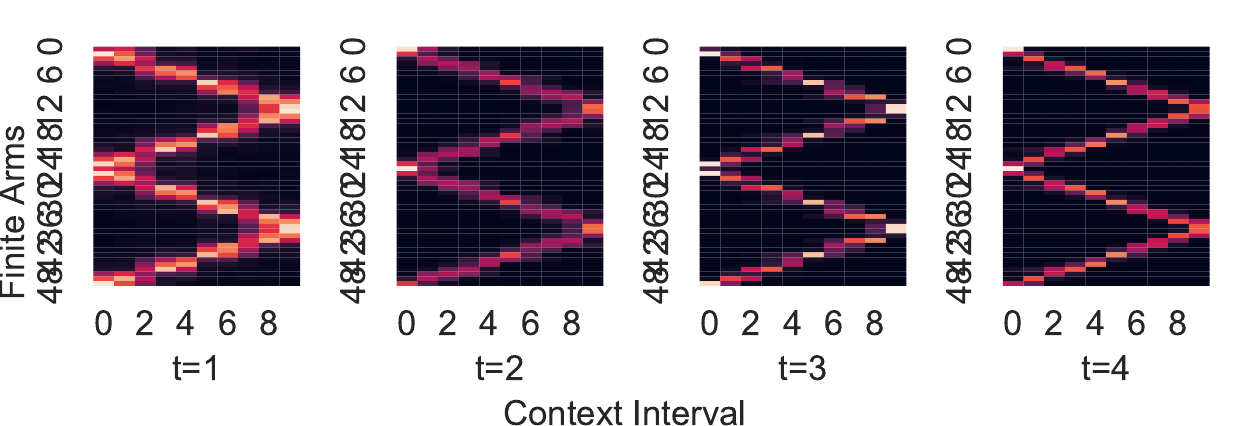}
         \caption{Approx-Zooming-True}
    \end{subfigure}
    
    \begin{subfigure}{0.8\columnwidth}
        \includegraphics[width=\columnwidth]{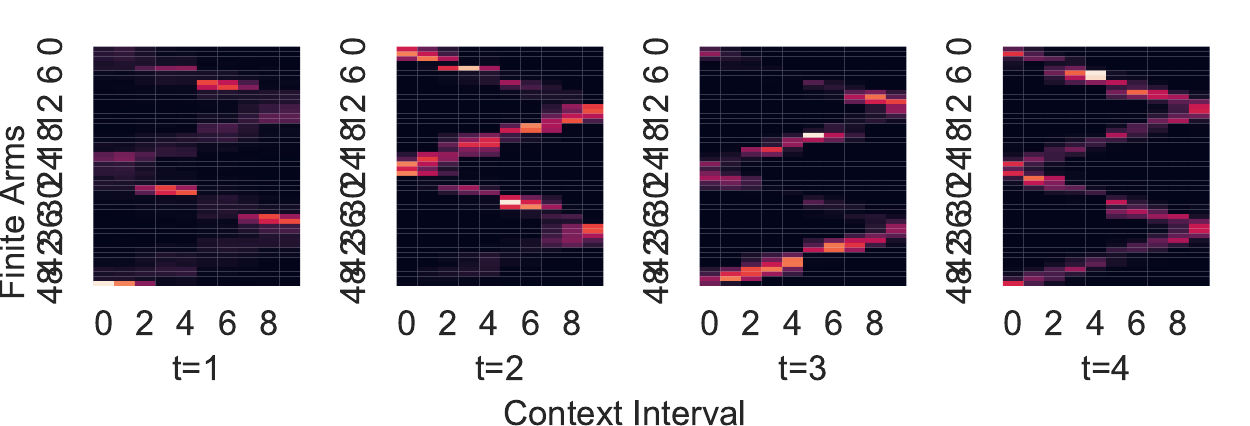}
        \caption{Approx-Zooming-Similarity-Metric}
    \end{subfigure}
    \begin{subfigure}{0.8\columnwidth}
        \includegraphics[width=\columnwidth]{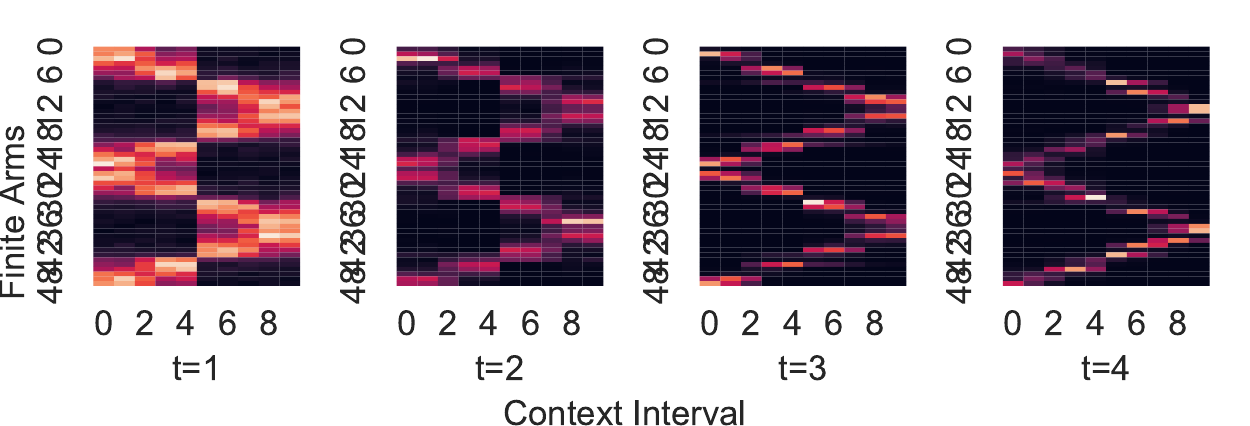}
        \caption{Approx-Zooming-No-Arm-Similarity}
    \end{subfigure}
    \caption{Arm Frequency Plots For 50 Arms}\label{fig:arm frequency 50}
\end{figure}

\begin{figure}[!h]
\centering
    \begin{subfigure}{0.8\columnwidth}
        \includegraphics[width=\columnwidth]{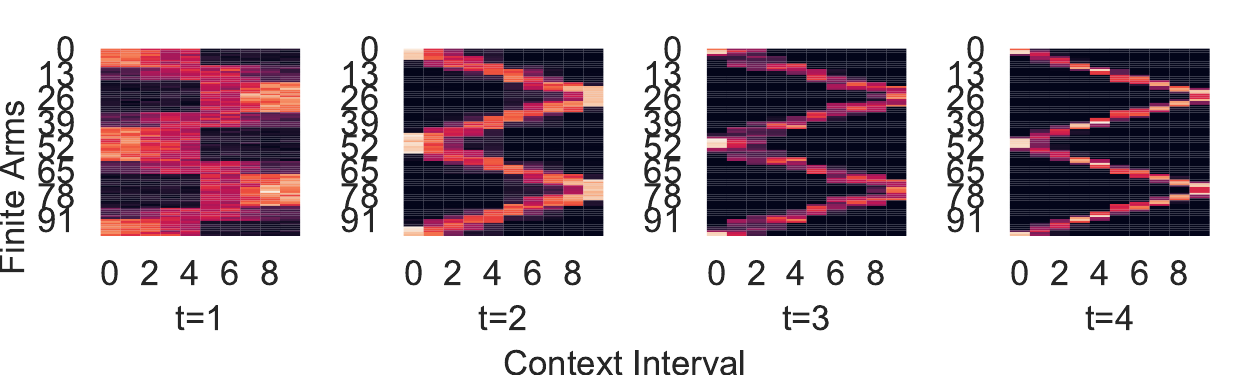}
        \caption{Approx Zooming}
    \end{subfigure}
    
    \begin{subfigure}{0.8\columnwidth}
        \includegraphics[width=\columnwidth]{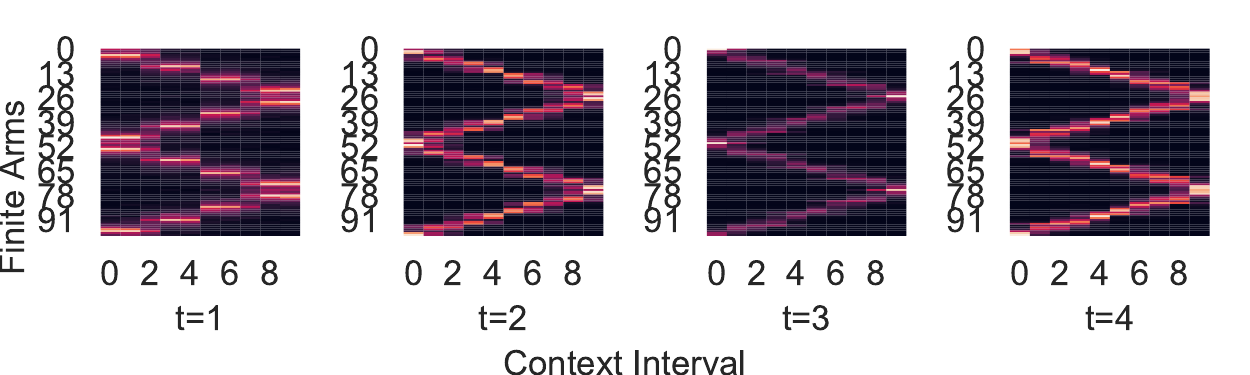}
        \caption{Approx-Zooming-True}
    \end{subfigure}
    
    \begin{subfigure}{0.8\columnwidth}
        \includegraphics[width=\columnwidth]{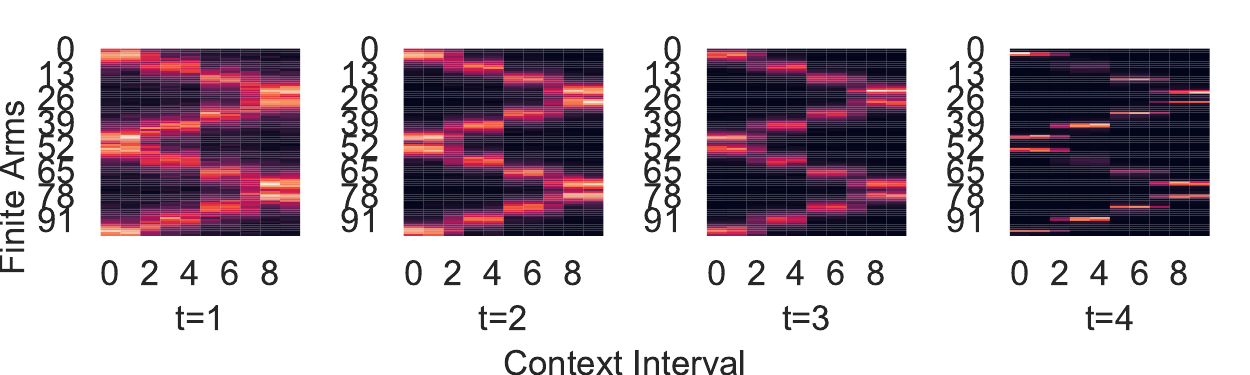}
        \caption{Approx-Zooming-Similarity-Metric}
    \end{subfigure}
    \begin{subfigure}{0.8\columnwidth}
        \includegraphics[width=\columnwidth]{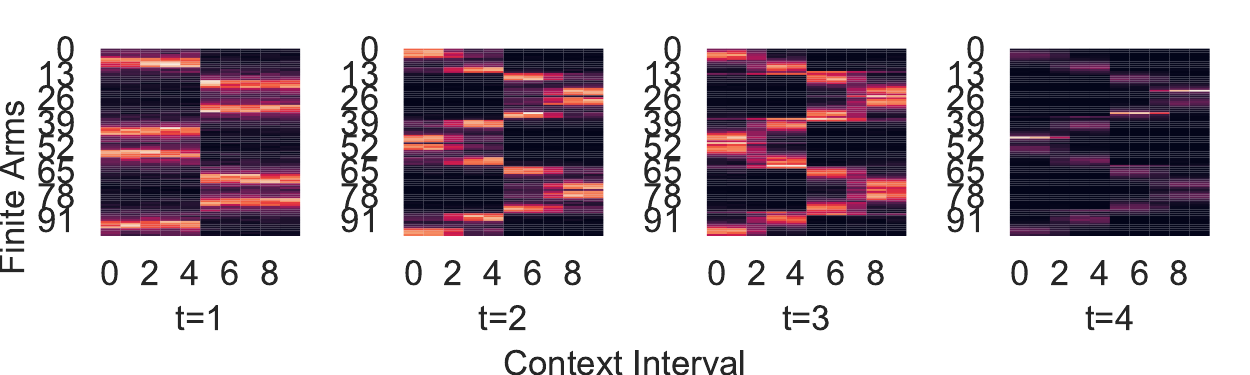}
        \caption{Approx-Zooming-No-Arm-Similarity}
    \end{subfigure}
    \caption{Arm Frequency Plots For 100 Arms}\label{fig:arm frequency 100}
\end{figure}

\begin{figure}[!h]
    \centering
    \begin{subfigure}{0.8\columnwidth}
        \includegraphics[width=\columnwidth]{figures/200arms/arm_frequency_Approx-Zooming_2.pdf}
        \caption{Approx Zooming}
    \end{subfigure}
    \begin{subfigure}{0.8\columnwidth}
        \includegraphics[width=\columnwidth]{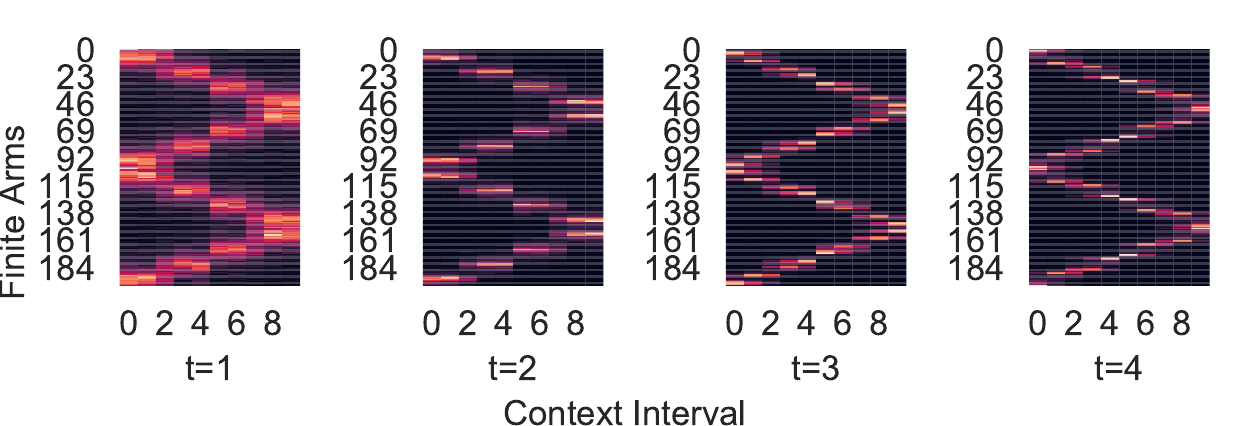}
         \caption{Approx-Zooming-True}
    \end{subfigure}
    
    \begin{subfigure}{0.8\columnwidth}
        \includegraphics[width=\columnwidth]{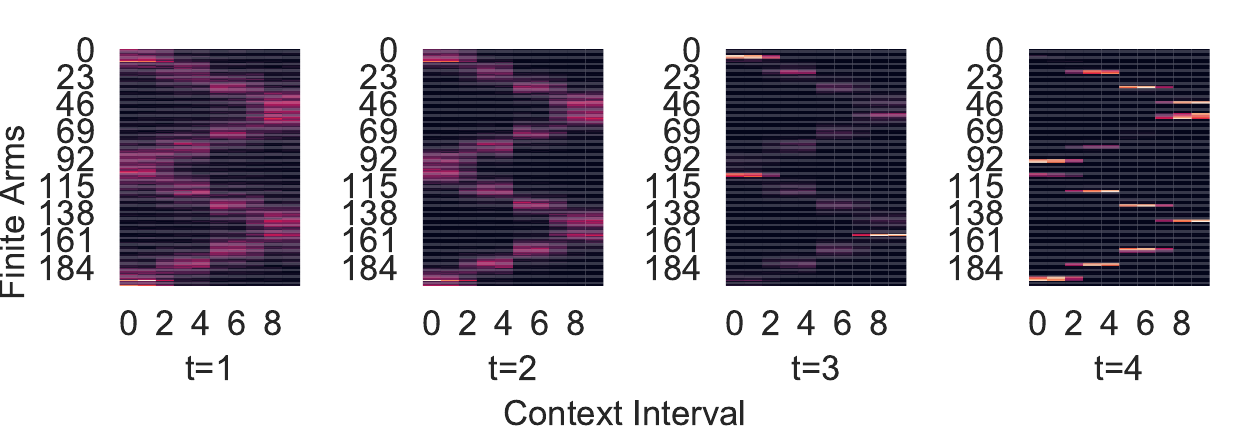}
        \caption{Approx-Zooming-Similarity-Metric}
    \end{subfigure}
    \begin{subfigure}{0.8\columnwidth}
        \includegraphics[width=\columnwidth]{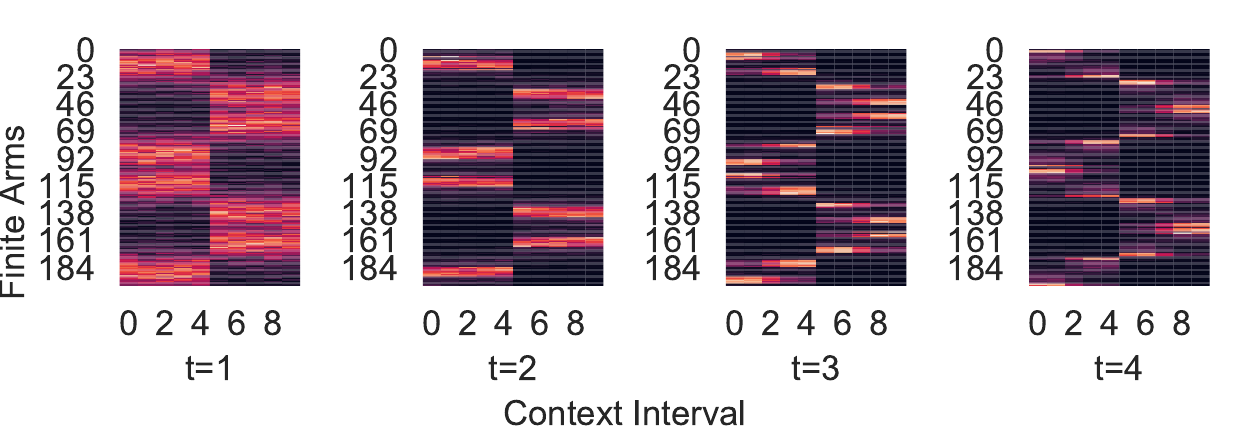}
        \caption{Approx-Zooming-No-Arm-Similarity}
    \end{subfigure}
    \caption{Arm Frequency Plots For 200 Arms}\label{fig:arm frequency 200}
\end{figure}

In figure \ref{fig:arm frequency 50}, figure~\ref{fig:arm frequency 100} and figure~\ref{fig:arm frequency 200} we plot the frequencies an arm is selected in different contexts over the $T$ trials in our three simulation settings. Each of the four plots correspond to averaging the frequency over $T/4$ trials across the time horizon. The x-axis refers to the context space, and the y-axis refers to the set of arms. As we see, the algorithms do generally learn to play the optimal policy, which corresponds to the zigzag shape. We can verify that in our algorithm initially the frequency plot is very blurry, indicating that it is spreading out the samples over time. As time progresses our algorithm indeed learns the similarities, which is depicted by the shape sharpening. The Approx-Zooming-True algorithm is given the true distance function, and we can see that indeed it is the sharpest curve. Approx-Zooming-Similarity-Metric, which is using the metric representation narrows in slower than the algorithm given true distance function. Approx-Zooming-No-Arm-Similarity which learns each arm separately initially finds a small set of arms that plays across the context and as a result takes more time to converge to the optimal policy.

The algorithm which learns each arm separately takes more time to converge to the optimal policy compared to all the other methods. Therefore, we can conclude that given a large arm set it is important to use similarities amongst arms to find the optimal arm. Furthermore, we observed the algorithm using the metric representation narrows in slower than the algorithm given true distance function. Therefore, we argue if a metric space is used to find similarities in the arm spaces it needs to be carefully chosen to represent the reward distribution, which is not a trivial task. In contrast, our approach relies on samples from the reward distribution and learn the latent structure avoiding the difficulty of choosing a suitable metric. However, our approach needs to carefully tune the parameter $k$ to avoid unnecessary sampling for similarity estimation.

\end{document}